\documentclass[11pt]{article}

\usepackage[preprint]{neurips_2026}

\usepackage{amsmath,amssymb,amsthm,mathtools}
\usepackage{bm}
\usepackage{algorithm}
\usepackage{algorithmic}
\usepackage{enumitem}
\usepackage{booktabs}       
\usepackage{xcolor}         
\usepackage{url}
\usepackage[normalem]{ulem} 

\theoremstyle{definition}

\newtheorem{assumption}{Assumption}[section]

\theoremstyle{plain}
\newtheorem{theorem}{Theorem}[section]

\newtheorem{proposition}{Proposition}[section]
\newtheorem{corollary}{Corollary}[section]

\newcommand{\R}{\mathbb{R}}

\newcommand{\dd}{\,\mathrm{d}}
\newcommand{\norm}[1]{\left\lVert #1\right\rVert}
\newcommand{\ip}[2]{\left\langle #1,#2\right\rangle}
\newcommand{\grad}{\nabla}
\newcommand{\divg}{\mathrm{div} \; }

\newcommand{\I}{\mathbf{I}}

\title{PathGuide: Dynamic Classifier-Free Guidance via On-Policy Transport Alignment}

\author{%
  Avishag Nevo\thanks{Correspondence to \texttt{avishag.nevo@campus.technion.ac.il}.} \\
  Technion - Israel Institute of Technology \\
  \texttt{avishag.nevo@campus.technion.ac.il} \\
  \And
  Tamir Hazan \\
  Technion - Israel Institute of Technology \\
  \texttt{tamir.hazan@technion.ac.il} \\
}

\begin{document}
\maketitle

\begin{abstract}
While modern generative models excel at modeling complex data, precise inference-time control in conditional generation remains a critical challenge. Classifier-free guidance (CFG) is a primary mechanism for such control, yet it is typically treated as a static tuning parameter. In flow-based models, however, the guidance scale fundamentally dictates the velocity field and the resulting probability path, making guidance selection a dynamic path-optimization problem. We introduce \emph{PathGuide}, a framework that reformulates scalar CFG selection as an on-policy transport problem. Leveraging the \emph{weak form of the continuity equation}, we derive a selection criterion with a direct path-correctness interpretation: we prove that if the guided field is weakly equivalent to the exact conditional field along the generated rollout, the sampler's path coincides with the target conditional law. For scalar CFG, this criterion yields a strictly quadratic local objective with an efficient, closed-form selector for each solver interval. PathGuide enables optimal guidance scales to be computed and used \emph{online} during generation or fitted offline as a \emph{reusable} piecewise-constant schedule. We validate our method on low-resolution image manifolds and controlled settings across various continuous-time flow constructions, demonstrating that this transport-based selector improves path alignment and sample fidelity over both fixed and state-of-the-art adaptive guidance baselines.
\end{abstract}

\section{Introduction}
Modern generative modeling has achieved remarkable success due to its ability to model complex data distributions with minimal discrepancy between real and generated data laws. This is largely achieved by learning continuous-time dynamics \citep{dhariwal2021diffusionbeatgans,ramesh2022hierarchicalcliplatents,rombach2022latentdiffusion,saharia2022photorealistic,betker2023bettercaptions,esser2024scalingrectifiedflow}. Within this paradigm, score-based models learn reverse stochastic or probability-flow dynamics \citep{sohldickstein2015nonequilibrium,ho2020ddpm,song2021scorebasedsde,song2021ddim,karras2022edm}, while flow-based models learn probability paths via velocity-field regression \citep{lipman2023flowmatching,liu2022flowstraightfastlearning,tong2024minibatchot,albergo2025stochasticinterpolants,huang2026improvingflowmatchingaligning,chen2024flowmatchinggeneralgeometries, kapusniak2024metricflowmatching}.

Conditional generative modeling has enabled a vast array of machine learning applications, including text-to-image synthesis \citep{ramesh2022hierarchicalcliplatents,rombach2022latentdiffusion,saharia2022photorealistic,podell2023sdxl,dai2023emu}, video generation \citep{blattmann2023videoldm,blattmann2023stablevideodiffusion,singer2022makeavideo}, audio synthesis \citep{evans2024fasttiminglatent,wang2023neuralcodec,le2023voicebox}, robotics and decision-making \citep{chi2023diffusionpolicy,chen2021decisiontransformer,janner2021offline,janner2022diffuser,ajay2022decisionmaking}, biomolecular modeling \citep{abramson2024alphafold3,corso2023diffdock,yim2023se3diffusion}, and post-training control tasks such as editing, inpainting, and spatial motion constraints \citep{avrahami2022blendeddiffusion,brooks2023instructpix2pix,meng2022sdedit,lugmayr2022repaint,zhang2023controlnet,bar-tal2023multidiffusion,garipov2023compositional,watanabe2026projflow}.

In these settings, inference-time control is paramount. Once a large-scale generator is trained, it is typically steered toward a desired condition using Classifier-Free Guidance (CFG), which usually relies on a constant scale or a hand-crafted schedule \citep{dhariwal2021diffusionbeatgans,ho2022classifierfree,ho2021classifierfreeworkshop, zheng2023guidedflows,lipman2024flowmatchingguidecode}. While effective at improving conditional alignment, misconceptions regarding the resulting sample law persist \citep{bradley2024classifierfreepredictorcorrector, chidambaram2024whatdoesguidancedo}, revealing deep path-wise inconsistencies: different segments of the trajectory often require varying guidance strengths \citep{kynkaanniemi2024guidanceinterval,wang2024cfgschedulers,castillo2023adaptiveguidance,jin2026stagewisecfg}. Excessive guidance scales can reduce diversity, trigger mode collapse and push samples away from the learned manifold \citep{chung2025cfgplusplus,jin2025angledomainguidance,wang2025goldencfgpath,saini2025rectifiedcfgflowbasedmodels,cai2026cfgmp, fan2025cfgzerostar,sadat2025eliminatingoversaturation, he2024manifoldpreservingguided, chidambaram2024whatdoesguidancedo}. Consequently, guidance selection should be treated as a path-optimization problem rather than a simple hyperparameter tuning exercise.

Existing methods attempt to mitigate these failures by heuristically adapting the guidance scale, training auxiliary networks for path correction, or imposing geometric constraints to counteract faulty interactions between the guidance scale and the model \citep{kynkaanniemi2024guidanceinterval,wang2024cfgschedulers,sadat2023cads,xia2024timesteptuner,yehezkel2025annealingguidance,galashov2026learntoguide,koulischer2025feedbackguidance,chung2025cfgplusplus,jin2025angledomainguidance,fan2025cfgzerostar,wang2025goldencfgpath,saini2025rectifiedcfgflowbasedmodels,cai2026cfgmp,watanabe2026projflow,sadat2025eliminatingoversaturation,he2024manifoldpreservingguided, guo2024gradientguidance}. While useful, these approaches do not address a fundamental transport problem: which scalar CFG value ensures that the sampler's actual rollout remains faithful to the exact conditional probability path.

We introduce PathGuide, a framework that reformulates scalar CFG selection as an on-policy transport problem. Rather than treating guidance as a heuristic scaling factor, we derive a principled criterion using the weak form of the continuity equation \citep{villani2009optimal, ambrosio2005gradient,santambrogio2015optimaltf} to establish a theoretical foundation for path correctness. Specifically, we derive a local objective with a direct path-correctness interpretation: if the guided field is weakly equivalent to the exact conditional field under the generated rollout, then - under the uniqueness of the weak continuity equation \citep{DiPerna1989, Ambrosio2004} - the generated rollout coincides with the exact conditional probability path. By evaluating this objective directly on the sampler’s empirical rollout and leveraging the endpoint-conditioned structure of flow matching \citep{lipman2023flowmatching,tong2024minibatchot,albergo2025stochasticinterpolants}, we translate this theory into a practical framework through the following contributions:

\begin{itemize}[leftmargin=1.5em]
\item We derive a rigorous local objective providing a principled measure of compatibility between the sampler’s realized rollout and the target conditional law at each time step.
\item  We prove that for the scalar CFG family, this objective is \emph{strictly quadratic} and admits an efficient closed-form for the optimal guidance scale.
\item  We introduce a practical algorithm that supports both \emph{real-time online adaptation} during sampling and the offline generation of \emph{reusable}, piecewise-constant guidance schedules.
\item Using controlled Gaussian-mixture flow-matching experimental validation across multiple flow families, we demonstrate consistent improvements in path alignment and endpoint fidelity over existing training-free baselines under matched protocols.
\end{itemize}

\section{Related Work}
The emergence of Flow Matching and Rectified Flow has shifted the generative modeling paradigm from discrete diffusion steps to continuous-time transport along learned velocity fields \citep{lipman2023flowmatching, liu2022flowstraightfastlearning, tong2024minibatchot, albergo2025stochasticinterpolants}. While these works utilize the decomposition of marginal dynamics into endpoint-conditioned paths as a training objective, we repurpose this decomposition as an inference-time diagnostic tool to evaluate whether a guided velocity field remains locally consistent with the ground-truth transport law.

Ensuring this consistency is non-trivial because the optimal guidance scale is not a global constant. In diffusion and flow models alike, studies show that different intervals of the generative trajectory exhibit varying sensitivities to conditioning \citep{kynkaanniemi2024guidanceinterval, wang2024cfgschedulers}. While this observation has motivated adaptive, annealed, and feedback-based schedulers that modulate guidance via heuristic triggers or learned policies \citep{castillo2023adaptiveguidance, jin2026stagewisecfg, yehezkel2025annealingguidance, galashov2026learntoguide, koulischer2025feedbackguidance}, such methods often lack a formal distributional guarantee. In contrast, we derive a variational criterion directly from the weak continuity equation \citep{DiPerna1989, Ambrosio2004,ambrosio2005gradient, santambrogio2015optimaltf}, providing a first-principles derivation for guidance schedules that preserves the integrity of the probability path without the need for manual tuning or auxiliary training.

Beyond merely scheduling the guidance scale, a concurrent line of research focuses on "repairing" the classifier-free guidance update itself to mitigate known failure modes like mode collapse \citep{dhariwal2021diffusionbeatgans, ho2022classifierfree}. These manifold-aware corrections \citep{chung2025cfgplusplus, jin2025angledomainguidance,sadat2025eliminatingoversaturation, he2024manifoldpreservingguided, guo2024gradientguidance} and solver-level refinements \citep{fan2025cfgzerostar, wang2025goldencfgpath, cai2026cfgmp, saini2025rectifiedcfgflowbasedmodels} aim to stabilize the solver by optimizing the \emph{form} of the update. Our work, however, optimizes the scalar \emph{input} to that update.

This shift toward dynamic selection reflects a broader trend where generative models are evaluated not just by terminal samples, but by the alignment of the entire induced probability path \citep{huang2026improvingflowmatchingaligning, albergo2025stochasticinterpolants, watanabe2026projflow}. Our work further advances this frontier by treating guidance selection as an on-policy transport problem; unlike existing schedules defined \emph{a priori} on idealized paths, our framework optimizes the guidance scale directly on the sampler’s realized rollout to ensure the trajectory remains faithful to the target conditional distribution.

The remainder of this paper is organized as follows: Section~\ref{sec:background} establishes the foundations of continuous-time flows and guidance, Section~\ref{sec:method} introduces the PathGuide framework and our on-policy consistency objective, and Section~\ref{sec:experiments} demonstrates that our framework is robust across various continuous-time flow constructions, ranging from Optimal Transport to Variance-Preserving dynamics.

\section{Continuous-Time Flows and Guidance}
\label{sec:background}

Continuous-time generative models have recently emerged as a state-of-the-art paradigm for high-dimensional distribution modeling  \citep{chen2018neural, song2021scorebasedsde}.
We consider a target data distribution \(q(x_1)\) on \(\mathbb{R}^d\).
A flow model describes a probability path \(p_t\) that evolves over the normalized time interval \(t \in [0,1]\), such that \(p_0\) is a tractable reference distribution, e.g., a standard Gaussian, and \(p_1 \approx q\).

A time-dependent vector field (or velocity field) \(u_t: \mathbb{R}^d \to \mathbb{R}^d\), defines the dynamics of the generative process.
This field induces a flow map \(\phi_t: \mathbb{R}^d \to \mathbb{R}^d\) through the ordinary differential equation (ODE)
\begin{equation}
\frac{d}{dt} \phi_t(x) = u_t(\phi_t(x)), \qquad \phi_0(x) = x.
\label{eq:flow-map}
\end{equation}
The probability density \(p_t\) at any time \(t\) is the pushforward of the initial law under this map, denoted \(p_t = (\phi_t)_{\#} p_0\).
This relationship implies that the pair \((p_t, u_t)\) satisfies the continuity equation \citep{villani2009optimal,ambrosio2005gradient}, which describes the local conservation of probability mass:
\begin{equation}
\frac{\partial}{\partial t} p_t(x) + \nabla \cdot \bigl( p_t(x) u_t(x) \bigr) = 0.
\label{eq:continuity-equation}
\end{equation}

For a smooth compactly supported test function \(\psi\), the same equation can be written in weak form as \citep{santambrogio2015optimaltf}:
\begin{equation}
\frac{\dd}{\dd t}
\int_{\R^d}\psi(x)p_t(x)\,\dd x
=
\int_{\R^d}
\nabla\psi(x)\cdot u_t(x)\,p_t(x)\,\dd x .
\label{eq:weak-form-continuity-equation}
\end{equation}

\paragraph{Classifier-free guidance for flow models.}
In conditional generation, we aim to sample from \(q(x_1 \mid y)\) by steering the flow toward a specific condition \(y\).
Classifier-free guidance (CFG) \citep{ho2021classifierfreeworkshop} was originally developed for score-based diffusion, where a Bayes' rule decomposition of the conditional score replaces an explicitly trained classifier \citep{dhariwal2021classifierguidance}; since the same marginal path is generated by a probability-flow ODE whose field depends on the score, guiding the score induces a guided velocity field. Appendix~\ref{app:diffusion-paths} gives this equivalence in full.

In modern flow-based models, guidance is therefore usually implemented directly at the level of learned velocity fields \citep{zheng2023guidedflows}.
Let
\(
v_t^\theta(\cdot\mid \varnothing)
\)
denote the learned unconditional field, and let
\(
v_t^\theta(\cdot\mid y)
\)
denote the learned conditional field.
Classifier-free guidance forms the guided field
\begin{equation}
v_t^\theta(x\mid y;\omega_t)
=
v_t^\theta(x\mid \varnothing)
+
\omega_t
\Bigl(
v_t^\theta(x\mid y)
-
v_t^\theta(x\mid \varnothing)
\Bigr),
\label{eq:guided-velocity-net}
\end{equation}
where $\omega_t$ is a scalar guidance value. The guidance scale $\omega_t$ serves as an extrapolation parameter: $\omega_t=1$ recovers the nominal learned conditional field, while $\omega_t > 1$ is conventionally used to amplify the discrepancy between the conditional and unconditional dynamics. This inference-time interface has become a cornerstone of state-of-the-art flow systems \citep{esser2024scalingrectifiedflow, blackforestlabs2024announcingflux, sd35_stabilityai}.

\paragraph{Flow Matching.}
\label{par:flow-matching}
Flow matching \citep{lipman2023flowmatching} provides a framework to learn the conditional field $u_t(\cdot \mid y)$ by marginalizing over endpoint-conditioned paths. For a fixed sample $x_1 \sim q(x_1 \mid y)$, we define an endpoint-conditioned probability path $p_t(\cdot \mid x_1, y)$ and its corresponding vector field $u_t(\cdot \mid x_1, y)$. The exact marginal conditional field $u_t(x \mid y)$ is then defined as:
\begin{equation}
u_t(x \mid y) = \int_{\mathbb{R}^d} u_t(x \mid x_1, y) \frac{p_t(x \mid x_1, y) q(x_1 \mid y)}{p_t(x \mid y)} dx_1.
\label{eq:marginal-vector-t-cond}
\end{equation}
By construction, this marginal field is the unique vector field that generates the conditional marginal path 
\begin{equation}
p_t(x \mid y) = \int p_t(x \mid x_1, y) q(x_1 \mid y) dx_1 
\label{eq:marginal-probability-density-t-cond}
\end{equation}
and satisfies the continuity equation in \eqref{eq:continuity-equation} \citep{lipman2023flowmatching}. In practice, we approximate this exact field $u_t(x \mid y)$ with a neural network $v_t^\theta(x \mid y)$ via a regression objective, which uses the known form of $u_t(x \mid x_1, y)$ determined by the flow. At inference time, one can sample $x_t \sim p_t(x \mid y)$ using the learned vector field by sampling $x_0 \sim p_0$ and integrating: $x_t = x_0 + \int_{0}^{t} v_s^\theta(x_s \mid y) ds$. 

In the idealized setting where the vector fields are exact, i.e., if \(v_t^\theta(\cdot \mid y) \equiv u_t(\cdot \mid y)\) for every $y$, the value \(\omega_t \equiv 1\) would be sufficient to recover the exact marginal path \(p_t(\cdot \mid y)\).
However, in practice, \(v_t^\theta\) is a learned estimator, and in this regime, \(\omega_t\) can be interpreted as a corrective parameter that calibrates the guided flow to account for approximation errors in the neural network.
In this paper, we develop a principled, theoretically grounded framework for selecting the schedule \(\omega_t\) to optimally compensate for the mismatch between the frozen learned field and the true underlying transport law.

\section{Dynamic Classifier-Free Guidance via On-Policy Transport Alignment}
\label{sec:method}

The learned velocity field $v_t^\theta(x \mid y; \omega_t)$, defined in Equation~\eqref{eq:guided-velocity-net}, is an imperfect estimator of the true marginal vector field $u_t(x \mid y)$. Because the generative process is sequential, any approximation error introduced at an earlier time $s < t$ propagates through the ODE solver, causing the realized probability path $\hat{p}_t(\cdot \mid y)$ to drift away from the target conditional distribution $p_t(\cdot \mid y)$.

We propose to treat the guidance scale $\omega_t$ as a dynamic control variable that targets these accumulated learning errors. At each time, $\omega_t$ is chosen to minimize a weak-form mismatch between the guided field and the exact conditional field, evaluated \emph{under the rollout law realized so far}. This on-policy criterion is a tractable surrogate for path correctness: it is exact under the ideal conditions of Proposition~\ref{prop:path-correctness}, and Section~\ref{sec:experiments} measures what it achieves on a rollout that has already drifted. We denote the guidance history on the continuous interval $s \in [0, t)$ as $\omega_{[0, t)} = (\omega_s)_{s \in [0, t)}$. With this notation, our goal is to use $\omega_t$ to adaptively recalibrate the realized probability path, $\hat{p}_t(\cdot \mid y; \omega_{[0, t)})$, toward the target conditional distribution $p_t(\cdot \mid y)$.

\begin{proposition}[Ideal weak equivalence implies path correctness]
\label{prop:path-correctness}
Fix condition $y$. Assume that the generated rollout distribution $\hat{p}_s(\cdot \mid y; \omega_{[0,s)})$ is weakly continuous in time and satisfies the weak continuity identity with the guided field $v_s^\theta$ (per Assumption~\ref{ass:rollout-law}):
\begin{equation}
\frac{\mathrm{d}}{\mathrm{d}s}\int_{\mathbb{R}^d}\psi(x)\, \hat{p}_s(x\mid y; \omega_{[0,s)}) \,\mathrm{d}x
=
\int_{\mathbb{R}^d} \nabla\psi(x)\cdot v_s^\theta(x \mid y; \omega_s)\, \hat{p}_s(x\mid y; \omega_{[0,s)}) \,\mathrm{d}x
\end{equation}
for every $\psi \in C_c^\infty(\mathbb{R}^d)$ and almost every $s \in [0, t]$. 

Assume further that the guided field and the exact velocity field $u_s(\cdot \mid y)$ are weakly equivalent under the generated rollout:
\begin{equation}
\int_{\mathbb{R}^d} \nabla\psi(x)\cdot v_s^\theta(x \mid y; \omega_s)\, \hat{p}_s(x\mid y; \omega_{[0,s)}) \,\mathrm{d}x
=
\int_{\mathbb{R}^d} \nabla\psi(x)\cdot u_s(x \mid y)\, \hat{p}_s(x\mid y; \omega_{[0,s)}) \,\mathrm{d}x.
\end{equation}
Since the exact conditional path $p_s(\cdot \mid y)$ is the unique weak solution to the continuity equation driven by $u_s(\cdot \mid y)$ (under Assumption~\ref{ass:ce-uniqueness}) with the same initial condition $\hat{p}_0 = p_0$, then
\begin{equation}
\hat{p}_s(\cdot \mid y; \omega_{[0,s)}) = p_s(\cdot \mid y), \qquad \forall s \in [0,t].
\end{equation}
\end{proposition}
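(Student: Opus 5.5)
The plan is to reduce the statement to the uniqueness hypothesis (Assumption~\ref{ass:ce-uniqueness}). First, substitute the weak equivalence identity into the weak continuity identity satisfied by the rollout. The result is that, for every $\psi \in C_c^\infty(\mathbb{R}^d)$ and almost every $s\in[0,t]$,
\begin{equation*}
\frac{\mathrm{d}}{\mathrm{d}s}\int_{\mathbb{R}^d}\psi(x)\,\hat{p}_s(x\mid y;\omega_{[0,s)})\,\mathrm{d}x
=
\int_{\mathbb{R}^d}\nabla\psi(x)\cdot u_s(x\mid y)\,\hat{p}_s(x\mid y;\omega_{[0,s)})\,\mathrm{d}x .
\end{equation*}
So the curve $s\mapsto \hat{p}_s$ is a weakly continuous, distributional solution of the continuity equation driven by the \emph{exact} field $u_s(\cdot\mid y)$. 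The guided field and the guidance history $\omega_{[0,s)}$ have dropped out entirely. The identity is affine in $\psi$, and both sides are defined pointwise in $\psi$, so the substitution holds test function by test function. The exceptional null set of times may depend on $\psi$. I will handle this by restricting to a countable dense subset of $C_c^\infty$ and taking a countable union of null sets.

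Second, I will check that $\hat{p}$ lies in the class of solutions where Assumption~\ref{ass:ce-uniqueness} applies. The main requirement is that $\hat{p}_s$ is a narrowly continuous curve of probability measures starting at $\hat{p}_0=p_0$. The integrability condition $\int_0^t\!\int |u_s|\,\mathrm{d}\hat{p}_s\,\mathrm{d}s<\infty$, or whatever the assumption requires, is needed so that the right-hand side makes sense. I will try to derive the integrability from Assumption~\ref{ass:rollout-law} on the rollout, since that assumption controls $\int|v^\theta_s|\,\mathrm{d}\hat{p}_s$. For $u_s$ I expect to take it from the regularity built into Assumption~\ref{ass:ce-uniqueness}, for example $u$ locally Lipschitz with linear growth, or DiPerna--Lions/Ambrosio Sobolev/BV bounds, combined with moment bounds on $\hat{p}_s$.

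Third, I will show that $p_s(\cdot\mid y)$ is also such a solution with the same initial datum. This was stated in Section~\ref{sec:background}: the marginal field generates the marginal path via equations \eqref{eq:marginal-vector-t-cond}--\eqref{eq:marginal-probability-density-t-cond} and satisfies the weak form \eqref{eq:weak-form-continuity-equation}, and $p_0(\cdot\mid y)=p_0$. Uniqueness then gives $\hat{p}_s=p_s(\cdot\mid y)$ for all $s\in[0,t]$. The conclusion holds at every $s$, not merely almost every $s$, because both curves are weakly continuous and agree on a dense set of times.

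The main obstacle is the second step: making sure the rollout belongs to the uniqueness class. Uniqueness for the continuity equation holds only among solutions satisfying integrability against the driving field. Here we only know a priori how $\hat{p}$ interacts with $v^\theta$, not with $u$. My first attempt will be to make this an explicit part of the hypothesis that Assumption~\ref{ass:ce-uniqueness} is stated over: all narrowly continuous probability curves with $\int_0^t\!\int|u_s|\,\mathrm{d}\hat{p}_s\,\mathrm{d}s<\infty$. If that fails, I will fall back on the superposition principle. It represents $\hat{p}$ as a superposition of integral curves of $u$, and uniqueness of the ODE flow, under Lipschitz $u$, then forces $\hat{p}_s=(\phi_s)_\#p_0$.
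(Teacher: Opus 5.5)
Your proposal is correct and matches the paper's proof. Substitute the weak equivalence into the rollout's weak continuity identity so that $\hat p_s$ solves the continuity equation driven by $u_s(\cdot\mid y)$, then apply Assumption~\ref{ass:ce-uniqueness} with $\hat p_0=p_0$. The paper reads that assumption as uniqueness among all weak solutions started at $p_0$, so your class-membership check and fallbacks are absorbed into the hypothesis; if you instead relied on the cited DiPerna--Lions/Ambrosio theory, the condition to verify for $\hat p$ would be a bounded density, not just $\int_0^t\!\int|u_s|\,\mathrm{d}\hat p_s\,\mathrm{d}s<\infty$.
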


\begin{proof}[Proof sketch]
Because the generated rollout law satisfies the weak continuity identity under the guided field, and the guided field is weakly equivalent to the exact conditional field under that law, the rollout also satisfies the weak continuity equation driven by the exact conditional field. By the uniqueness of weak solutions (per Assumption~\ref{ass:ce-uniqueness}) with the same initial condition, the generated rollout path must coincide with the exact conditional path. Full proof is in Appendix~\ref{app:path-correctness}.
\end{proof}

While a theoretical continuous-time selector would choose a value at every instant $t$, a numerical solver typically utilizes a single value per integration interval. Let $0 = t_0 < t_1 < \dots < t_T = 1$ be the solver grid. We approximate the continuous guidance schedule with a step-wise sequence of scales. At each discrete step $i \in \{0, \dots, T-1\}$, given the history of previous guidance scales $\vec{\omega}_{i}(y) = (\omega_0, \dots, \omega_{i-1})$, abbreviated $\vec{\omega}_{i}$, we denote by $\hat{p}_{t_i}(\cdot \mid y; \vec{\omega}_i)$ the distribution of the samples generated up to that point.
Using the guided vector field \(v_{t_i}^\theta(x \mid y; \omega_i)\), our goal is to adaptively determine the optimal \(\omega_i\) for the interval \([t_i,t_{i+1})\). Following Proposition~\ref{prop:path-correctness}, we choose \(\omega_i\) to approximate the ideal weak equivalence condition. Applied iteratively from the first solver step, each update minimizes the local residual \emph{under the rollout realized by the already committed schedule}, so every scale is selected against the state the sampler is actually in rather than against an idealized path:
\begin{equation}
\mathcal{L}_i(\omega_i \mid y;\vec{\omega}_i,\psi)
= \left(
\int_{\mathbb{R}^d}
\Bigl(
u_{t_i}(x\mid y)-v_{t_i}^\theta(x\mid y;\omega_i)
\Bigr)\cdot
\nabla_x\psi(x)\,
\hat p_{t_i}(x\mid y; \vec{\omega}_i)\,\mathrm{d}x \right)^2.
\label{eq:exact-local-objective}
\end{equation}

\subsection{Closed-form local selector under scalar CFG}

We now specialize to scalar classifier-free guidance. At this point, the key observation is that under scalar CFG, the objective is affine in the current local control value. See Appendix~\ref{app:affine-objective}.

\begin{corollary}[Exact local selector]
\label{thm:exact-local-selector}
Fix $y$, and $\psi \in C_c^\infty(\mathbb{R}^d)$, and a committed past schedule $\vec{\omega}_i$.
Then $\mathcal{L}_i(\omega_i \mid y; \vec{\omega}_i, \psi)$, defined in Equation~\eqref{eq:exact-local-objective} is minimized for
\begin{equation}
\omega_i^\star(y; \vec{\omega}_i, \psi)
=
\frac{
\displaystyle\int_{\mathbb{R}^d}
\Bigl(
u_{t_i}(x \mid y) - v_{t_i}^\theta(x \mid \varnothing)
\Bigr) \cdot
\nabla_x\psi(x) \,
\hat{p}_{t_i}(x \mid y; \vec{\omega}_i) \,\mathrm{d}x
}{
\displaystyle\int_{\mathbb{R}^d}
\Bigl(
v_{t_i}^\theta(x \mid y) - v_{t_i}^\theta(x \mid \varnothing)
\Bigr) \cdot
\nabla_x\psi(x) \,
\hat{p}_{t_i}(x \mid y; \vec{\omega}_i) \,\mathrm{d}x
},
\label{eq:exact-local-selector}
\end{equation}
provided the denominator is nonzero. If the denominator vanishes, the objective is constant in $\omega_i$, so every admissible value is optimal.
\end{corollary}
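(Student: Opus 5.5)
The approach is to show that the integral inside the square in \eqref{eq:exact-local-objective} is an affine function of the current control $\omega_i$. After that, the corollary reduces to minimizing the square of a scalar affine function. Note first that the rollout law $\hat p_{t_i}(\cdot\mid y;\vec{\omega}_i)$ depends only on the committed past schedule $\vec{\omega}_i=(\omega_0,\dots,\omega_{i-1})$ and not on $\omega_i$. This is exactly the on-policy structure: the scale for the interval $[t_i,t_{i+1})$ acts only after time $t_i$. So, with $y$, $\psi$ and $\vec{\omega}_i$ fixed, every integral against $\hat p_{t_i}$ is a fixed number. The only $\omega_i$-dependence sits in the guided field $v_{t_i}^\theta(x\mid y;\omega_i)$, which by \eqref{eq:guided-velocity-net} is affine in $\omega_i$ pointwise in $x$.

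Concretely, I will write $u_{t_i}-v_{t_i}^\theta(\cdot\mid y;\omega_i) = \bigl(u_{t_i}(\cdot\mid y)-v_{t_i}^\theta(\cdot\mid\varnothing)\bigr) - \omega_i\bigl(v_{t_i}^\theta(\cdot\mid y)-v_{t_i}^\theta(\cdot\mid\varnothing)\bigr)$. Substituting into \eqref{eq:exact-local-objective} and using linearity of the integral gives
\begin{equation*}
\mathcal{L}_i(\omega_i\mid y;\vec{\omega}_i,\psi) = (A - \omega_i B)^2,
\end{equation*}
where $A$ and $B$ are the numerator and denominator integrals in \eqref{eq:exact-local-selector}.

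The only step that needs care is justifying that $A$ and $B$ are finite, so that splitting the integral is legitimate. Since $\psi\in C_c^\infty$, $\nabla\psi$ is bounded and supported on a compact set $K$. It therefore suffices that $u_{t_i}(\cdot\mid y)$, $v^\theta_{t_i}(\cdot\mid y)$ and $v^\theta_{t_i}(\cdot\mid\varnothing)$ are $\hat p_{t_i}$-integrable on $K$. I would take this from the standing regularity assumptions (Assumption~\ref{ass:rollout-law}): local boundedness, or local integrability of the fields against the rollout law, is already implicit in the weak continuity identity being well defined. I expect this to be the main (and mild) obstacle. The rest is elementary.

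Finally, there are two cases. If $B\neq 0$, the map $\omega\mapsto (A-\omega B)^2$ is a strictly convex quadratic with leading coefficient $B^2>0$. It is nonnegative and vanishes exactly at $\omega_i^\star=A/B$, which is therefore its unique minimizer; equivalently, setting the derivative $-2B(A-\omega B)$ to zero gives the same point. This is \eqref{eq:exact-local-selector}. If $B=0$, then $\mathcal{L}_i\equiv A^2$ is constant in $\omega_i$, so every admissible value is optimal. This completes the plan, and it also yields the strictly quadratic structure claimed in the contributions.
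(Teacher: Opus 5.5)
Your proposal is correct and follows the paper's proof. The paper likewise substitutes the scalar CFG field to obtain $\mathcal{L}_i=|A_i-\omega_iB_i|^2$ (Proposition~\ref{prop:app-quadratic-objective}), sets the derivative to zero to get $\omega_i^\star=A_i/B_i$ when $B_i\neq0$, and observes that the objective is constant when $B_i=0$. One small correction to your finiteness remark, which the paper does not address: Assumption~\ref{ass:rollout-law} only controls the guided field, not $u_{t_i}(\cdot\mid y)$. A cleaner justification is that if the objective is well defined at two distinct values of $\omega_i$, then subtracting the two integrands already forces both $A_i$ and $B_i$ to be finite.
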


\begin{proof}[Proof sketch]
Since the guided field is affine in the scalar guidance value \(\omega _{i}\), the local objective is a one-dimensional quadratic. Setting the derivative with respect to \(\omega _{i}\) to zero yields \(\omega _{i}^{\star }\). The full proof is provided in Appendix~\ref{app:exact-local-selector-proof}.
\end{proof}

Notably, if the learned conditional field is exact (\(u_{t_i} = v_{t_i}^\theta\)), the numerator and denominator in \eqref{eq:exact-local-selector} coincide, yielding \(\omega_i^\star = 1\) (or making any value optimal if the denominator is zero).

\subsection{Endpoint-conditioned representation of the exact objective}

The adaptive loss is written in terms of the exact marginal field \(u_t(\cdot\mid y)\). However, in flow matching the marginal field is not accessed directly; it is obtained by averaging endpoint-conditioned vector fields with posterior weights, as described in Equation~\eqref{eq:marginal-vector-t-cond}. The following equivalence is what makes the objective estimable: it removes \(u_{t_i}(\cdot\mid y)\) in favor of endpoint-conditioned quantities that the flow construction supplies in closed form.

\begin{theorem}[Endpoint-conditioned representation of the local objective]
\label{thm:endpoint-conditioned-objective}
Fix \(y\), \(\psi \in C_c^\infty(\mathbb{R}^d)\), and a committed past schedule \(\vec{\omega}_i\).
Under Assumptions~\ref{ass:conditional-path-regularity} and~\ref{ass:endpoint-posterior}, the exact
local objective \eqref{eq:exact-local-objective} admits the endpoint-conditioned form
$\mathcal{L}_i(\omega_i \mid y;\vec{\omega}_i,\psi)
=$
\begin{align}
\left( 
\int_{\R^d}
\int_{\R^d}
(
u_{t_i}(x\mid y,x_1)
-
v_{t_i}^\theta(x\mid y;\omega_{t_i})
)\, 
\frac{
p_{t_i}(x\mid y,x_1)\,q(x_1\mid y)
}{
p_{t_i}(x\mid y)
}\,
\dd x_1
\cdot
\grad \psi(x)\,
\hat p_{t_i}(x\mid y;\vec{\omega}_i)\,\dd x \right)^2,
\label{eq:endpoint-conditioned-objective}
\end{align}
an identity that is exact at the population level.
\end{theorem}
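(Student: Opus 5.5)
The plan is to substitute the marginal representation \eqref{eq:marginal-vector-t-cond} for $u_{t_i}(x\mid y)$ directly inside the inner integral of \eqref{eq:exact-local-objective}. The only work beyond that substitution is justifying it pointwise and pulling the $x$-only term $v_{t_i}^\theta(x\mid y;\omega_i)$ inside the $x_1$-integral.

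\textbf{Step 1: pointwise identity.} For every $x$ in the support where $p_{t_i}(x\mid y)>0$, I would rewrite the integrand difference as
\begin{equation*}
u_{t_i}(x\mid y)-v_{t_i}^\theta(x\mid y;\omega_i)
=\int_{\R^d}\bigl(u_{t_i}(x\mid y,x_1)-v_{t_i}^\theta(x\mid y;\omega_i)\bigr)\,
\frac{p_{t_i}(x\mid y,x_1)\,q(x_1\mid y)}{p_{t_i}(x\mid y)}\,\dd x_1 .
\end{equation*}
This follows from two facts.
\begin{itemize}[leftmargin=1.5em]
\item \eqref{eq:marginal-vector-t-cond} gives the $u$-part of the integral.
\item By \eqref{eq:marginal-probability-density-t-cond}, the posterior weights $x_1\mapsto p_{t_i}(x\mid y,x_1)q(x_1\mid y)/p_{t_i}(x\mid y)$ integrate to one. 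Hence $v_{t_i}^\theta(x\mid y;\omega_i)$, which does not depend on $x_1$, equals its own posterior average.
\end{itemize}
I expect Assumption~\ref{ass:endpoint-posterior} to provide the needed conditions: positivity of $p_{t_i}(\cdot\mid y)$ (at least $\hat p_{t_i}$-a.e.), measurability of the posterior, and integrability of $u_{t_i}(x\mid y,x_1)$ against it, so that the $x_1$-integral is finite and splitting it is legitimate.

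\textbf{Step 2: integrate against the test function.} Next I would take the dot product with $\grad\psi(x)$, multiply by $\hat p_{t_i}(x\mid y;\vec\omega_i)$, and integrate in $x$. Since both sides of the Step 1 identity agree $\hat p_{t_i}$-a.e., their integrals coincide. Squaring both sides then yields \eqref{eq:endpoint-conditioned-objective}. The expression is written as an iterated integral (inner $x_1$, outer $x$), exactly as in \eqref{eq:endpoint-conditioned-objective}, so no Fubini interchange is strictly needed.

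To ensure the outer integral is finite, I would still check absolute integrability. Since $\grad\psi$ is bounded and compactly supported, it suffices that $\int_K\int|u_{t_i}(x\mid y,x_1)|\,\pi(x_1\mid x)\,\dd x_1\,\hat p_{t_i}(x)\dd x<\infty$ on $K=\mathrm{supp}\,\psi$, where $\pi(x_1\mid x)$ denotes the posterior weights above. I would also need $v^\theta$ to be locally bounded. Both conditions are what I would read off from Assumption~\ref{ass:conditional-path-regularity}, namely local integrability of the conditional field and path.

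\textbf{Main obstacle.} The delicate point is the null-set issue. The rollout law $\hat p_{t_i}$ need not be absolutely continuous with respect to $p_{t_i}(\cdot\mid y)$, so the set where $p_{t_i}(x\mid y)=0$ could carry $\hat p_{t_i}$-mass, and there the posterior, and even $u_{t_i}(x\mid y)$ itself, is undefined. I would handle this by assuming, via Assumption~\ref{ass:endpoint-posterior}, that $p_{t_i}(\cdot\mid y)>0$ everywhere; this holds for Gaussian-smoothed paths with $t_i<1$. Alternatively, one could adopt the convention that both sides use the same extension of $u_{t_i}(\cdot\mid y)$ off the support. In either case, the identity holds at the population level as claimed.
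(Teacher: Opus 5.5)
Your proposal is correct and follows the paper's proof essentially step for step. The shared steps are: the posterior weights integrate to one; the marginal field is the posterior average of the endpoint-conditioned field; the $x_1$-independent guided field equals its own posterior average; and the two $x_1$-integrals are merged before integrating against $\nabla\psi\,\hat p_{t_i}$ and squaring. The null-set concern you raise is already handled by Assumption~\ref{ass:endpoint-posterior}, which postulates positivity of $p_{t_i}(\cdot\mid y)$ exactly on the region where the rollout law is evaluated, so you need not assume positivity everywhere.
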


\begin{proof}[Proof sketch]
The posterior weights integrate to one in \(x_1\), so the guided field -
which carries no endpoint dependence - may be moved inside the endpoint integral. Writing the
marginal field \(u_{t_i}(\cdot\mid y)\) as its posterior average over \(x_1\) via
Equation~\eqref{eq:marginal-vector-t-cond} and merging the two integrals gives the displayed form.
The full proof is in Appendix~\ref{app:endpoint-conditioned-objective}.
\end{proof}

To estimate Equation~\eqref{eq:endpoint-conditioned-objective} we use a Monte Carlo approximation of these integrals: At time $t_i$, we have $N$ rollout particles $x_i^{(n)} \sim \hat p_{t_i}(\cdot \mid y; \vec{\omega}_i)$, and we sample $M$ endpoint samples $x_1^{(m)} \sim q(\cdot \mid y)$.

The practical procedure follows a recursive logic, progressing interval by interval through the solver grid. At each step $i$, we estimate the local coefficients of the quadratic objective $\tilde{\mathcal{L}}_i$ using the current generated rollout particles $\{x_i^{(n)}\}_{n=1}^N$. We then compute the optimal local guidance scale $\omega_i^\star$ via Equation~\eqref{eq:exact-local-selector}, commit this value for the duration of the interval $[t_i, t_{i+1})$, and advance the ODE solver one step to obtain the particles at $t_{i+1}$. This process continues until the terminal time $t_T = 1$ is reached. An implementation of this procedure is detailed in Appendix~\ref{app:algorithm}.

Note that the backbone \(\theta\) is never modified, but calibration is a \emph{model-owner} operation: it needs
endpoint samples \(x_1\sim q(\cdot\mid y)\) and a known path construction that makes
\(p_t(x\mid x_1,y)\) and \(u_t(x\mid x_1,y)\) evaluable (Appendix~\ref{app:conditional-paths}). Once fitted, none of these quantities is needed at deployment: the
stored schedule is a list of \(T\) scalars consumed exactly like a constant CFG scale.

Moreover, Proposition~\ref{prop:path-correctness} characterizes the \emph{ideal} target: weak equivalence for every \(\psi\in C_c^\infty(\R^d)\) and almost every \(s\). An implemented schedule realizes a finite counterpart of this target, and its relation to the real probability-path discrepancy is quantified empirically in Appendix~\ref{app:residual-diagnostic}.

\subsection{Usage modes}

This framework introduces an \textbf{on-line selector}: the guidance value applied to each interval is adaptively chosen based on the actual distribution $\hat{p}_{t_i}$ realized by the frozen model and solver up to that point. This approach leads to two primary deployment modes:

\begin{itemize}
    \item \textbf{Online Adaptive Selection:} In this mode, the guidance scale is recalibrated dynamically during every sampling run. This is particularly effective for high-fidelity generation where the specific drift of a sample batch must be corrected in real-time.
    \item \textbf{Offline Calibrated Scheduling:} For a fixed backbone model, solver, and grid, the selector can be run once on a representative set of initial samples $x_0 \sim p_0$. The resulting sequence of optimal scales $\vec{\omega}^\star$ is stored as a piecewise-constant schedule. This schedule can then be reused across all subsequent deployments, providing a \emph{pay-once, use-many-times} solution that enjoys the corrective benefits of our method without the overhead of computing the selector at every inference step.
\end{itemize}
By decoupling the estimation of the drift from the generative inference, we provide a flexible mechanism to stabilize probability paths in both compute-constrained and quality-critical settings.

\section{Experimental Validation}
\label{sec:experiments}

We evaluate the selector on a class-conditional Gaussian-mixture testbed. This setting is deliberately controlled: the endpoint law, intermediate conditional marginals, and posterior weights are available in closed form. It therefore lets us test the main claim of Section~\ref{sec:method}: the selected guidance schedule should improve the generated rollout
\(\hat p_{t_i}(\cdot\mid y;\vec\omega_i)\), not only the terminal samples at \(t=1\).

All schedules are fitted separately for each class, and evaluation metrics are aggregated over classes using the class frequencies. Unless stated otherwise, each method is evaluated over three inference seeds, with matched prior latents across methods within each seed. Schedules are fitted using three independent fitting seeds. Downstream results report the mean and sample standard deviation over inference seeds. Full experimental details, including flow variants, test-function families, stabilization, metrics, and implementation defaults, are provided in Appendix~\ref{app:exp-details}. Ablations are provided in Appendix~\ref{app:ablations}.

\paragraph{Online path alignment.}
We first test whether the selected guidance values improve the path followed by the sampler. Since the exact conditional marginal \(p_{t_i}(\cdot\mid y)\) is known at every solver time, we measure the discrepancy between \(p_{t_i}(\cdot\mid y)\) and the generated rollout \(\hat p_{t_i}(\cdot\mid y;\vec\omega_i)\) along the full trajectory. Figure~\ref{fig:online-path-alignment} compares three rollouts: one generated with the true conditional velocity \(u_t(\cdot\mid y)\), one generated with the learned conditional field \(v_t^\theta(\cdot\mid y)\), which corresponds to \(\omega\equiv1\), and one generated by the practical selector. Endpoint discrepancies at \(t \approx 1\) are reported in Table~\ref{tab:online-path-discrepancy-full}.

\begin{figure*}[t]
    \centering

    \begin{minipage}[t]{0.99\linewidth}
        \centering
        \includegraphics[width=\linewidth]{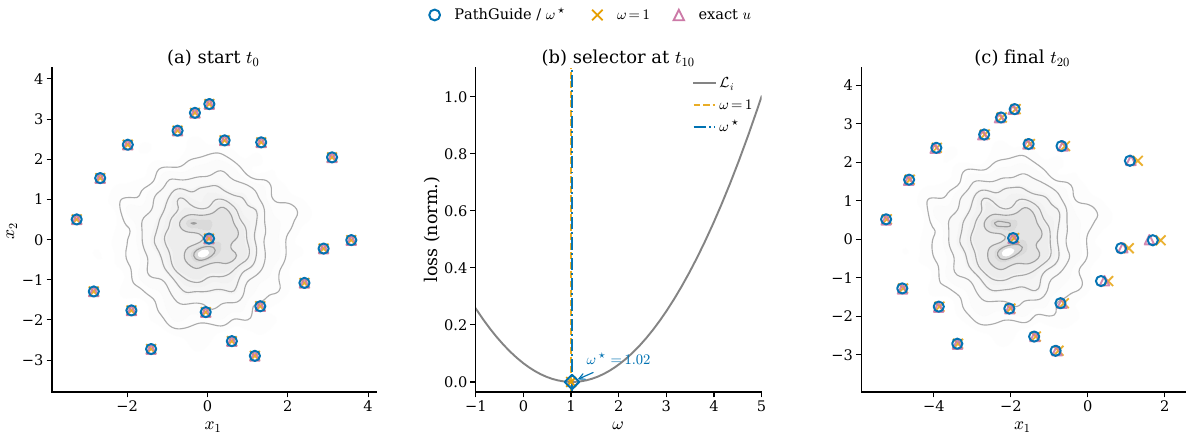}\\[-0.2em]
        (a)
    \end{minipage}

    \vspace{0.2em}

    \begin{minipage}[t]{0.99\linewidth}
        \centering
        \includegraphics[width=\linewidth]{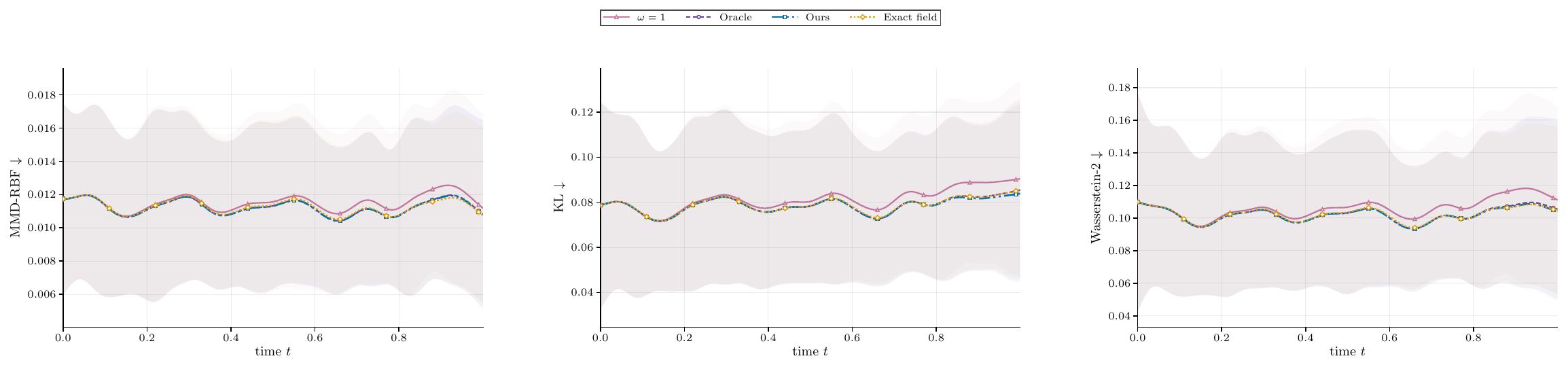}\\[-0.2em]
        (b)
    \end{minipage}
\caption{
    Comparison of rollout trajectories against the exact conditional marginal $p_{t_i}(\cdot\mid y)$. 
    (a) Online samples at the start ($t_0$) and near the end ($t_{T-1}$) of a $T=20$ step rollout, together with the quadratic local objective and the selected value at the midpoint time $t_{T/2}$.
    (b) Quantitative discrepancy measured by $\mathrm{MMD}$, $\mathrm{KL}$, and $\mathrm{W}_2$.
    The true-velocity rollout serves as a discretization reference, since finite-step integration can deviate from the analytic marginal path even when the exact conditional velocity is used. 
    While the plain conditional learned field ($\omega \equiv 1$) accumulates drift, our practical selector effectively calibrates the rollout to track the true path more closely.
}
    \label{fig:online-path-alignment}
\end{figure*}

\paragraph{Endpoint quality against guidance baselines.}
We next test whether improved path alignment translates into better terminal samples. Table~\ref{tab:baseline-comparison} compares the final generated distribution at \(t=1\) against plain CFG~\citep{ho2022classifierfree,zheng2023guidedflows}, CFG-Zero$^\ast$~\citep{fan2025cfgzerostar}, CFG-MP~\citep{cai2026cfgmp}, and Rectified-CFG++~\citep{saini2025rectifiedcfgflowbasedmodels}. Appendix~\ref{app:pointwise-control} adds a control that isolates the weak-form criterion itself, selecting the scale from the same endpoint-conditioned estimates by direct pointwise projection instead. The schedule for our method is fitted once on a reference solver grid and then reused on fresh inference seeds, matching the offline calibrated scheduling mode described in Section~\ref{sec:method}, isolating the benefit of selecting \(\vec\omega\). All methods in a comparison are run under identical conditions: the same frozen velocity field, endpoint law, solver grid, initial latents, and inference seeds, with each baseline tuned over its own hyperparameter sweep (Appendix~\ref{app:baseline-sweeps}).  We also evaluate the same protocol on MNIST handwritten digits \citep{mnist}. Table~\ref{tab:mnist-baseline-comparison} shows that the fitted schedules remain effective in pixel space for both Optimal Transport (OT) and Rectified Flow (RF) variants.

\begin{table*}[t]
    \caption{
    Generation quality against guidance baselines for all flow variants. Each metric is reported for \(T=200\) and \(T=500\); lower is better. All methods are evaluated using \(2^{14}\) generated samples per seed. Our method is fitted once using \(M=N=2^{14}\), independently of the evaluation samples. Each row reports the best configuration selected from that method's hyperparameter sweep, using the selection rule described in Appendix~\ref{app:baseline-sweeps}. Full sweeps including standard deviations are in Appendix~\ref{app:baseline-sweeps-tables}.
    }
    \label{tab:baseline-comparison}
    \vspace{0.35em}
    \centering
    \small
    \setlength{\tabcolsep}{2pt}
    \renewcommand{\arraystretch}{1.15}
    \begin{tabular}{@{}llcccccc@{}}
        \toprule
    & & \multicolumn{2}{c}{KL\(\downarrow\)}
      & \multicolumn{2}{c}{\(\mathrm{W}_2\downarrow\)}
      & \multicolumn{2}{c}{MMD\(\downarrow\)} \\
    \cmidrule(lr){3-4}
    \cmidrule(lr){5-6}
    \cmidrule(lr){7-8}
    Variant & Method
      & \(T=200\) & \(T=500\)
      & \(T=200\) & \(T=500\)
      & \(T=200\) & \(T=500\) \\
    \midrule
    RF \citep{liu2022flowstraightfastlearning}
    & Plain CFG       & \(0.0044\) & \(0.0044\) & \(0.0060\) & \(0.0060\) & \(0.0006\) & \(0.0006\) \\
    & CFG-Zero\(^*\) & \(0.0046\) & \(0.0045\) & \(0.0064\) & \(0.0062\) & \(0.0006\) & \(0.0006\) \\
    & CFG-MP         & \(6.2232\) & \(1.8740\) & \(9.5319\) & \(2.9592\) & \(0.6234\) & \(0.2987\) \\
    & R-CFG++        & \(0.0191\) & \(0.0185\) & \(0.0248\) & \(0.0243\) & \(0.0020\) & \(0.0019\) \\
    & PathGuide (Ours)           & \boldmath{\(0.0039\)} & \boldmath{\(0.0038\)} & \boldmath{\(0.0056\)} & \boldmath{\(0.0055\)} & \boldmath{\(0.0005\)} & \boldmath{\(0.0005\)} \\

    \midrule

    I-CFM \citep{tong2024minibatchot}
    & Plain CFG       & \(0.0084\) & \(0.0083\) & \(0.0112\) & \(0.0111\) & \(0.0012\) & \(0.0012\) \\
    & CFG-Zero\(^*\) & \(0.0084\) & \(0.0083\) & \(0.0114\) & \(0.0111\) & \(0.0012\) & \(0.0012\) \\
    & CFG-MP         & \(0.1197\) & \(0.1191\) & \(0.1853\) & \(0.1859\) & \(0.0168\) & \(0.0167\) \\
    & R-CFG++        & \(0.0192\) & \(0.0185\) & \(0.0252\) & \(0.0246\) & \(0.0024\) & \(0.0023\) \\
    & PathGuide (Ours)           & \boldmath{\(0.0076\)} & \boldmath{\(0.0073\)} & \boldmath{\(0.0099\)} & \boldmath{\(0.0096\)} & \boldmath{\(0.0012\)} & \boldmath{\(0.0012\)} \\

    \midrule

    OT \citep{lipman2023flowmatching}
    & Plain CFG       & \(0.0082\) & \(0.0080\) & \(0.0111\) & \(0.0110\) & \(0.0012\) & \(0.0012\) \\
    & CFG-Zero\(^*\) & \(0.0081\) & \(0.0080\) & \(0.0112\) & \(0.0110\) & \boldmath{\(0.0012\)} & \(0.0011\) \\
    & CFG-MP         & \(0.1215\) & \(0.1208\) & \(0.1889\) & \(0.1895\) & \(0.0171\) & \(0.0170\) \\
    & R-CFG++        & \(0.0199\) & \(0.0192\) & \(0.0266\) & \(0.0260\) & \(0.0024\) & \(0.0024\) \\
    & PathGuide (Ours)           & \boldmath{\(0.0076\)} & \boldmath{\(0.0073\)} & \boldmath{\(0.0102\)} & \boldmath{\(0.0099\)} & \(0.0012\) & \boldmath{\(0.0011\)} \\

    \midrule

    VP \citep{song2021scorebasedsde}
    & Plain CFG       & \(0.0030\) & \(0.0030\) & \(0.0052\) & \(0.0052\) & \(0.0005\) & \(0.0005\) \\
    & CFG-Zero\(^*\) & \(0.0013\) & \(0.0013\) & \(0.0019\) & \(0.0019\) & \(0.0003\) & \(0.0003\) \\
    & CFG-MP         & \(2.9991\) & \(1.1612\) & \(4.7929\) & \(1.6279\) & \(0.4827\) & \(0.2012\) \\
    & R-CFG++        & \(0.0290\) & \(0.0289\) & \(0.0388\) & \(0.0387\) & \(0.0029\) & \(0.0028\) \\
    & PathGuide (Ours)           & \boldmath{\(0.0011\)} & \boldmath{\(0.0011\)} & \boldmath{\(0.0015\)} & \boldmath{\(0.0015\)} & \boldmath{\(0.0003\)} & \boldmath{\(0.0003\)} \\
    \bottomrule
  \end{tabular}
    \vspace{1.0em}
\end{table*}

\begin{table*}[t]
    \caption{
    MNIST image-generation quality for RF and OT flow variants.
    We report FID computed with \texttt{torch-fidelity==0.4.0}.
    Lower is better. All methods are evaluated over three evaluation seeds using \(2^{13}\)
    generated samples per seed, with \(M=2^{11}\), \(N=2^{12}\), and \(T=50\).
    Each row reports the configuration with the lowest mean FID within that method's sweep.
    Full sweeps are reported in Table~\ref{tab:mnist-baseline-sweeps-t50}.
    On matched inference latents, the paired PathGuide-minus-tuned-CFG FID difference is
    \(-0.103\) (95\% CI \([-0.186,-0.021]\)) for RF and \(-0.175\) (\([-0.250,-0.101]\)) for OT.
    }
    \label{tab:mnist-baseline-comparison}
    \vspace{0.35em}
    \centering
    \small
    \setlength{\tabcolsep}{8pt}
    \renewcommand{\arraystretch}{1.08}
    \begin{tabular}{@{}lclc@{}}
        \toprule
        \multicolumn{2}{c}{RF \citep{liu2022flowstraightfastlearning}} 
        & \multicolumn{2}{c}{OT \citep{lipman2023flowmatching}} \\
        \cmidrule(r){1-2}
        \cmidrule(l){3-4}
        Method & FID\(\downarrow\)
        & Method & FID\(\downarrow\) \\
        \midrule
        Plain CFG & \(15.802\)
        & Plain CFG & \(7.578\) \\
        CFG-Zero\(^\ast\) & \(26.534\)
        & CFG-Zero\(^\ast\) & \(11.327\) \\
        CFG-MP & \(30.951\)
        & CFG-MP & \(24.475\) \\
        R-CFG++ & \(16.979\)
        & R-CFG++ & \(15.038\) \\
        PathGuide (Ours) & \(\mathbf{15.699}\)
        & PathGuide (Ours) & \(\mathbf{7.375}\) \\
        \bottomrule
    \end{tabular}
\end{table*}
\paragraph{Reuse under coarser inference.}
The offline schedule is more useful if it can be reused under cheaper inference. We therefore fit schedules on a reference grid \(T\), compress them to coarser grids \(T^{-}\leq T\) aggregating with an average over \(\omega_i\) included in each corresponding coarse interval, and evaluate the resulting terminal distributions using matched latents. Figure~\ref{fig:gm_infer_at_tminus} reports the degradation as the inference grid is coarsened. The fitted schedules degrade gracefully \((\leq4\%\)), supporting the "fit once, reuse later" interpretation of the method.

\begin{figure*}[t!]
    \centering
    \includegraphics[width=0.99\linewidth]{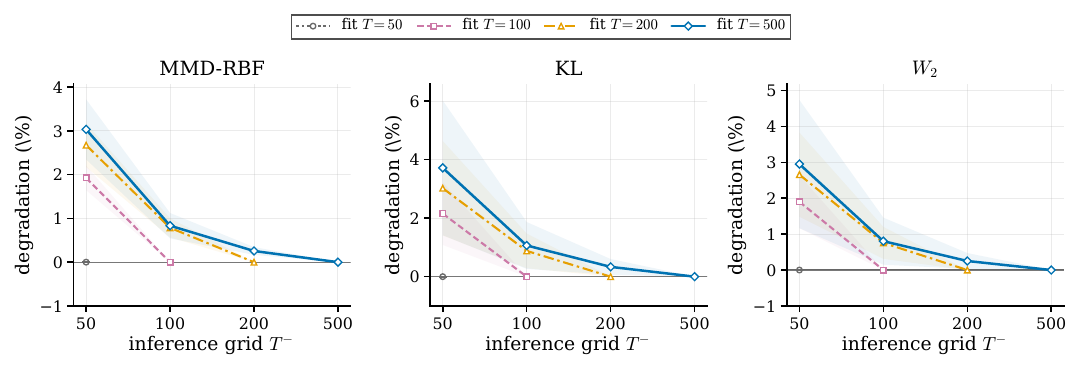}
    \caption{
    Reusing a schedule fitted at grid \(T\) on coarser inference grids \(T^{-}\leq T\). Each curve fixes the fitting grid and varies the deployed inference grid. \(
\mathrm{degradation}
=
100\cdot(
\mathrm{coarse\ result}
-
\mathrm{full\ result})/
\mathrm{full\ result}
,
\) each result is averaged over the fitting seeds. Shaded regions show one standard deviation over the inference seeds. Lower is better for all metrics.
    }
    \label{fig:gm_infer_at_tminus}
\end{figure*}

\paragraph{Schedule diagnostics.}
We then inspect whether the fitted schedules follow the local weak-form objective derived in Section~\ref{sec:method}. Figure~\ref{fig:experiment4-vp-diffusion-class-0} shows the \(T=500\) analytic local-objective landscape over \((t_i,\omega)\) for a representative VP run, with fitted schedules overlaid for different solver grids. The fitted schedules follow the low-objective region and preserve the same global shape across fitting resolutions. The \emph{oracle schedule} uses the same interval-wise selector, but replaces the estimated posterior weights with the analytic posterior weights available in this controlled setting. Finer grids reduce estimated schedule variance and better track the oracle schedule.

\begin{figure*}[t]
    \centering
    \includegraphics[width=0.80\linewidth]{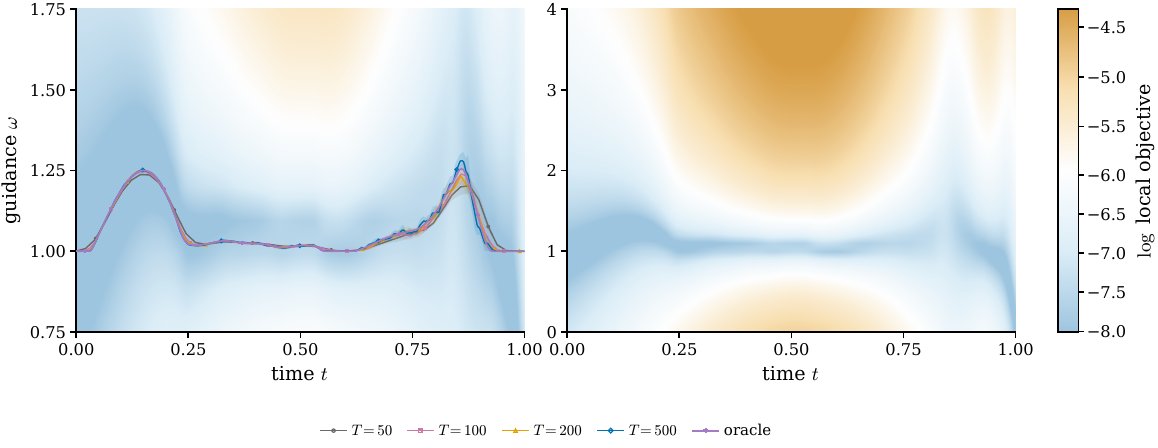}
    \caption{
    Schedule diagnostic for the VP flow, class \(0\). The background shows the analytic local-objective landscape over \((t_i,\omega)\). Curves show fitted schedules for different solver grids together with the oracle schedule. The practical schedules preserve the same global shape across resolutions and better follow low-objective regions as the grid is refined.}
    \label{fig:experiment4-vp-diffusion-class-0}
\end{figure*}

\paragraph{Monte Carlo scaling.}

Finally, we vary the estimator budget used by the practical selector. Figure~\ref{fig:gm_mn_diagonal_scaling} shows the diagonal sweep \(M=N\), keeping the trained model and solver grid fixed. Downstream metrics and schedule distance stabilize at moderate budgets, whereas runtime continues to increase. This indicates that most gains are obtained before the estimator becomes expensive. The full \((M,N)\) sweep in Figure~\ref{fig:app-gm-mn-heatmaps} further shows that endpoint samples are especially important, as they affect both the posterior weights and the endpoint-conditioned velocity estimates.

\begin{figure*}[h!]
    \centering
    \includegraphics[width=\linewidth]{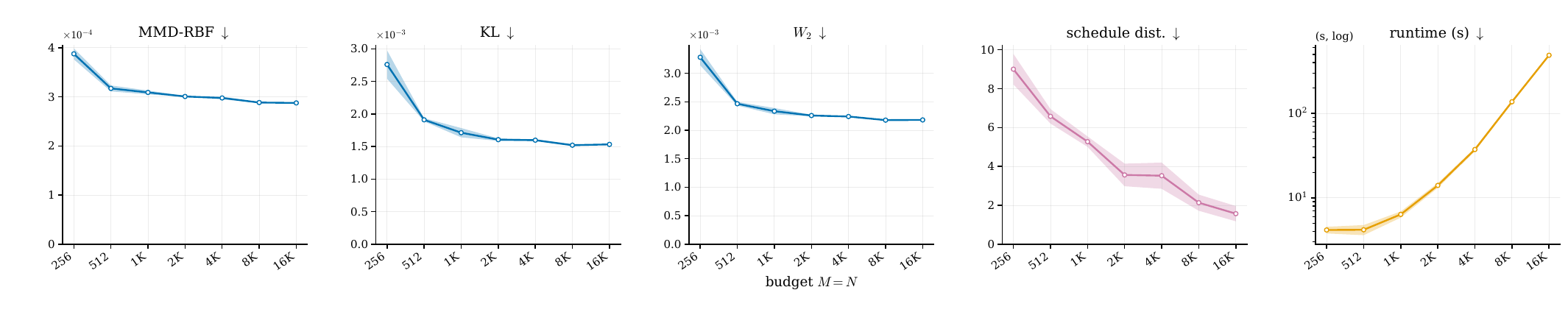}
    \caption{
    Monte Carlo scaling on the diagonal \(M=N\). Shaded regions show one standard deviation over fitting seeds. Downstream metrics and schedule distance stabilize at moderate budgets, while runtime continues to grow.
    }
    \label{fig:gm_mn_diagonal_scaling}
\end{figure*}

\section{Conclusion and Future Work} 
\label{sec:conclusions}
We introduced \emph{PathGuide}, a framework that reformulates scalar CFG selection as an on-policy transport problem grounded in the weak form of the continuity equation. By deriving a strictly quadratic local objective, we provide a closed-form selector that ensures generative rollouts remain faithful to the target conditional law. Despite its theoretical and empirical advantages, our method has several limitations. Once a schedule is fitted and amortized, inference has the same backbone-evaluation cost as standard CFG, however, online selection adds estimator overhead during sampling. Furthermore, the selector is local in time, and future work could extend it to a global optimal-control formulation. Moreover, the framework's sensitivity to the test-function family \(\mathcal{H}\) introduces a trade-off between discrepancy detection and computational variance (Appendix~\ref{app:test-functions}), suggesting the use of adaptive or learned kernels as an extension. While we focused on scalar CFG, the transport-based objective is naturally extensible to more expressive, non-linear control signals. Finally, our empirical scope is a two-class Gaussian mixture and MNIST. We therefore make no claim of generalization to large backbones, higher dimensions, or open-ended conditional generation; evaluating the weak-form objective on large-scale text-to-image and video benchmarks - particularly for reward-tilted or tempered distributions - remains a critical next step.

\bibliographystyle{plainnat}
\bibliography{refs}

\appendix

\newpage

\section{Additional background: Unifying Gaussian Flow-Matching and Diffusion Probability-Flows}
\label{app:diffusion-paths}

This section establishes the formal equivalence between the velocity fields used in flow-matching and the score-based representations in diffusion ODEs referred to in
Section~\ref{sec:background}.

Steering a generative process toward a condition \(y\) can be understood through the lens of a time-dependent classifier \(p_t(y\mid x)\) \citep{dhariwal2021classifierguidance}. To avoid training such a classifier explicitly, classifier-free guidance \citep{ho2021classifierfreeworkshop} uses the Bayes' rule decomposition
\begin{equation}
\nabla_x \log p_t(x \mid y)
=
\nabla_x \log p_t(x)
+
\nabla_x \log p_t(y \mid x),
\end{equation}
so the classifier gradient is approximated by the difference between the conditional and unconditional score fields. The guided score is then used inside the diffusion sampling dynamics. Equivalently, the same marginal probability path is generated by a deterministic probability-flow ODE whose vector field depends on the score, so applying CFG to the score induces a corresponding guided probability-flow vector field. This is the bridge from score-based diffusion to flow matching \citep{maoutsa2020interacting,lipman2023flowmatching, song2021scorebasedsde}.

Consider an endpoint-conditioned Gaussian path
\begin{equation}
p_t(x\mid x_1,y)
=
\mathcal N\!\left(x\mid \alpha_t x_1,\sigma_t^2 \I\right),
\qquad
t\in[0,1],
\end{equation}
with differentiable scalar schedules \(\alpha_t\) and \(\sigma_t>0\). Its
endpoint-conditioned score is
\begin{equation}
\grad_x \log p_t(x\mid x_1,y)
=
-\frac{x-\alpha_t x_1}{\sigma_t^2}.
\end{equation}
The corresponding endpoint-conditioned velocity field is
\begin{equation}
u_t(x\mid x_1,y)
=
\dot\alpha_t x_1
+
\frac{\dot\sigma_t}{\sigma_t}
\bigl(x-\alpha_t x_1\bigr).
\end{equation}
Equivalently, when \(\alpha_t\neq 0\),
\begin{equation}
u_t(x\mid x_1,y)
=
\frac{\dot\alpha_t}{\alpha_t}x
-
\sigma_t^2
\left(
\frac{\dot\sigma_t}{\sigma_t}
-
\frac{\dot\alpha_t}{\alpha_t}
\right)
\grad_x\log p_t(x\mid x_1,y).
\end{equation}
Thus Gaussian endpoint-conditioned flow-matching paths admit a
drift-plus-score representation.

Crucially, because the relationship between the velocity field and the score is affine in $x$ and the score term, this identity extends directly to the marginal laws via the linearity of expectation \citep{albergo2025stochasticinterpolants, zheng2023guidedflows}. Specifically, by integrating the conditioned field over the posterior $p(x_1 \mid x, y)$, we obtain the marginal conditional field:
\begin{equation}
    u_t(x \mid y) = \frac{\dot\alpha_t}{\alpha_t}x - \sigma_t^2 \left( \frac{\dot\sigma_t}{\sigma_t} - \frac{\dot\alpha_t}{\alpha_t} \right) \nabla_x \log p_t(x \mid y).
\end{equation}
The same identity holds for the unconditional field $u_t(x)$ by marginalizing over $y$. 

This resembles the deterministic probability-flow ODE notation used for
score-based diffusion, where a marginal path is generated by a field of the form
\begin{equation}
\frac{\dd x_t}{\dd t}
=
f(x_t,t)
-
\frac12 g(t)^2
\grad_x\log p_t(x_t\mid y).
\end{equation}

\section{Proofs}
\label{app:proofs}

This appendix collects the proof details used in Section~\ref{sec:method}.
All statements are written for a fixed condition \(y\). We distinguish the exact
continuous-time objects from the empirical estimators used by the practical
algorithm.

\subsection{Standing assumptions}
\label{app:assumptions}

We use the following assumptions throughout the proofs.

\begin{assumption}[Regularity of conditional paths]
\label{ass:conditional-path-regularity}
For \(t\in[0,1]\), the endpoint-conditioned path
\(p_t(\cdot\mid x_1,y)\) and its vector field
\(u_t(\cdot\mid x_1,y)\) satisfy the continuity equation
\begin{equation}
    \partial_t p_t(x\mid x_1,y)
    +
    \divg\!\bigl(
        p_t(x\mid x_1,y)u_t(x\mid x_1,y)
    \bigr)
    =
    0
\end{equation}
in the weak sense. Moreover, the integrability and regularity conditions needed
to exchange differentiation, divergence, and integration over \(x_1\) hold.
\end{assumption}

\begin{assumption}[Endpoint posterior weights]
\label{ass:endpoint-posterior}
The marginal conditional density \eqref{eq:marginal-probability-density-t-cond}
is finite and positive on the region where the generated rollout law is
evaluated. We define
\begin{equation}
    \pi_t(x_1\mid x,y)
    :=
    \frac{
        p_t(x\mid x_1,y)q(x_1\mid y)
    }{
        p_t(x\mid y)
    }.
\end{equation}
Outside the support where \(p_t(x\mid y)>0\), the value of
\(\pi_t(\cdot\mid x,y)\) is immaterial for the objectives below.
\end{assumption}

\begin{assumption}[Generated rollout law]
\label{ass:rollout-law}
For every admissible guidance schedule, the generated rollout law
\(\hat p_t(\cdot\mid y;\omega_{[0,t)})\) is weakly continuous in \(t\) and
satisfies the weak continuity equation driven by the guided field
\(v_t^\theta(\cdot\mid y;\omega_t)\).
The law at time \(t\) depends only on the already committed past schedule
\(\omega_{[0,t)}\).
\end{assumption}

\begin{assumption}[Uniqueness]
\label{ass:ce-uniqueness}
The exact conditional path $p_t(\cdot\mid y)$ is the unique weak solution of
the continuity equation driven by the exact marginal conditional field
$u_t(\cdot\mid y)$, with initial law $p_0$. Specifically, following the 
theory of \citet{DiPerna1989} and \citet{Ambrosio2004}, we assume:
\begin{enumerate}
    \item $u_t(\cdot\mid y) \in L^1([0,1]; BV_{loc}(\mathbb{R}^d))$ with at most linear growth in $x$.
    \item $\divg u_t(\cdot\mid y) \in L^1([0,1]; L^\infty(\mathbb{R}^d))$.
    \item $p_t(\cdot\mid y) \in L^\infty([0,1] \times \mathbb{R}^d)$.
\end{enumerate}
\end{assumption}

\subsection{Endpoint-conditioned paths imply the marginal continuity equation}
\label{app:marginal-ce}

\begin{proposition}[Marginal conditional continuity equation]
\label{prop:app-marginal-ce}
Under Assumptions~\ref{ass:conditional-path-regularity}
and~\ref{ass:endpoint-posterior},
\begin{equation}
    \partial_t p_t(x\mid y)
    +
    \divg\!\bigl(
        p_t(x\mid y)u_t(x\mid y)
    \bigr)
    =
    0 .
\end{equation}
\end{proposition}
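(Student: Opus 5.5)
The plan is to verify the marginal continuity equation in weak form, integrating the endpoint-conditioned weak equations against the endpoint law $q(x_1\mid y)$ and then identifying the resulting flux with $p_t(x\mid y)\,u_t(x\mid y)$ via the posterior weights of Assumption~\ref{ass:endpoint-posterior}. Fix a test function $\psi\in C_c^\infty(\R^d)$. By Assumption~\ref{ass:conditional-path-regularity}, for each endpoint $x_1$ we have
\begin{equation*}
\frac{\dd}{\dd t}\int_{\R^d}\psi(x)\,p_t(x\mid x_1,y)\,\dd x
=
\int_{\R^d}\grad\psi(x)\cdot u_t(x\mid x_1,y)\,p_t(x\mid x_1,y)\,\dd x .
\end{equation*}
Multiply by $q(x_1\mid y)$ and integrate over $x_1$. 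On the left, the integrability part of Assumption~\ref{ass:conditional-path-regularity} lets us exchange $\frac{\dd}{\dd t}$ with the $x_1$-integral and then apply Fubini. Using the definition \eqref{eq:marginal-probability-density-t-cond}, this gives $\frac{\dd}{\dd t}\int\psi(x)\,p_t(x\mid y)\,\dd x$.

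On the right, apply Fubini to write the flux term as
\begin{equation*}
\int_{\R^d}\grad\psi(x)\cdot\Bigl(\int_{\R^d}u_t(x\mid x_1,y)\,p_t(x\mid x_1,y)\,q(x_1\mid y)\,\dd x_1\Bigr)\dd x .
\end{equation*}
At points where $p_t(x\mid y)>0$, multiply and divide the inner integral by $p_t(x\mid y)$. The inner integral then becomes $p_t(x\mid y)\int u_t(x\mid x_1,y)\,\pi_t(x_1\mid x,y)\,\dd x_1$, and by \eqref{eq:marginal-vector-t-cond} this equals $p_t(x\mid y)\,u_t(x\mid y)$.

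On the set where $p_t(x\mid y)=0$, the nonnegative integrand $p_t(x\mid x_1,y)\,q(x_1\mid y)$ vanishes for $q$-a.e.\ $x_1$. The inner flux integral is therefore zero there, and it agrees with $p_t u_t$ whatever value $u_t$ takes. Combining the two sides yields the weak form \eqref{eq:weak-form-continuity-equation} for the pair $(p_t(\cdot\mid y),u_t(\cdot\mid y))$, which is the claimed equation in the weak sense. If the equation is wanted pointwise, the same exchange of divergence and $x_1$-integral (again licensed by Assumption~\ref{ass:conditional-path-regularity}) gives it directly.

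The main obstacle is justifying the exchanges of derivative, integral and Fubini. This amounts to needing $\int\!\!\int|u_t(x\mid x_1,y)|\,p_t(x\mid x_1,y)\,q(x_1\mid y)\,\dd x\,\dd x_1<\infty$ on compact sets, which is exactly what the regularity clause of Assumption~\ref{ass:conditional-path-regularity} supplies. A cleaner route is to integrate the weak equation in time over $[t_a,t_b]$ before swapping integrals, so that only an absolute-integrability bound is needed rather than differentiation under the integral sign. The handling of the null set of $p_t(\cdot\mid y)$ is the only other delicate point, and the nonnegativity argument above resolves it.
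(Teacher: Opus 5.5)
Your proof is correct, but it takes a different route from the paper's. The paper argues pointwise: it differentiates $p_t(x\mid y)=\int p_t(x\mid x_1,y)\,q(x_1\mid y)\dd x_1$ under the integral sign, substitutes $\partial_t p_t(x\mid x_1,y)=-\divg\bigl(p_t(x\mid x_1,y)u_t(x\mid x_1,y)\bigr)$, pulls the divergence out of the $x_1$-integral, and then identifies the flux with $p_t(x\mid y)u_t(x\mid y)$ through the posterior weights.

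That route is shorter and yields the PDE directly. However, it uses the endpoint-conditioned equation in the classical sense: $\partial_t$ and $\divg$ must exist and commute with the $x_1$-integral. Assumption~\ref{ass:conditional-path-regularity} states that equation only in the weak sense. The paper's final step also divides by $p_t(x\mid y)$, although Assumption~\ref{ass:endpoint-posterior} guarantees positivity only where the rollout is evaluated.

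Your test-function version matches the weak hypothesis as stated. It needs only a local $L^1$ bound on the averaged flux for Fubini. The time-integrated form over $[t_a,t_b]$ also avoids a subtlety: the exceptional null set of times may depend on $x_1$. Your nonnegativity argument on $\{p_t(\cdot\mid y)=0\}$ fills the gap the paper leaves open there.

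Your route also produces exactly the weak formulation that Proposition~\ref{prop:app-weak-form} and the uniqueness step in Proposition~\ref{prop:path-correctness} use later. The cost is that you get the equation only distributionally unless you invoke the extra regularity, as your last remark notes. That is the sense in which the result is used downstream anyway.
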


\begin{proof}
By differentiation under the endpoint integral,
\begin{equation}
    \partial_t p_t(x\mid y)
    =
    \int_{\R^d}
    \partial_t p_t(x\mid x_1,y)q(x_1\mid y)\,\dd x_1 .
\end{equation}
Using the endpoint-conditioned continuity equation,
\begin{equation}
    \partial_t p_t(x\mid y)
    =
    -
    \int_{\R^d}
    \divg\!\bigl(
        p_t(x\mid x_1,y)u_t(x\mid x_1,y)
    \bigr)
    q(x_1\mid y)\,\dd x_1 .
\end{equation}
The divergence acts on \(x\), so by the regularity assumption it may be moved
outside the endpoint integral:
\begin{equation}
    \partial_t p_t(x\mid y)
    =
    -
    \divg\!\left(
    \int_{\R^d}
        p_t(x\mid x_1,y)u_t(x\mid x_1,y)q(x_1\mid y)\,\dd x_1
    \right).
\end{equation}
Using the definition of \(\pi_t(x_1\mid x,y)\),
\begin{equation}
    \int_{\R^d}
        p_t(x\mid x_1,y)u_t(x\mid x_1,y)q(x_1\mid y)\,\dd x_1
    =
    p_t(x\mid y)u_t(x\mid y).
\end{equation}
This recovers the marginal continuity equation described in the Flow Matching paragraph of Section~\ref{sec:background}, a result previously established by \citet{lipman2023flowmatching}.
\end{proof}

\subsection{Weak form used by the local objective}
\label{app:weak-form}

\begin{proposition}[Weak continuity identity]
\label{prop:app-weak-form}
Let \(\rho_t\) and \(b_t\) satisfy
\begin{equation}
    \partial_t\rho_t+\divg(\rho_t b_t)=0
\end{equation}
in the distributional sense on \([0,1]\times\R^d\). Then, for every
\(\psi\in C_c^1(\R^d)\), the map
\begin{equation}
    t\mapsto \int_{\R^d}\psi(x)\rho_t(x)\,\dd x
\end{equation}
is absolutely continuous and, for almost every \(t\),
\begin{equation}
    \frac{\dd}{\dd t}
    \int_{\R^d}\psi(x)\rho_t(x)\,\dd x
    =
    \int_{\R^d}
    \grad\psi(x)\cdot b_t(x)\rho_t(x)\,\dd x .
\end{equation}
\end{proposition}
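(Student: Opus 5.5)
The plan is to test the distributional equation against product test functions and read off the identity in time. Here \(\rho_t\) is understood as a locally integrable (or weakly continuous measure-valued) family, and \(\rho_t b_t\in L^1_{loc}([0,1]\times\R^d)\), so that the distributional formulation makes sense.

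\emph{Step 1: pass to a scalar distributional equation in \(t\).} The hypothesis \(\partial_t\rho_t+\divg(\rho_t b_t)=0\) in \(\mathcal D'((0,1)\times\R^d)\) means that
\begin{equation}
\int_0^1\!\!\int_{\R^d}\bigl(\partial_t\varphi+\grad\varphi\cdot b_t\bigr)\rho_t\,\dd x\,\dd t=0
\end{equation}
for all \(\varphi\in C_c^\infty((0,1)\times\R^d)\). Take \(\varphi(t,x)=\eta(t)\psi(x)\) with \(\eta\in C_c^\infty((0,1))\) and \(\psi\in C_c^\infty(\R^d)\). Set \(f(t)=\int\psi\rho_t\,\dd x\) and \(g(t)=\int\grad\psi\cdot b_t\rho_t\,\dd x\). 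The identity becomes
\begin{equation}
\int_0^1\eta'(t)f(t)\,\dd t=-\int_0^1\eta(t)g(t)\,\dd t,
\end{equation}
so \(f'=g\) in \(\mathcal D'((0,1))\). Because \(\psi\) and \(\grad\psi\) are bounded with compact support \(K\), local integrability of \(\rho\) and \(\rho b\) on \([0,1]\times K\) gives \(f,g\in L^1(0,1)\).

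\emph{Step 2: conclude absolute continuity.} A function in \(L^1(0,1)\) whose distributional derivative lies in \(L^1\) belongs to \(W^{1,1}(0,1)\). It therefore agrees a.e. with the absolutely continuous function \(F(t)=c+\int_0^t g(s)\,\dd s\), and \(F'=g\) a.e. by Lebesgue differentiation. If \(t\mapsto\rho_t\) is weakly continuous, as in Assumption~\ref{ass:rollout-law}, then \(f\) is continuous, hence equals \(F\) everywhere, and is itself absolutely continuous with \(f'(t)=g(t)\) for a.e.\ \(t\). Without weak continuity, one picks the narrowly continuous representative, as in \citet{ambrosio2005gradient}.

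\emph{Step 3: extend from \(C_c^\infty\) to \(C_c^1\).} For \(\psi\in C_c^1\), mollify to get \(\psi_\epsilon\to\psi\) and \(\grad\psi_\epsilon\to\grad\psi\) uniformly, with supports in a fixed compact set. Write the integrated form \(f_\epsilon(t)-f_\epsilon(s)=\int_s^t g_\epsilon\). Dominated convergence, with dominating functions \(C\rho\) and \(C|b|\rho\) on the compact set, passes this to the limit for all \(s,t\) at which \(\rho_s,\rho_t\) are defined, which is a.e.\ or everywhere under weak continuity. The resulting integral identity yields both absolute continuity and the a.e.\ derivative formula.

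\emph{Main obstacle.} The measure-theoretic bookkeeping is the delicate part. One must ensure that the representative of \(t\mapsto\int\psi\rho_t\) is chosen consistently across test functions, and that the null set of times where differentiation fails is handled. For each fixed \(\psi\) the bad set is null, which suffices for the statement as written. I would justify the choice of representative via weak continuity, falling back on the standard \citet{ambrosio2005gradient}, Lemma~8.1.2, argument.
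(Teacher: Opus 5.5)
Your proposal is correct and follows essentially the same route as the paper's proof. Both test the distributional equation with product functions $\xi(t)\psi(x)$, conclude that $t\mapsto\int\psi\rho_t$ has distributional derivative $t\mapsto\int\grad\psi\cdot b_t\rho_t$, and pass to the absolutely continuous representative, deferring the fine details to \citet{ambrosio2005gradient}. The paper leaves three points implicit that you make explicit: you state the needed local integrability of $\rho$ and $\rho b$, you fix the representative via weak continuity, and you reach $C_c^1$ test functions by mollifying from $C_c^\infty$, whereas the paper tests directly with $\xi\in C_c^1((0,1))$ and $\psi\in C_c^1(\R^d)$.
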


\begin{proof}
Test the distributional continuity equation with
\(\varphi(t,x)=\xi(t)\psi(x)\), where
\(\xi\in C_c^1((0,1))\) and \(\psi\in C_c^1(\R^d)\). This yields
\begin{equation}
    \int_0^1 \xi'(t)
    \left(
        \int_{\R^d}\psi(x)\rho_t(x)\,\dd x
    \right)\dd t
    +
    \int_0^1 \xi(t)
    \left(
        \int_{\R^d}
        \grad\psi(x)\cdot b_t(x)\rho_t(x)\,\dd x
    \right)\dd t
    =
    0 .
\end{equation}
Hence the scalar map
\(t\mapsto \int\psi\,\rho_t\) has weak derivative
\(t\mapsto \int \grad\psi\cdot b_t\,\rho_t\), giving the desired
absolutely continuous representative and pointwise identity for almost every
\(t\). For a complete treatment, see \citet{ambrosio2005gradient, santambrogio2015optimaltf}.
\end{proof}

\subsection{Ideal weak equivalence implies path correctness}
\label{app:path-correctness}

\begin{proof}[Proof of Proposition~\ref{prop:path-correctness}]
Fix \(\psi\in C_c^\infty(\R^d)\). By Assumption~\ref{ass:rollout-law}, the
generated law satisfies
\begin{equation}
    \frac{\dd}{\dd s}
    \int_{\R^d}\psi(x)\hat p_s(x)\,\dd x
    =
    \int_{\R^d}
    \grad\psi(x)\cdot
    v_s^\theta(x\mid y;\omega_s)
    \hat p_s(x)\,\dd x
\end{equation}
for almost every \(s\in[0,t]\). The assumed weak equivalence in
Proposition~\ref{prop:path-correctness} replaces the right-hand side by
\begin{equation}
    \int_{\R^d}
    \grad\psi(x)\cdot u_s(x\mid y)\hat p_s(x)\,\dd x .
\end{equation}
Therefore \(\hat p_s\) is a weak solution of the continuity equation driven by
\(u_s(\cdot\mid y)\). Since \(\hat p_0=p_0\), Assumption~\ref{ass:ce-uniqueness}
implies
\begin{equation}
    \hat p_s(\cdot\mid y;\omega_{[0,s)})
    =
    p_s(\cdot\mid y),
    \qquad s\in[0,t].
\end{equation}
\end{proof}

\subsection{Exact local weak-form objective}
\label{app:local-objective}

The body defines the local objective at solver time \(t_i\), weighted by the
current generated rollout law
\(\hat p_{t_i}(\cdot\mid y;\vec\omega_i)\), where
\(\vec\omega_i=(\omega_0,\ldots,\omega_{i-1})\) is fixed before choosing
\(\omega_i\). For a test function \(\psi\), define the signed weak-form
mismatch
\begin{equation}
    G_i(\omega_i\mid y;\vec\omega_i,\psi)
    :=
    \int_{\R^d}
    \Bigl(
        u_{t_i}(x\mid y)
        -
        v_{t_i}^\theta(x\mid y;\omega_i)
    \Bigr)\cdot
    \grad\psi(x)\,
    \hat p_{t_i}(x\mid y;\vec\omega_i)\,\dd x .
\end{equation}
The exact local objective is
\begin{equation}
    \mathcal L_i(\omega_i\mid y;\vec\omega_i,\psi)
    :=
    \bigl|
        G_i(\omega_i\mid y;\vec\omega_i,\psi)
    \bigr|^2 .
    \label{eq:app-local-objective}
\end{equation}
This objective is on-policy: its weighting distribution is the rollout produced
by the already committed past guidance values, not the ideal path.

\subsection{Endpoint-conditioned representation of the exact objective}
\label{app:endpoint-conditioned-objective}

\begin{proposition}[Endpoint-conditioned objective]
\label{prop:app-endpoint-conditioned-objective}
Under Assumptions~\ref{ass:conditional-path-regularity}
and~\ref{ass:endpoint-posterior}, the signed quantity in
\eqref{eq:app-local-objective} admits the endpoint-conditioned form
\begin{equation}
\begin{aligned}
    G_i(\omega_i\mid y;\vec\omega_i,\psi)
    =
    \int_{\R^d}
    \int_{\R^d}
    &
    \Bigl(
        u_{t_i}(x\mid x_1,y)
        -
        v_{t_i}^\theta(x\mid y;\omega_i)
    \Bigr)
    \cdot \grad\psi(x)
    \\
    &\times
    \pi_{t_i}(x_1\mid x,y)\,
    \hat p_{t_i}(x\mid y;\vec\omega_i)
    \,\dd x_1\,\dd x .
\end{aligned}
\end{equation}
Consequently, \(\mathcal L_i\) can be evaluated from endpoint-conditioned
velocities and posterior endpoint weights, which proves
Theorem~\ref{thm:endpoint-conditioned-objective}.
\end{proposition}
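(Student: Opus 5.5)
The plan is to prove the identity pointwise in \(x\) for the inner vector \(u_{t_i}(x\mid y)-v_{t_i}^\theta(x\mid y;\omega_i)\), and then integrate against \(\grad\psi(x)\,\hat p_{t_i}(x\mid y;\vec\omega_i)\) and use Fubini to merge the two integrals. Throughout, fix \(x\) in the region where \(p_{t_i}(x\mid y)>0\), so that Assumption~\ref{ass:endpoint-posterior} makes \(\pi_{t_i}(\cdot\mid x,y)\) well defined.

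\textbf{Step 1.} Show that \(\pi_{t_i}(\cdot\mid x,y)\) is a probability density in \(x_1\). Indeed, \(\int \pi_{t_i}(x_1\mid x,y)\,\dd x_1 = p_{t_i}(x\mid y)^{-1}\int p_{t_i}(x\mid x_1,y)q(x_1\mid y)\,\dd x_1 = 1\) by \eqref{eq:marginal-probability-density-t-cond}.

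\textbf{Step 2.} Handle the guided field. The field \(v_{t_i}^\theta(x\mid y;\omega_i)\) does not depend on \(x_1\), so Step~1 gives \(v_{t_i}^\theta(x\mid y;\omega_i)=\int v_{t_i}^\theta(x\mid y;\omega_i)\,\pi_{t_i}(x_1\mid x,y)\,\dd x_1\).

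\textbf{Step 3.} Handle the exact field. By \eqref{eq:marginal-vector-t-cond}, \(u_{t_i}(x\mid y)=\int u_{t_i}(x\mid x_1,y)\,\pi_{t_i}(x_1\mid x,y)\,\dd x_1\). Subtracting the identity of Step~2 gives the pointwise representation
\begin{equation*}
u_{t_i}(x\mid y)-v_{t_i}^\theta(x\mid y;\omega_i)=\int\bigl(u_{t_i}(x\mid x_1,y)-v_{t_i}^\theta(x\mid y;\omega_i)\bigr)\pi_{t_i}(x_1\mid x,y)\,\dd x_1 .
\end{equation*}

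\textbf{Step 4.} Conclude. Dot this representation with \(\grad\psi(x)\), multiply by \(\hat p_{t_i}(x\mid y;\vec\omega_i)\), and integrate over \(x\). This reproduces the signed quantity \(G_i\) in the iterated form of \eqref{eq:endpoint-conditioned-objective}. Squaring yields the statement for \(\mathcal L_i\). Rewriting as the double integral \(\dd x_1\,\dd x\) of Proposition~\ref{prop:app-endpoint-conditioned-objective} is a use of Fubini.

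The \textbf{main obstacle} is justifying the integral manipulations, namely the integrability of \((x,x_1)\mapsto |u_{t_i}(x\mid x_1,y)-v_{t_i}^\theta|\,|\grad\psi(x)|\,\pi_{t_i}\,\hat p_{t_i}\) and the null set where \(p_{t_i}(x\mid y)=0\). The approach has two parts:
\begin{itemize}[leftmargin=1.5em]
\item \emph{Integrability.} Since \(\grad\psi\) is bounded with compact support \(K\), it suffices to have \(\int_K\int |u_{t_i}(x\mid x_1,y)|\pi_{t_i}\,\dd x_1\,\hat p_{t_i}\,\dd x<\infty\) and local integrability of \(v^\theta\) against \(\hat p_{t_i}\). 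I would take both from the integrability clause of Assumption~\ref{ass:conditional-path-regularity}, which is already needed for \eqref{eq:marginal-vector-t-cond} to be well defined.
\item \emph{The null set.} Outside \(\{p_{t_i}(\cdot\mid y)>0\}\), Assumption~\ref{ass:endpoint-posterior} declares \(\pi\) immaterial. I would either restrict \(\hat p_{t_i}\) to the evaluation region, as that assumption stipulates, or define \(\pi\) there as any probability density so that Steps~2--3 still hold trivially.
\end{itemize}
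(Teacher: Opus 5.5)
Your proposal is correct and follows the paper's proof essentially step for step. Both arguments use posterior normalization, then the marginal field as a posterior average via \eqref{eq:marginal-vector-t-cond}, then the $x_1$-independent guided field written as a trivial posterior average, and finally merge the two integrals under Assumption~\ref{ass:conditional-path-regularity}. Your extra remarks on integrability and on the set $\{p_{t_i}(\cdot\mid y)=0\}$ spell out what the paper leaves implicit, though Fubini is not really needed: the target is already the iterated $\dd x_1\,\dd x$ integral, so moving $\grad\psi(x)\,\hat p_{t_i}(x\mid y;\vec\omega_i)$ inside the $x_1$-integral is just linearity.
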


\begin{proof}
We proceed in four steps.

\emph{(i) Posterior normalization.} By Assumption~\ref{ass:endpoint-posterior},
\(\pi_{t_i}(x_1\mid x,y)=p_{t_i}(x\mid x_1,y)q(x_1\mid y)/p_{t_i}(x\mid y)\)
is a probability density in \(x_1\) wherever \(p_{t_i}(x\mid y)>0\), so
\(\int_{\R^d}\pi_{t_i}(x_1\mid x,y)\,\dd x_1=1\).

\emph{(ii) Marginal field as a posterior average.} Eq.~\eqref{eq:marginal-vector-t-cond} reads
\(u_{t_i}(x\mid y)=\int_{\R^d}u_{t_i}(x\mid x_1,y)\,\pi_{t_i}(x_1\mid x,y)\,\dd x_1\).

\emph{(iii) The guided field carries no endpoint dependence.}
\(v_{t_i}^\theta(x\mid y;\omega_i)\) does not depend on \(x_1\), so by (i) it may be written as
\(\int_{\R^d}v_{t_i}^\theta(x\mid y;\omega_i)\,\pi_{t_i}(x_1\mid x,y)\,\dd x_1\).

\emph{(iv) Combine.} Substituting (ii) and (iii) into the definition of \(G_i\) and merging the two
\(x_1\)-integrals - permitted by Assumption~\ref{ass:conditional-path-regularity} - gives the displayed
formula. Squaring the signed quantity gives the endpoint-conditioned representation of \(\mathcal L_i\).

The identity is exact at the population level. In the estimator of Appendix~\ref{app:algorithm}, both
\(\pi_{t_i}\) and the outer integral are replaced by Monte Carlo averages.
\end{proof}

\subsection{Affine form under scalar CFG}
\label{app:affine-objective}

\begin{proposition}[Quadratic local objective under scalar CFG]
\label{prop:app-quadratic-objective}
Assume scalar CFG:
\begin{equation}
    v_t^\theta(x\mid y;\omega)
    =
    v_t^\theta(x\mid \varnothing)
    +
    \omega
    \Bigl(
        v_t^\theta(x\mid y)-v_t^\theta(x\mid \varnothing)
    \Bigr).
\end{equation}
For fixed \(y,\vec\omega_i,\psi\), define
\begin{equation}
    A_i(\psi)
    :=
    \int_{\R^d}
    \Bigl(
        u_{t_i}(x\mid y)
        -
        v_{t_i}^\theta(x\mid \varnothing)
    \Bigr)\cdot
    \grad\psi(x)\,
    \hat p_{t_i}(x\mid y;\vec\omega_i)\,\dd x
\end{equation}
and
\begin{equation}
    B_i(\psi)
    :=
    \int_{\R^d}
    \Bigl(
        v_{t_i}^\theta(x\mid y)
        -
        v_{t_i}^\theta(x\mid \varnothing)
    \Bigr)\cdot
    \grad\psi(x)\,
    \hat p_{t_i}(x\mid y;\vec\omega_i)\,\dd x .
\end{equation}
Then
\begin{equation}
    \mathcal L_i(\omega_i\mid y;\vec\omega_i,\psi)
    =
    \bigl|A_i(\psi)-\omega_i B_i(\psi)\bigr|^2 .
\end{equation}
In particular, the exact local objective is quadratic in \(\omega_i\).
\end{proposition}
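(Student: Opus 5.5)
The argument is a direct substitution into the signed mismatch $G_i$ of Appendix~\ref{app:local-objective}, followed by linearity of the integral. First, I would insert the scalar CFG form of the guided field into the integrand. This gives
\begin{equation*}
u_{t_i}(x\mid y)-v_{t_i}^\theta(x\mid y;\omega_i)
=
\Bigl(u_{t_i}(x\mid y)-v_{t_i}^\theta(x\mid\varnothing)\Bigr)
-\omega_i\Bigl(v_{t_i}^\theta(x\mid y)-v_{t_i}^\theta(x\mid\varnothing)\Bigr),
\end{equation*}
which holds pointwise in $x$ because $\omega_i$ is a scalar that does not depend on $x$.

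Next, I would take the dot product with $\nabla\psi(x)$, weight by $\hat p_{t_i}(x\mid y;\vec\omega_i)$, and integrate. The weighting law depends only on the committed past schedule $\vec\omega_i$ (Assumption~\ref{ass:rollout-law}), not on $\omega_i$. Linearity of the integral therefore splits $G_i$ into $A_i(\psi)-\omega_i B_i(\psi)$, with $\omega_i$ pulled outside the second integral.

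The only point that needs care is that this split is legitimate. Each of the two integrals must be finite separately, not merely their difference. I would justify this from the regularity already implicit in the definition of $\mathcal L_i$. Since $\psi\in C_c^\infty$, the factor $\nabla\psi$ is bounded and compactly supported. It then suffices that $u_{t_i}(\cdot\mid y)$, $v^\theta_{t_i}(\cdot\mid y)$ and $v^\theta_{t_i}(\cdot\mid\varnothing)$ are locally integrable against $\hat p_{t_i}$ on $\operatorname{supp}\psi$. I would state this as the standing integrability under which $A_i$ and $B_i$ are defined.

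Finally, squaring gives $\mathcal L_i=|A_i-\omega_iB_i|^2=B_i^2\omega_i^2-2A_iB_i\,\omega_i+A_i^2$, a polynomial of degree at most two in $\omega_i$. It is strictly convex when $B_i\neq0$ and constant when $B_i=0$, which is what Corollary~\ref{thm:exact-local-selector} uses.
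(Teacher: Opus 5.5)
Your proof is correct and follows the paper's argument: substitute the scalar CFG field into $G_i$, collect the $\omega_i$-independent terms and the terms linear in $\omega_i$ to get $G_i = A_i - \omega_i B_i$, and square. Your added remarks are sound refinements that the paper leaves implicit, namely that $A_i$ and $B_i$ must each be finite on their own and that the weighting law $\hat p_{t_i}(\cdot\mid y;\vec\omega_i)$ does not depend on $\omega_i$.
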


\begin{proof}
Insert the scalar CFG field into \(G_i\):
\begin{equation}
\begin{aligned}
    G_i(\omega_i\mid y;\vec\omega_i,\psi)
    =
    \int_{\R^d}
    &\Bigl(
        u_{t_i}(x\mid y)
        -
        v_{t_i}^\theta(x\mid \varnothing)
    \\
    &\quad
        -
        \omega_i
        \bigl(
            v_{t_i}^\theta(x\mid y)
            -
            v_{t_i}^\theta(x\mid \varnothing)
        \bigr)
    \Bigr)\cdot
    \grad\psi(x)\,
    \hat p_{t_i}(x\mid y;\vec\omega_i)\,\dd x .
\end{aligned}
\end{equation}
Collecting the terms independent of \(\omega_i\) and those multiplying
\(\omega_i\) gives \(G_i=A_i-\omega_i B_i\). Squaring gives the claim.
\end{proof}

\subsection{Exact local selector}
\label{app:exact-local-selector-proof}

\begin{proof}[Proof of Corollary~\ref{thm:exact-local-selector}]
By Proposition~\ref{prop:app-quadratic-objective},
\begin{equation}
    \mathcal L_i(\omega_i)
    =
    \bigl|A_i-\omega_i B_i\bigr|^2 .
\end{equation}
For a single test function, \(A_i,B_i\in\R\), so
\begin{equation}
    \mathcal L_i(\omega_i)
    =
    A_i^2-2\omega_iA_iB_i+\omega_i^2B_i^2 .
\end{equation}
If \(B_i\neq0\), differentiating and setting the derivative to zero gives
\begin{equation}
    \omega_i^\star
    =
    \frac{A_iB_i}{B_i^2}
    =
    \frac{A_i}{B_i}.
\end{equation}
Substituting the definitions of \(A_i\) and \(B_i\) gives
Eq.~\eqref{eq:exact-local-selector} in the body.

If \(B_i=0\), then the objective is independent of \(\omega_i\). Hence every
admissible guidance value is optimal. In practical use, this degenerate case is
handled by a denominator floor as specified in \ref{app:stabilization}.
\end{proof}

\subsection{From the local objective to the interval-wise implementation}
\label{app:interval-wise-selector-proof}

The selector in Corollary~\ref{thm:exact-local-selector} is written at a fixed
time \(t_i\). In numerical sampling, the guidance value is held fixed over the
solver interval \([t_i,t_{i+1})\). We therefore use the left-endpoint
discretization of the local weak-form objective on each interval.

\section{Practical algorithm}
\label{app:algorithm}

The algorithm can be used in two modes:
\emph{online selection}, where the guidance value is selected during the same
sampling run, and \emph{offline schedule fitting}, where a schedule is fitted
once and then reused for later inference with the same backbone, solver, grid,
and condition.

Here \(T\) is the number of solver intervals, \(M\) is the number of endpoint
samples, \(N\) is the number of rollout particles, and \(L\) is the number of
weak test functions. 

\paragraph{Posterior weights.}
At interval \(i\), suppose the current rollout particles are
\begin{equation}
x_i^{(n)}
\sim
\hat p_{t_i}(\cdot\mid y;\vec\omega_i),
\qquad
n=1,\ldots,N,
\end{equation}
where \(\vec\omega_i=(\omega_0,\ldots,\omega_{i-1})\) is the already committed
schedule.  Draw endpoint samples
\begin{equation}
x_{1,i}^{(m)}\sim q(\cdot\mid y),
\qquad
m=1,\ldots,M.
\end{equation}
For each rollout particle \(x_i^{(n)}\), define the empirical posterior weights
\begin{equation}
\label{eq:app-practical-posterior-weights}
\pi_i^{(n,m)}
:=
\frac{
    p_{t_i}\!\left(x_i^{(n)}\mid x_{1,i}^{(m)},y\right)
}{
    \sum_{r=1}^{M}
    p_{t_i}\!\left(x_i^{(n)}\mid x_{1,i}^{(r)},y\right)
}.
\end{equation}

\paragraph{Multi-test coefficient estimator.}
Let
\begin{equation}
\label{eq:multi-test-family}
\Psi=\{\psi_\ell\}_{\ell=1}^{L}
\end{equation}
be the chosen test family, and let \(\Delta t_i=t_{i+1}-t_i\).  For each
\(\psi_\ell\), define
\begin{equation}
\label{eq:app-practical-A}
\widetilde A_i^{(\ell)}
:=
\Delta t_i
\frac1N
\sum_{n=1}^{N}
\left[
    \sum_{m=1}^{M}
    \pi_i^{(n,m)}
    \Bigl(
        u_{t_i}(x_i^{(n)}\mid x_{1,i}^{(m)},y)
        -
        v_{t_i}^{\theta}(x_i^{(n)}\mid \varnothing)
    \Bigr)
\right]
\cdot
\grad\psi_\ell(x_i^{(n)}),
\end{equation}
and
\begin{equation}
\label{eq:app-practical-B}
\widetilde B_i^{(\ell)}
:=
\Delta t_i
\frac1N
\sum_{n=1}^{N}
\Bigl(
    v_{t_i}^{\theta}(x_i^{(n)}\mid y)
    -
    v_{t_i}^{\theta}(x_i^{(n)}\mid \varnothing)
\Bigr)
\cdot
\grad\psi_\ell(x_i^{(n)}).
\end{equation}
Stack these coefficients as
\begin{equation}
\widetilde A_i
=
\bigl(
\widetilde A_i^{(1)},\ldots,\widetilde A_i^{(L)}
\bigr),
\qquad
\widetilde B_i
=
\bigl(
\widetilde B_i^{(1)},\ldots,\widetilde B_i^{(L)}
\bigr).
\end{equation}
The empirical interval-wise objective is
\begin{equation}
\label{eq:app-practical-objective}
\widetilde{\mathcal L}_i(\omega_i\mid y;\vec\omega_i,\Psi)
=
\frac1L
\norm{
    \widetilde A_i-\omega_i\widetilde B_i
}_2^2 .
\end{equation}
The factor \(1/L\) does not affect the minimizer. The closed-form multi-test selector is
\begin{equation}
\label{eq:app-practical-multitest-selector}
\widehat\omega_i
=
    \frac{
        \ip{\widetilde A_i}{\widetilde B_i}
    }{
        \norm{\widetilde B_i}_2^2
        }
\end{equation}

\begin{algorithm}[t]
\caption{Practical interval-wise guidance selection}
\label{alg:practical-selector}
\begin{algorithmic}[1]
\STATE \textbf{Input:} condition \(y\); frozen fields
\(v_t^\theta(\cdot\mid y)\) and \(v_t^\theta(\cdot\mid\varnothing)\);
endpoint-conditioned field \(u_t(\cdot\mid x_1,y)\);
endpoint-conditioned density \(p_t(\cdot\mid x_1,y)\);
solver grid \(0=t_0<\cdots<t_T=1\);
rollout budget \(N\); endpoint budget \(M\);
test family \(\Psi=\{\psi_\ell\}_{\ell=1}^L\);
min and max constraint \(\omega_{\min},\omega_{\max}\);
mode \(\in\{\mathrm{online},\mathrm{offline}\}\);
random seeds.
\STATE Draw initial rollout particles
\begin{equation}
x_0^{(n)}\sim p_0,
\qquad
n=1,\ldots,N .
\end{equation}
\STATE Initialize the selected schedule as an empty list.
\FOR{\(i=0,\ldots,T-1\)}
    \STATE Set \(\Delta t_i=t_{i+1}-t_i\).
    \STATE Draw endpoint samples
    \begin{equation}
    x_{1,i}^{(m)}\sim q(\cdot\mid y),
    \qquad
    m=1,\ldots,M .
    \end{equation}
    \STATE Compute empirical posterior weights
    \(\{\pi_i^{(n,m)}\}_{n,m}\) using
    Eq.~\eqref{eq:app-practical-posterior-weights}.
    \STATE Compute the multi-test coefficients
    \(\widetilde A_i,\widetilde B_i\) using
    Eqs.~\eqref{eq:app-practical-A}-\eqref{eq:app-practical-B}.
    \STATE Select the interval guidance value using the stabilized quotient of
    Appendix~\ref{app:stabilization}, with \(\widetilde\beta_i=\norm{\widetilde B_i}_2^2\)
    and \(\widetilde\beta_i^\star=\max_{0\le j<i}\widetilde\beta_j\):
    \begin{equation}
    \widehat\omega_i
    \leftarrow
        \min\!\left\{
        \omega_{\max},
        \max\!\left\{\omega_{\min},\frac{
            \ip{\widetilde A_i}{\widetilde B_i}
        }{
            \max\!\left\{\widetilde\beta_i,\,\eta\,\widetilde\beta_i^\star\right\}
        }\right\}
    \right\}.
    \end{equation}
    \STATE Append \(\widehat\omega_i\) to the schedule.
    \STATE Advance all rollout particles by one solver step:
    \begin{equation}
    x_{i+1}^{(n)}
    \leftarrow
    \mathrm{ODEStep}
    \left(
        x_i^{(n)},t_i,t_{i+1};
        v^\theta(\cdot\mid y;\widehat\omega_i)
    \right),
    \qquad
    n=1,\ldots,N .
    \end{equation}
\ENDFOR
\IF{mode is \(\mathrm{online}\)}
    \STATE \textbf{Return:} selected schedule
    \((\widehat\omega_0,\ldots,\widehat\omega_{T-1})\)
    and generated particles \(\{x_T^{(n)}\}_{n=1}^{N}\).
\ELSE
    \STATE \textbf{Return:} fitted schedule
    \((\widehat\omega_0,\ldots,\widehat\omega_{T-1})\).
\ENDIF
\end{algorithmic}
\end{algorithm}

\paragraph{Offline reuse.}
In offline mode, Algorithm~\ref{alg:practical-selector} is run once using
fitting seeds and representative initial particles.  The resulting schedule is
stored as a piecewise-constant function on the solver grid.  Later inference
runs draw fresh initial particles and use the stored values
\((\widehat\omega_0,\ldots,\widehat\omega_{T-1})\) directly, without endpoint
sampling or posterior-weight computation.

\section{Conditional paths used for experimental validation}
\label{app:conditional-paths}

This appendix defines the endpoint-conditioned paths and velocity labels used for experimental validation.  Throughout, \(t\in[0,1]\) denotes generation time: \(t=0\)
corresponds to the Gaussian source and \(t=1\) corresponds to the conditional
endpoint law.  We do not use a separate diffusion time variable.  For a class
label \(y\), endpoint samples are drawn from \(x_1\sim q(\cdot\mid y)\), and
source samples are drawn from \(x_0\sim\mathcal N(0,\I)\).

\paragraph{Affine Gaussian path template.}
Several paths below are special cases of
\begin{equation}
x_t=a_t x_0+b_t x_1,
\qquad
x_0\sim\mathcal N(0,\I),
\label{eq:app-affine-gaussian-map}
\end{equation}
where \(a_t>0\) for \(t<1\).  The endpoint-conditioned law is
\begin{equation}
p_t(x\mid x_1,y)
=
\mathcal N\!\left(x\mid b_t x_1,a_t^2\I\right).
\label{eq:app-affine-gaussian-path}
\end{equation}
The associated endpoint-conditioned velocity field is
\begin{equation}
u_t(x\mid x_1,y)
=
\dot b_t x_1
+
\frac{\dot a_t}{a_t}
\left(x-b_t x_1\right),
\qquad t<1.
\label{eq:app-affine-gaussian-vf}
\end{equation}
Equivalently, along a sampled path \(x_t=a_t x_0+b_t x_1\),
\begin{equation}
u_t(x_t\mid x_1,y)=\dot a_t x_0+\dot b_t x_1 .
\end{equation}
The along-sample label is the quantity used whenever the paired source sample is
available.

\paragraph{Rectified flow (RF) \citep{liu2022flowstraightfastlearning}.}
The RF path uses the deterministic straight-line interpolation
\begin{equation}
x_t=(1-t)x_0+t x_1 .
\label{eq:app-rf-map}
\end{equation}
Thus \(a_t=1-t\) and \(b_t=t\).  The along-sample velocity label is
\begin{equation}
u_t(x_t\mid x_1,y)=x_1-x_0 .
\label{eq:app-rf-label}
\end{equation}

\paragraph{Optimal-transport Gaussian path (OT) \citep{lipman2023flowmatching}.}
The OT flow-matching path uses
\begin{equation}
x_t
=
\bigl(1-(1-\sigma_{\min})t\bigr)x_0+t x_1,
\qquad
\sigma_{\min}>0 .
\label{eq:app-ot-map}
\end{equation}
Thus
\begin{equation}
a_t=1-(1-\sigma_{\min})t,
\qquad
b_t=t.
\end{equation}
The endpoint-conditioned law is
\begin{equation}
p_t(x\mid x_1,y)
=
\mathcal N\!\left(
x\mid t x_1,
\bigl(1-(1-\sigma_{\min})t\bigr)^2\I
\right),
\label{eq:app-ot-path}
\end{equation}
and the explicit conditional velocity is
\begin{equation}
u_t(x\mid x_1,y)
=
\frac{
x_1-(1-\sigma_{\min})x
}{
1-(1-\sigma_{\min})t
}.
\label{eq:app-ot-vf}
\end{equation}
Along sampled paths, the velocity label is
\begin{equation}
u_t(x_t\mid x_1,y)=x_1-(1-\sigma_{\min})x_0 .
\label{eq:app-ot-label}
\end{equation}

\paragraph{Independent conditional flow matching (I-CFM) \citep{tong2024minibatchot}.}
For I-CFM, the conditioning variable is the pair
\begin{equation}
z=(x_0,x_1),
\qquad
x_0\sim\mathcal N(0,\I),
\qquad
x_1\sim q(\cdot\mid y),
\end{equation}
sampled independently.  The pair-conditioned path is
\begin{equation}
p_t(x\mid z,y)
=
\mathcal N\!\left(
x\mid (1-t)x_0+t x_1,\sigma_{\mathrm{icfm}}^2\I
\right),
\label{eq:app-icfm-path}
\end{equation}
or equivalently
\begin{equation}
x_t=(1-t)x_0+t x_1+\sigma_{\mathrm{icfm}}\varepsilon,
\qquad
\varepsilon\sim\mathcal N(0,\I).
\label{eq:app-icfm-map}
\end{equation}
Since the standard deviation is constant in time, the velocity label is
\begin{equation}
u_t(x\mid z,y)=x_1-x_0 .
\label{eq:app-icfm-vf}
\end{equation}

\paragraph{Variance-preserving probability-flow path (VP) \citep{song2021scorebasedsde}.}
For the VP variant, we write the signal coefficient directly in generation
time as \(\bar\alpha_t\), with \(\bar\alpha_0\approx0\) and
\(\bar\alpha_1=1\).  The endpoint-conditioned law is
\begin{equation}
p_t(x\mid x_1,y)
=
\mathcal N\!\left(
x\mid \bar\alpha_t x_1,
\bigl(1-\bar\alpha_t^2\bigr)\I
\right).
\label{eq:app-vp-path}
\end{equation}
Applying Eq.~\eqref{eq:app-affine-gaussian-vf} with
\(a_t=\sqrt{1-\bar\alpha_t^2}\) and \(b_t=\bar\alpha_t\) gives
\begin{equation}
u_t(x\mid x_1,y)
=
\frac{\dot{\bar\alpha}_t}
{1-\bar\alpha_t^2}
\left(
x_1-\bar\alpha_t x
\right),
\qquad t<1.
\label{eq:app-vp-vf}
\end{equation}
The VP field is evaluated only at solver times \(t_i<1\).  Endpoint quantities
at \(t=1\) are evaluated from the target law rather than by querying
Eq.~\eqref{eq:app-vp-vf}.

\begin{table}[t]
\caption{
Conditional paths used for experimental validation.  The time variable is always
generation time \(t\in[0,1]\), from prior to endpoint.}
\label{tab:app-conditional-paths}
\vspace{0.5em}
\centering
\begin{tabular}{llll}
\toprule
Variant & Conditioning & Path sample \(x_t\) & Velocity label \\
\midrule
RF \citep{liu2022flowstraightfastlearning}
&
\(x_1,y\)
&
\((1-t)x_0+t x_1\)
&
\(x_1-x_0\)
\\
OT \citep{lipman2023flowmatching}
&
\(x_1,y\)
&
\(\bigl(1-(1-\sigma_{\min})t\bigr)x_0+t x_1\)
&
\(x_1-(1-\sigma_{\min})x_0\)
\\
I-CFM \citep{tong2024minibatchot}
&
\(z=(x_0,x_1),y\)
&
\((1-t)x_0+t x_1+\sigma_{\mathrm{icfm}}\varepsilon\)
&
\(x_1-x_0\)
\\
VP \citep{song2021scorebasedsde}
&
\(x_1,y\)
&
\(\bar\alpha_t x_1+\sqrt{1-\bar\alpha_t^2}\,x_0\)
&
\(\partial_t x_t\)
\\
\bottomrule
\end{tabular}
\end{table}

\section{Experimental details}
\label{app:exp-details}

Here we provide the details for the experimental validation in Section~\ref{sec:experiments}.  All schedules are fitted
per class unless explicitly stated otherwise.  All methods in a comparison use
the same endpoint law, solver grid, initial latents, and inference seeds.
We report mean \(\pm\) sample standard deviation, where \(\mathrm{std} = \sqrt{\left(\sum_{i=1}^{|\mathrm{seeds}|}(\mathrm{metric}_i-\mathrm{mean})^2\right)/(|\mathrm{seeds}|-1)} \).

\subsection{Reproducibility summary}
\label{app:reproducibility-summary}

Unless otherwise stated, the experimental validation uses
\begin{equation}
T=200,\qquad
M=2^{14},\qquad
N=2^{14},\qquad
L=2^{12},\qquad
\omega_{\min}=1,\qquad
\omega_{\max}=\infty .
\end{equation}
The default test family contains \(2^{11}\) linear tests and
\(2^{11}\) quadratic tests.  We use three fitting seeds and three inference seeds, with matched prior latents across compared methods within each inference seed. The denominator stabilizer is specified in Appendix~\ref{app:stabilization}.

\begin{table}[t]
\caption{
Reproducibility parameters for the Gaussian-mixture experimental validation.}
\label{tab:app-reproducibility}
\vspace{0.5em}
\centering
\small
\begin{tabular}{ll}
\toprule
Item & Value / convention \\
\midrule
Endpoint law & Eq.~\eqref{eq:app-gom-endpoint-law} \\
Source law & \(\mathcal N(0,\I)\) \\
Dimension & \(d=2\) \\
Class schedules & fitted separately for each \(y\in\mathcal Y\) \\
Main solver & fixed-grid Euler \\
Solver grid & \(0=t_0<t_1<\cdots<t_T=1\) \\
Reference fitting grid & \(T=\) \\
Generated samples per inference seed & \(N_{\mathrm{gen}}\) \\
Reference samples per inference seed & \(N_{\mathrm{ref}}\) \\
Rollout particles for fitting & \(N\) \\
Endpoint samples for fitting & \(M\) \\
Number of test functions & \(L\) \\
Guidance clipping interval & \([\omega_{\min},\omega_{\max}]\) \\
Seed aggregation & mean \(\pm\) sample std over inference seeds \\
\bottomrule
\end{tabular}
\end{table}

Randomness is split into fitting randomness and inference randomness.  Fitting seeds determine the rollout particles, endpoint samples, and test-function draws used to estimate the schedule. Inference seeds determine the evaluation latents.  Within each inference seed and class, all compared methods receive the same initial latents \(x_0\).

For a piecewise-constant schedule \(\bar\omega_i\) on
\([t_i,t_{i+1})\), sampling uses
\begin{equation}
x_{i+1}
=
x_i
+
(t_{i+1}-t_i)
v_{t_i}^{\theta}(x_i\mid y;\bar\omega_i).
\label{eq:app-euler-update}
\end{equation}

For FM paths, the grid ends at \(t=1\).  For VP paths, the grid ends at
\(1-10^{-5}\).  All compared methods use matched initial latents within each
class and inference seed.

\subsection{Metric definitions}
\label{app:metric-definitions}

Let \(A=\{\hat x_i\}_{i=1}^{N_A}\) denote generated samples and
\(B=\{x_j\}_{j=1}^{N_B}\) denote reference samples.  Metrics are computed per
class and then averaged using the class prior \(\rho_y\).  For every reported
table or curve, the error bar is the sample standard deviation over
inference seeds.

\paragraph{Gaussian-fit KL.}
Let \((\hat\mu,\hat\Sigma)\) be the empirical mean and covariance of \(A\), and
let \((\mu,\Sigma)\) be the reference mean and covariance.  We report
\begin{equation}
\mathrm{KL}(A\|B)
=
\frac12
\left[
\operatorname{tr}(\Sigma^{-1}\hat\Sigma)
+
(\mu-\hat\mu)^\top\Sigma^{-1}(\mu-\hat\mu)
-d
+
\log\frac{\det\Sigma}{\det\hat\Sigma}
\right].
\label{eq:app-kl}
\end{equation}

\paragraph{Gaussian \(W_2\).}
We report the Gaussian Wasserstein distance
\begin{equation}
\mathrm{W}_2^2(A,B)
=
\norm{\hat\mu-\mu}_2^2
+
\operatorname{tr}
\left(
\hat\Sigma+\Sigma
-
2(\Sigma^{1/2}\hat\Sigma\Sigma^{1/2})^{1/2}
\right).
\label{eq:app-w2}
\end{equation}

\paragraph{MMD-RBF.}
Let
\(
\{\hat x^{(n)}\}_{n=1}^{N_{\mathrm{gen}}}
\)
denote the generated endpoint samples for a fixed inference seed, and let
\(
\{x^{(m)}\}_{m=1}^{N_{\mathrm{ref}}}
\)
denote the corresponding reference endpoint samples. We report
\begin{equation}
\mathrm{MMD}_{\mathrm{RBF}}^2
=
\frac{\sum_{n,n'=1}^{N_{\mathrm{gen}}}
k(\hat x^{(n)},\hat x^{(n')})}{N_{\mathrm{gen}}^2}
+
\frac{\sum_{m,m'=1}^{N_{\mathrm{ref}}}
k(x^{(m)},x^{(m')})}{N_{\mathrm{ref}}^2}
-
\frac{2\sum_{n=1}^{N_{\mathrm{gen}}}
\sum_{m=1}^{N_{\mathrm{ref}}}
k(\hat x^{(n)},x^{(m)})}{N_{\mathrm{gen}}N_{\mathrm{ref}}}
,
\label{eq:app-mmd}
\end{equation}
where \(k\) is the RBF kernel. The default kernel is the multi-bandwidth RBF kernel
\begin{equation}
k(u,v)
=
\frac13
\sum_{\alpha\in\{0.25,0.5,1.0\}}
\exp\left(
-\frac{\norm{u-v}_2^2}{2(\alpha\sigma_y)^2}
\right),
\label{eq:app-multibandwidth-kernel}
\end{equation}
where \(\sigma_y\) is computed once per class from reference samples using the
median heuristic.  The same bandwidth is used for all methods, guidance scales,
and inference seeds within a class.

\subsection{Data-generating law}
\label{app:data-law}

We use the analytic two-class Gaussian-mixture.  The endpoint law is
\begin{equation}
q(x_1)
=
\sum_{y\in\{0,1\}}\rho_y q(x_1\mid y),
\qquad
q(x_1\mid y)=\mathcal N(\mu_y,\Sigma_y),
\label{eq:app-gom-endpoint-law}
\end{equation}
with
\begin{equation}
\rho_0=\rho_1=\frac12,\qquad
\mu_0=(-2,0),\qquad
\mu_1=(2,0),\qquad
\Sigma_0=\Sigma_1=I_2 .
\end{equation}
The source law is \(p_0=\mathcal N(0,I_2)\), independent of \(y\).  Samples are
drawn directly from the analytic law. No saved dataset files or train/validation
splits are used.  Fitting and evaluation samples are generated with disjoint
random seeds.

\subsection{Flow variants and backbone}
\label{app:flow-backbone-details}

We evaluate the path variants defined in Appendix~\ref{app:conditional-paths}. I-CFM uses \(\sigma_{\mathrm{icfm}}=10^{-3}\). VP uses \(\beta_{\min}=0.1\), \(\beta_{\max}=20.0\), and endpoint truncation \(t_{\max}=1-10^{-5}\). All models were trained using implementations adapted from the torchcfm \citep{tong2024minibatchot} library; see Appendix~\ref{app:licenses}. We make only the minimal modification needed to pass the class label as an additional network input. All reported methods use frozen trained velocity fields. The backbone is a conditional MLP in \(d=2\), with hidden width \(64\), \(3\) layers, time embedding dimension \(2\), and class conditioning using one hot encoding \citep{potdar2017comparative} of the input label.
Conditional and unconditional predictions share one network trained with
null-label dropout probability \(p_{\mathrm{uncond}}=0.2\).  Training minimizes
the flow-matching regression loss \citep{zheng2023guidedflows, lipman2023flowmatching}
\(
\|v_\theta(t,x,y)-u_t(x\mid y)\|_2^2 .
\)
We use Adam optimizer \citep{kingma2015adam} with learning rate \(10^{-3}\), batch size \(256\), gradient clipping at norm \(1.0\), no EMA. All models are trained for \(20{,}000\) iterations.

The backbone for the MNIST is a conditional convolutional UNet on grayscale MNIST \citep{mnist} images \(x\in\mathbb{R}^{1\times 28\times 28}\) with base channel width \(64\), two down/up stages plus a middle block, sinusoidal time embedding of dimension \(128\) processed by a small time MLP, and class conditioning via a learned embedding of dimension \(32\) concatenated to the time features for FiLM-style modulation in residual blocks \citep{perez2018film}. We use two networks: a class-conditional UNet as above, and a second UNet with identical width and stage layout but no class input, trained separately to match the same flow target without conditioning. Training minimizes the flow-matching regression loss with the \(\ell_2\) norm taken over the pixel/channel dimensions of the velocity. All models are trained for \(100{,}000\) iterations.

\subsection{Baseline sweeps}
\label{app:baseline-sweeps}

All baseline comparisons use the same reference grid, matched latents, and inference seeds as the proposed schedule. For each method, the baseline values reported in the sweep tables correspond to the best configurations for that method. The best configuration is selected by the lowest average rank over the evaluated metrics; thus, only the best configurations for each method are included in these tables, with respect to the corresponding Cartesian product in Table~\ref{tab:app-baseline-sweeps}.

\begin{table}[t]
\caption{Baseline sweep grids.}
\label{tab:app-baseline-sweeps}
\vspace{0.5em}
\centering
\small
\begin{tabular}{ll}
\toprule
Method & Sweep grid \\
\midrule
Plain CFG \citep{zheng2023guidedflows}
&
\(\omega\in\{1.0,1.25,1.5,2.0,3.0\}\)
\\
CFG-Zero\({}^\ast\) \citep{fan2025cfgzerostar}
&
\(\omega\in\{1.0,1.25,1.5,2.0,3.0\}\), \(K_{\mathrm{zero}}\in\{1,2\}\)
\\
CFG-MP \citep{cai2026cfgmp}
&
\(\omega\in\{1.0,1.25,1.5,2.0,3.0\}\), \(K_{\mathrm{proj}}\in\{1,2\}\) (GM); \(K_{\mathrm{proj}}=3\) (MNIST)
\\
Rectified-CFG++ \citep{saini2025rectifiedcfgflowbasedmodels}
&
\(\lambda_{\max}\in\{0.5,1.0,2.0,3.0\}\), \(\sigma\in\{0.0,0.05\}\)
\\
\bottomrule
\end{tabular}
\end{table}

For the proposed method, every pair of fitting seed and test-function seed gives one fitted per-class schedule.  The Gaussian-mixture tables use the schedule fitted with the first fitting seed. For MNIST (Table~\ref{tab:mnist-baseline-comparison}) we apply to our method the same selection rule used for every baseline, reporting the fitting seed with the lowest mean FID over the three evaluation seeds; all three fitting seeds are listed in Table~\ref{tab:mnist-baseline-sweeps-t50}, and each of them is below the best baseline. Schedule-variance figures report variability over all fitting seeds.

\subsection{Test functions}
\label{app:test-functions}

The default weak-form objective uses polynomial test functions up to degree two,
\begin{equation}
\Psi_{\mathrm{poly},2}
=
\left\{
x\mapsto a^\top x
\right\}
\cup
\left\{
x\mapsto \frac12 x^\top A x
\right\},
\label{eq:app-poly-tests}
\end{equation}
so that the test family in Eq~\eqref{eq:multi-test-family} is formed by sampling $a \in \mathbb{R}^d$ and $A \in \mathbb{R}^{d \times d}$ from Gaussian ensembles $L$ times. To align with the theoretical requirement for compact support, each function is multiplied by a smooth $C^\infty$ bump $\eta_R(x)$ that vanishes outside a ball of radius $R+1$. In practice, we set $R=10^4$ so that $\eta_R \equiv 1$ for all samples; thus, the truncation is a mathematical formality that does not affect numerical results. The
same test-function family and seed convention are used for all compared methods
that require a fitted schedule. Figure~\ref{fig:test-function-sweep} isolates the effect of varying \(L\), while the ablation in Appendix~\ref{app:ablations} (Table~\ref{tab:ablation-main}) decomposes \(\Psi_{\mathrm{poly},2}\).

\begin{figure*}[h!]
    \centering
    \includegraphics[width=\linewidth]{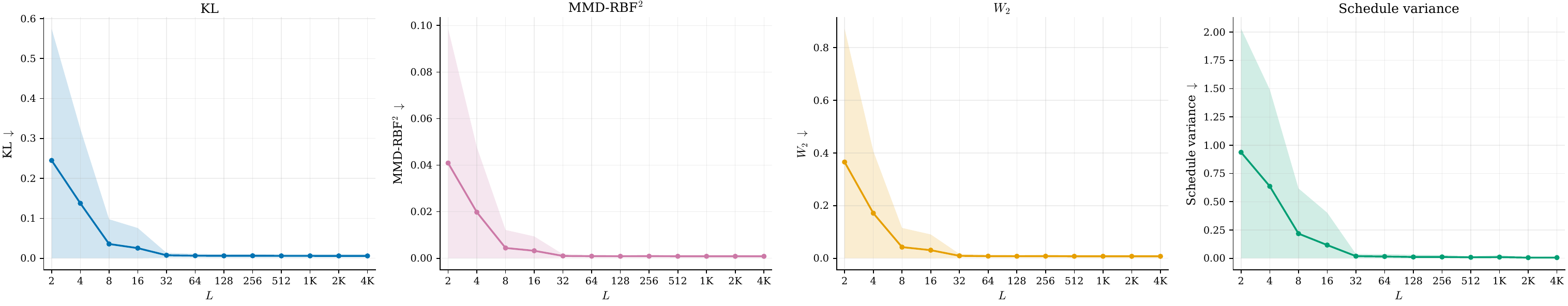}
    \caption{
    Sensitivity analysis of \(L\), the number of test functions used to estimate the selector. This graph aggregates over all flow variants, classes, fitting and inference seeds. Shaded regions show one standard deviation over fitting seeds. Downstream metrics and schedule variance stabilize at moderate \(L\).
    }
    \label{fig:test-function-sweep}
\end{figure*}

The family is interpretable and its blind spots are explicit: linear tests detect mean-velocity
mismatch, and quadratic tests additionally target the symmetric first spatial moment governing
second-moment evolution. In a Gaussian detectability check, injected mean and symmetric-linear
mismatches are detected \(2\)-\(4\) orders of magnitude above Monte Carlo error, whereas an
orthogonal third-Hermite mismatch \((x_1^3-3x_1)e_1\) is nearly invisible. Test functions should
therefore be chosen for the data modality: low-degree polynomials are appropriate where low-order
moments are informative, and richer families are needed for higher-order or structured discrepancies.

Sensitivity to the random probe draw is low. Holding the fitting latents, rollout noise, and inference
latents bit-identical, independent draws at \(L=2^{12}\) give mean schedule standard deviations
\(0.00033\) (linear), \(0.00207\) (mixed), and \(0.00660\) (quadratic-only), so the default mixed
family is highly reproducible.

Table~\ref{tab:probe-allocation} reallocates the same budget \(L=2^{12}\) across linear, quadratic,
and cubic probes on MNIST, paired against the default \((2048,2048,0)\) mixed family. All runs use
identical checkpoints, fitting particles, endpoint-sampling streams, solver grids, and evaluation
latents; only the probe allocation differs. Adding cubic probes yields at most modest gains while
keeping the fitted schedules close to the default, whereas an all-cubic family fails outright,
supporting the mixed default.

\begin{table}[h!]
\centering\small
\caption{Equal-budget probe allocations on MNIST. Paired \(\Delta\)FID is variant minus default; mean
schedule difference is the mean absolute scale difference over the \(50\) solver intervals.}
\label{tab:probe-allocation}
\vspace{0.5em}
\begin{tabular}{llccc}
\toprule
\((L_{\rm lin},L_{\rm quad},L_{\rm cubic})\) & Flow & Paired \(\Delta\)FID & 95\% CI & Mean schedule diff. \\
\midrule
\((2048,1024,1024)\) & RF & \(-0.0011\) & \([-0.0047,0.0038]\) & \(0.0033\pm0.0038\) \\
\((2048,1024,1024)\) & OT & \(-0.0220\) & \([-0.0321,-0.0133]\) & \(0.0045\pm0.0057\) \\
\((1024,2048,1024)\) & RF & \(-0.2499\) & \([-0.3860,-0.0626]\) & \(0.0103\pm0.0142\) \\
\((1024,2048,1024)\) & OT & \(-0.5663\) & \([-1.1632,0.1931]\) & \(0.0123\pm0.0156\) \\
\((1024,1024,2048)\) & RF & \(-0.2544\) & \([-0.3980,-0.0702]\) & \(0.0100\pm0.0131\) \\
\((1024,1024,2048)\) & OT & \(-0.5924\) & \([-1.1851,0.1669]\) & \(0.0144\pm0.0176\) \\
\((0,0,4096)\)       & RF & \(+64.8805\) & \([63.6820,66.1250]\) & \(1.7277\pm0.2654\) \\
\((0,0,4096)\)       & OT & \(+33.5259\) & \([31.8922,35.6230]\) & \(1.8590\pm0.7304\) \\
\bottomrule
\end{tabular}
\end{table}

\subsection{Numerical stabilization}
\label{app:stabilization}

The interval-wise selector solves a one-dimensional least-squares problem with
empirical coefficients \(\widetilde A_i,\widetilde B_i\).
The solution quotient~\eqref{eq:app-practical-multitest-selector} can become unstable when the empirical CFG direction is nearly
zero.  In that case, changing \(\omega_i\) has little effect on the local
objective, so the minimizer is poorly identified and small Monte Carlo errors in
\(\widetilde A_i\) may produce large jumps in the selected scale.

To stabilize only this numerical division, we define
\begin{equation}
    \widetilde\beta_i := \|\widetilde B_i\|_2^2,
    \qquad
    \widetilde\beta_i^\star := \max_{0\le j<i}\widetilde\beta_j,
\end{equation}
and replace the denominator by the running-max floor so that the implemented update is
\begin{equation}
    \widehat\omega_i
    =
    \frac{\ip{\widetilde A_i}{\widetilde B_i}}
    {\max\left\{
        \widetilde\beta_i,\,
        \eta\,\widetilde\beta_i^\star
    \right\}}
\end{equation}
In all experiments using the beta floor, we set
\(\eta= 10^{-2}\).  The floor is relative rather than absolute, so it
adapts to the scale of the fitted objective coefficients.  When inactive, the original closed-form selector is recovered exactly, when active, it only shrinks the raw quotient before projection.  The denominator-stabilization ablation in
Appendix~\ref{app:ablations} (Table~\ref{tab:ablation-main}) isolates the effect of varying \(\eta\).

Over the \(19{,}800\) selector updates in the reported runs, the floor is active in \(24.10\%\) of
updates and the projection returns the lower bound in \(53.34\%\). Floor activation is \(0\%\) on
MNIST, \(6.6\)-\(9.3\%\) on VP, and \(\approx 35\%\) on the remaining Gaussian-mixture flows.
Because \(\omega_{\min}=1\) is the conditional-only field, an interval projected to the bound applies
\emph{no} CFG extrapolation: on those intervals the selector determines that the conditional field
alone best matches the target transport. We adopt \(\omega_{\min}=1\) as an empirical stabilization
constraint; Appendix~\ref{app:omega-min} measures the effect of relaxing it.

\section{Ablation Study}
\label{app:ablations}

We evaluate the core components of our selector using a one-factor-at-a-time protocol across all flow variants. The default configuration uses $T=100$ intervals, $M=2^{14}$ endpoint samples, $N=2^{14}$ particles, and $L_{\mathrm{test}}=2^{12}$ mixed linear-quadratic test functions. Table~\ref{tab:ablation-main} summarizes the aggregate results over all flow variants. We ablate four primary axes to justify our design choices:

\begin{itemize}[leftmargin=*]
    \item \textbf{Schedule Reusability:} We compare \textit{online} selection against \textit{offline} reuse on fresh particles. This validates our method as an efficient "fit-once, use-many" procedure.
    \item \textbf{Endpoint Weighting:} We isolate the impact of \textit{posterior weighting} against naive uniform averaging, demonstrating that accounting for the endpoint posterior structure significantly improves guidance fidelity.
    \item \textbf{Denominator Stabilization:} We test the impact of the stability floor $\eta$. Table~\ref{tab:ablation-main} shows that while the floor has minimal effect in well-conditioned regimes, it is essential for preventing guidance spikes when the denominator vanishes.
    \item \textbf{Test-Function Family:} We evaluate linear, quadratic, and mixed families at a fixed computational budget. The mixed family provides the best balance, capturing both first- and second-moment mismatches.
\end{itemize}

Finally, we perform a sensitivity analysis on the number of test functions $L \in \{2^{i}\}_{i=1}^{12}$, visualized in Figure~\ref{fig:test-function-sweep}. We further analyze the sensitivity to the number of endpoint samples \(M\) and rollout particles \(N\) used by the schedule, with results shown in the heatmaps in Figure~\ref{fig:app-gm-mn-heatmaps}.

\begin{table*}[t]
  \caption{Aggregate ablation results over all flow variants. For each ablation variant and metric, we pool all raw measurements across the four flow backbones (RF, I-CFM, OT, and VP), fitting seeds, evaluation seeds, and classes. Each cell reports mean $\pm$ sample standard deviation. Lower is better.
}
  \label{tab:ablation-main}
  \centering
  \footnotesize
  \setlength{\tabcolsep}{3pt}
  \renewcommand{\arraystretch}{1.12}
  \begin{tabular}{llcccc}
    \toprule
    Ablation & Variant
    & \(\mathrm{KL}\downarrow\)
    & \(\mathrm{W}_2\downarrow\)
    & \(\mathrm{MMD}\downarrow\)
    & Time\(\downarrow\) \\
    \midrule
    reuse & online & 0.0059 $\pm$ 0.0041 & 0.0075 $\pm$ 0.0048 & 0.0009 $\pm$ 0.0005 & 13.58 $\pm$ 5.67 \\
     & offline, same $T$ & 0.0060 $\pm$ 0.0042 & 0.0076 $\pm$ 0.0049 & 0.0009 $\pm$ 0.0005 & 13.58 $\pm$ 5.66 \\
    \midrule
    weights & posterior & 0.0059 $\pm$ 0.0041 & 0.0074 $\pm$ 0.0048 & 0.0008 $\pm$ 0.0005 & 13.51 $\pm$ 5.72 \\
     & uniform & 0.0060 $\pm$ 0.0042 & 0.0075 $\pm$ 0.0049 & 0.0009 $\pm$ 0.0005 & 13.42 $\pm$ 5.61 \\
    \midrule
    stability & floor off & 0.0065 $\pm$ 0.0048 & 0.0079 $\pm$ 0.0053 & 0.0009 $\pm$ 0.0006 & 13.52 $\pm$ 5.73 \\
     & floor on, $\eta=10^{-2}$ & 0.0059 $\pm$ 0.0041 & 0.0075 $\pm$ 0.0048 & 0.0009 $\pm$ 0.0005 & 13.57 $\pm$ 5.60 \\
     & floor on, $\eta=10^{-3}$ & 0.0061 $\pm$ 0.0044 & 0.0075 $\pm$ 0.0049 & 0.0009 $\pm$ 0.0005 & 13.53 $\pm$ 5.63 \\
    \midrule
    tests & linear & 0.0064 $\pm$ 0.0041 & 0.0081 $\pm$ 0.0051 & 0.0009 $\pm$ 0.0005 & 13.25 $\pm$ 5.64 \\
     & quadratic & 0.3476 $\pm$ 0.3205 & 0.5253 $\pm$ 0.4899 & 0.0591 $\pm$ 0.0554 & 13.26 $\pm$ 5.63 \\
     & linear + quadratic & 0.0059 $\pm$ 0.0041 & 0.0074 $\pm$ 0.0048 & 0.0008 $\pm$ 0.0005 & 13.51 $\pm$ 5.72 \\
    \bottomrule
  \end{tabular}
\end{table*}

\begin{figure*}[h!]
    \centering
    \includegraphics[width=\linewidth]{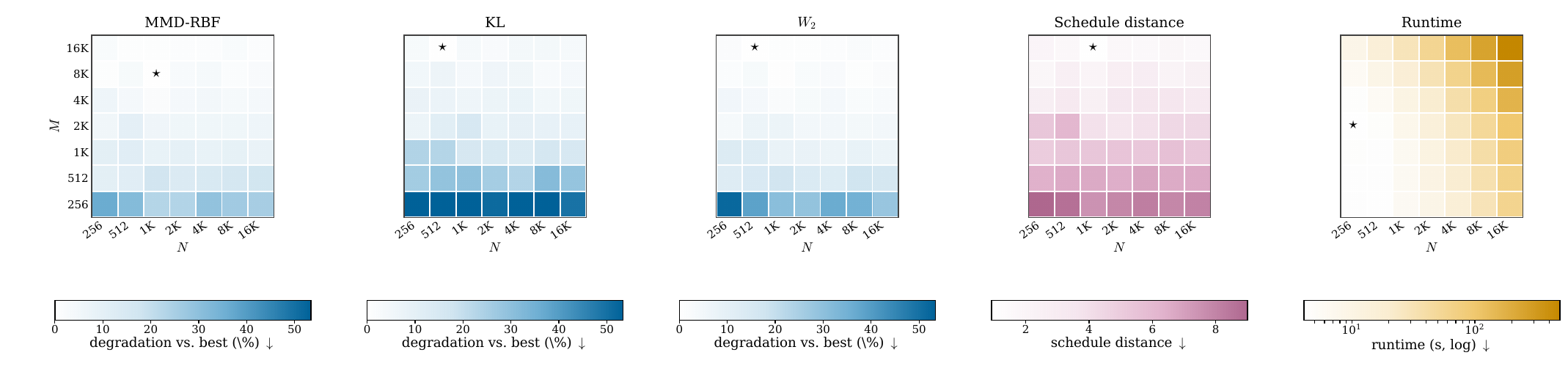}
    \caption{
    Full \((M,N)\) sweep for downstream metrics.
    Each cell shows the percentage gap from the best observed budget for that metric; lighter is better and the star marks the best cell.
    The heatmaps show that increasing \(M\) gives the main early improvement, while very large \(N\) has smaller marginal effect once \(M\) is moderate.
    }
    \label{fig:app-gm-mn-heatmaps}
\end{figure*}

\begin{figure*}[t]
    \centering

    \begin{minipage}[t]{0.9\linewidth}
        \centering
        \includegraphics[width=\linewidth]{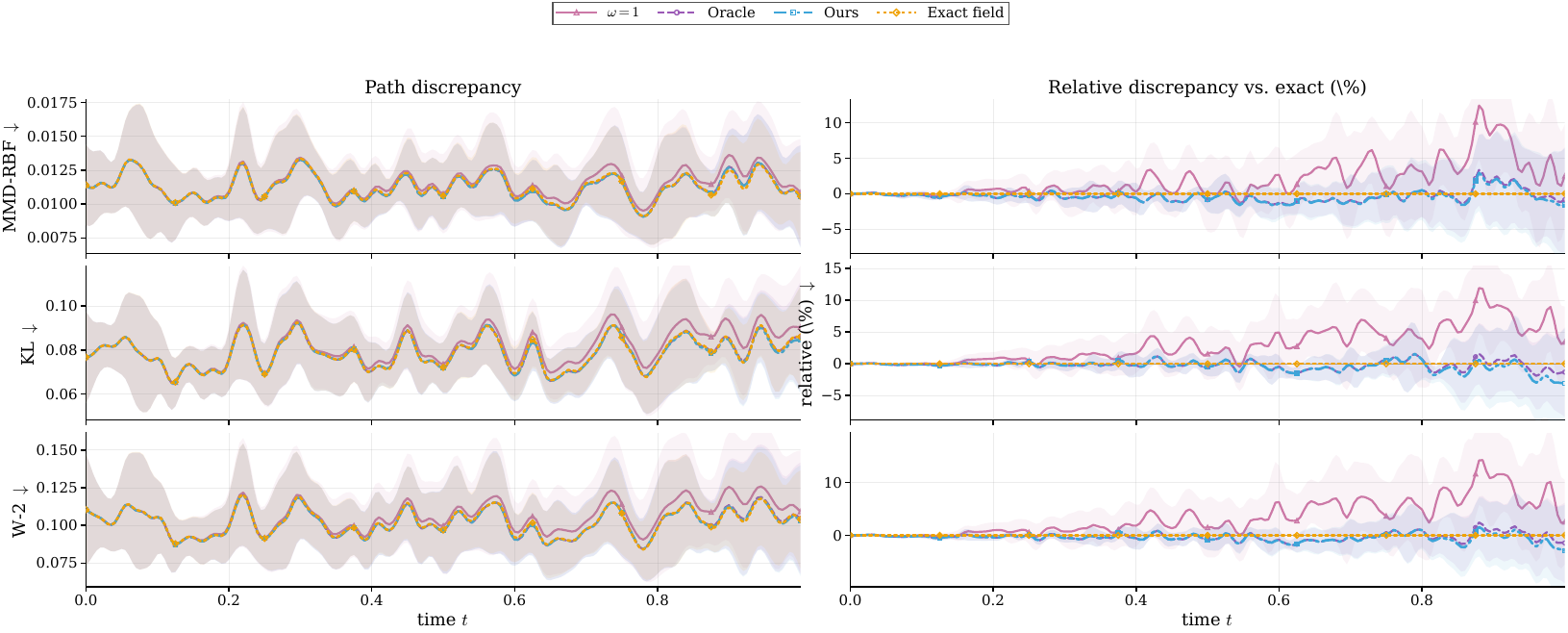}\\[-0.2em]
        (a) VP
    \end{minipage}

    \vspace{0.6em}

    \begin{minipage}[t]{0.9\linewidth}
        \centering
        \includegraphics[width=\linewidth]{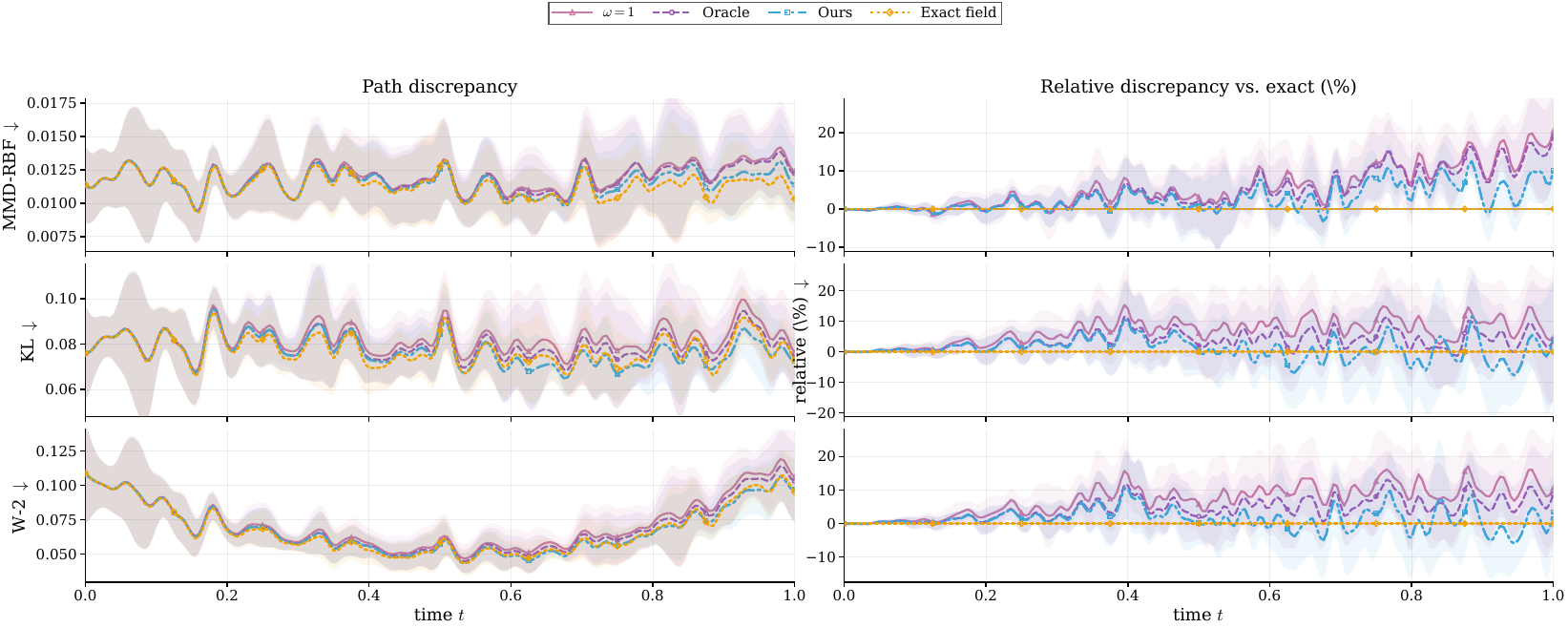}\\[-0.2em]
        (b) I-CFM
    \end{minipage}

    \vspace{0.35em}
    \caption{
    Online path discrepancy to the exact conditional marginal \(p_{t_i}(\cdot\mid y)\), measured along
    \(\hat p_{t_i}(\cdot\mid y;\vec\omega_i)\) by
    \(\mathrm{MMD}\text{-}\mathrm{RBF}^2\), \(\mathrm{KL}\), and \(\mathrm{W}_2\).
    Lower is better. Panel (a) shows VP, and panel (b) shows I-CFM.
    The true-velocity rollout provides a discretization reference. In both cases, the practical/oracle selector
    stays closer to this reference than the learned conditional rollout \(\omega\equiv1\), indicating
    improved path alignment.
    }
    \label{fig:online-path-alignment-vp-icfm}
\end{figure*}

\subsection{Does the finite residual track path discrepancy?}
\label{app:residual-diagnostic}
In Table~\ref{tab:residual-diagnostic} we empirically test the link between the approximated residual and the real path discrepancy. We hold
the interval \(t_i\), the rollout history, and the particle state fixed, vary only \(\omega\) over
\(26\) candidates in \([1,5]\), evaluate the held-out residual \(J_{r,i}(\omega)\), propagate one
matched Euler step, and measure the next-step discrepancy \(D_{r,i+1}(\omega)\). We report
\(\rho_{r,i}=\operatorname{Spearman}(J_{r,i}(\omega),D_{r,i+1}(\omega))\), averaged over
\(18\times20=360\) cells per flow, with 95\% CIs from a run-clustered bootstrap. 

\begin{table}[h!]
\centering\small
\caption{Correlation between the held-out finite residual and the resulting path discrepancy, at
fixed time and fixed rollout state. Mean Spearman correlation [95\% CI].}
\label{tab:residual-diagnostic}
\vspace{0.5em}
\begin{tabular}{lccccc}
\toprule
Flow & MMD & \(W_2\) & KL & Mean error & Cov.\ error \\
\midrule
RF    & \(0.57\,[0.49,0.65]\) & \(0.59\,[0.52,0.65]\) & \(0.55\,[0.48,0.62]\) & \(0.61\,[0.47,0.74]\) & \(0.47\,[0.40,0.54]\) \\
I-CFM & \(0.66\,[0.56,0.77]\) & \(0.65\,[0.55,0.74]\) & \(0.64\,[0.53,0.75]\) & \(0.62\,[0.47,0.76]\) & \(0.58\,[0.48,0.68]\) \\
OT    & \(0.50\,[0.38,0.61]\) & \(0.55\,[0.46,0.65]\) & \(0.56\,[0.45,0.66]\) & \(0.59\,[0.47,0.71]\) & \(0.41\,[0.30,0.53]\) \\
VP    & \(0.46\,[0.23,0.67]\) & \(0.50\,[0.28,0.70]\) & \(0.54\,[0.33,0.72]\) & \(0.16\,[-0.13,0.45]\) & \(0.44\,[0.26,0.60]\) \\
\bottomrule
\end{tabular}
\end{table}

\subsection{What the weak form contributes: pointwise projection control}
\label{app:pointwise-control}
Given the same estimated endpoint-conditioned information, seeds, and stabilization, we compare
PathGuide against selecting the scalar by a direct pointwise projection of the estimated conditional
velocity onto the CFG direction, See Table~\ref{tab:pointwise-control}.
With identical endpoint information, the on-policy weak-form selector wins on RF, I-CFM, and OT
across all three metrics and ties on VP. The gain is therefore attributable to the criterion, not to
the endpoint-conditioned estimate itself.

\begin{table}[h!]
\centering\small
\caption{Pointwise projection versus the weak-form selector under identical endpoint information. Mean \(\pm\) std over 18 paired settings; lower is better.}
\label{tab:pointwise-control}
\vspace{0.5em}
\setlength{\tabcolsep}{4pt}
\begin{tabular}{lcc}
\toprule
Flow & Pointwise projection: \(W_2^2\) / KL / MMD\(^2\) & PathGuide: \(W_2^2\) / KL / MMD\(^2\) \\
\midrule
RF    & \(.0592{\pm}.0034\) / \(.0444{\pm}.0050\) / \(.00532{\pm}.00071\) & \(.0135{\pm}.0056\) / \(.0109{\pm}.0040\) / \(.00180{\pm}.00065\) \\
I-CFM & \(.0580{\pm}.0034\) / \(.0440{\pm}.0045\) / \(.00539{\pm}.00069\) & \(.0130{\pm}.0055\) / \(.0108{\pm}.0039\) / \(.00178{\pm}.00065\) \\
OT    & \(.0441{\pm}.0214\) / \(.0329{\pm}.0178\) / \(.00433{\pm}.00134\) & \(.0069{\pm}.0046\) / \(.0053{\pm}.0028\) / \(.00100{\pm}.00040\) \\
VP    & \(.0056{\pm}.0031\) / \(.0040{\pm}.0022\) / \(.00066{\pm}.00019\) & \(.0058{\pm}.0031\) / \(.0040{\pm}.0024\) / \(.00067{\pm}.00020\) \\
\bottomrule
\end{tabular}
\end{table}

\subsection{Relaxing the guidance lower bound}
\label{app:omega-min}
We rerun the complete fitting procedure with the admissible lower bound relaxed from
\(\omega_{\min}=1\) to \(0\), using the same backbones, fitting and evaluation seeds, latents, solver
grids, and estimator budgets. In Table~\ref{tab:omega-min-relax} we report \(\Delta=\text{relaxed}-\text{default}\), so positive values
indicate worse performance under relaxation. Results average all matched fitting/evaluation-seed
pairs.

\begin{table}[h!]
\centering\small
\caption{Effect of relaxing the admissible lower bound from \(\omega_{\min}=1\) to \(0\). Positive value means \(\omega_{\min}=1\) is better.}
\label{tab:omega-min-relax}
\vspace{0.5em}
\begin{tabular}{lccc}
\toprule
Flow & \(\Delta\)KL \(T{=}200/500\) & \(\Delta W_2\) \(T{=}200/500\) & \(\Delta\)MMD\(^2\) \(T{=}200/500\) \\
\midrule
RF    & \(+.03264/+.03446\) & \(+.04097/+.04310\) & \(+.001005/+.001104\) \\
I-CFM & \(+.01161/+.01247\) & \(+.01303/+.01397\) & \(+.000593/+.000650\) \\
OT    & \(+.00950/+.01033\) & \(+.01060/+.01148\) & \(+.000471/+.000528\) \\
VP    & \(+.000001/+.000006\) & \(+.000139/+.000145\) & \(+.000004/+.000004\) \\
\bottomrule
\end{tabular}

\vspace{0.6em}
\begin{tabular}{lc}
\toprule
MNIST flow & \(\Delta\)FID [95\% CI] \\
\midrule
RF & \(-0.940\,[-1.028,-0.853]\) \\
OT & \(+1.408\,[1.316,1.501]\) \\
\bottomrule
\end{tabular}
\end{table}

\begin{figure*}[t!]
    \centering
    \includegraphics[width=\linewidth]{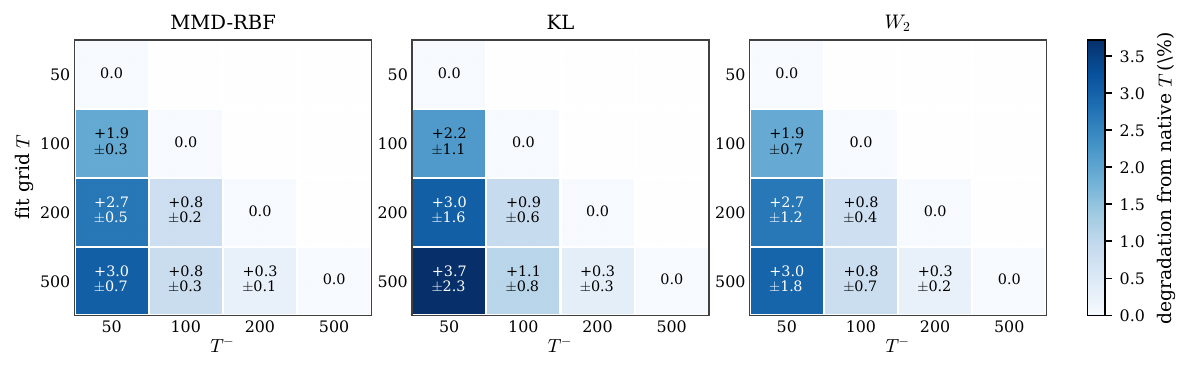}
    \caption{
    Relative degradation under coarser inference.
    Each entry reports the percentage increase in the metric when a schedule fitted at \(T\) is deployed at \(T^{-}\), relative to evaluating the same schedule at its native grid \(T\).
    Lower is better; \(0\%\) corresponds to native-grid evaluation.
    }
    \label{fig:app-gm-infer-at-tminus-degradation}
\end{figure*}

\subsection{Computational cost}
\label{app:computational-cost}

The method has two modes.  In online selection, objective coefficients are
estimated during sampling.  In offline schedule fitting, the schedule is fitted
once for a fixed backbone, class, solver, and test-function family, then
reused.  The main experimental validation uses the offline mode unless explicitly stated.

Let \(C_{\mathrm{cond}}\) and \(C_{\mathrm{uncond}}\) denote the cost of one conditional and unconditional backbone evaluation, and let \(C_{\mathrm{path}}\) denote the cost of evaluating endpoint-conditioned path densities and velocity labels for posterior weighting.  For \(T\) solver intervals, \(N\) rollout particles, \(M\) endpoint samples, and \(L\) test functions, the dominant offline fitting cost scales as
\begin{equation}
\mathcal O\!\left(
T N
\left[
C_{\mathrm{cond}}+C_{\mathrm{uncond}}
+
M C_{\mathrm{path}}
+
L d
\right]
\right).
\label{eq:app-fit-cost}
\end{equation}
After a schedule is fitted, deployment has the same backbone-evaluation cost as
standard CFG on the same solver grid:
\begin{equation}
\mathcal O\!\left(
T N_{\mathrm{gen}}
\left[
C_{\mathrm{cond}}+C_{\mathrm{uncond}}
\right]
\right).
\label{eq:app-deploy-cost}
\end{equation}

\begin{table}[h!]
\caption{
Cost accounting for fitted schedules.  Offline fitting is amortized across
future deployments; deployment uses the stored schedule and therefore has the
same NFE-equivalent as standard CFG.}
\label{tab:app-computational-cost}
\vspace{0.5em}
\centering
\small
\begin{tabular}{lllll}
\toprule
Method & Offline fit cost & Sampling cost & Memory & NFE-equivalent \\
\midrule
Conditional-only
&
none
&
\(T C_{\mathrm{cond}}\)
&
\(\mathcal O(N_{\mathrm{gen}}d)\)
&
\(T\)
\\
Constant CFG
&
sweep over \(\Omega_{\mathrm{CFG}}\)
&
\(T(C_{\mathrm{cond}}+C_{\mathrm{uncond}})\)
&
\(\mathcal O(N_{\mathrm{gen}}d)\)
&
\(2T\)
\\
Proposed, offline
&
Eq.~\eqref{eq:app-fit-cost}
&
\(T(C_{\mathrm{cond}}+C_{\mathrm{uncond}})\)
&
\(\mathcal O((N+M)d)\)
&
\(2T\)
\\
Proposed, online
&
none
&
Eq.~\eqref{eq:app-fit-cost} during sampling
&
\(\mathcal O((N+M)d)\)
&
\(2T\) plus estimator work
\\
\bottomrule
\end{tabular}
\end{table}

\begin{table}[h!]
\centering
\caption{Measured cost per workload. Each cell reports wall-clock time / peak memory / backbone NFE.
Deployment with a stored schedule matches plain CFG on all three; only fitting is additional, and it
is amortized across later deployments.}
\label{tab:app-measured-cost}
\vspace{0.5em}
\small
\setlength{\tabcolsep}{5pt}
\begin{tabular}{lccc}
\toprule
Workload & \(\omega\equiv1\) & CFG / stored PathGuide & PathGuide fitting \\
\midrule
GMM, \(N=M=2^{14}\), \(L=2^{12}\), \(T=500\)
& 10.0\,s / 34\,MB / \(T\)
& 19.7\,s / 34\,MB / \(2T\)
& 137\,s / 333\,MB / \(2T\) \\
MNIST, \(N=2^{12}\), \(M=2^{11}\), \(L=2^{12}\), \(T=50\)
& 29.9\,s / 315\,MB / \(T\)
& 59.6\,s / 327\,MB / \(2T\)
& 230\,s / 578\,MB / \(2T\) \\
\bottomrule
\end{tabular}
\end{table}

The estimator reuses the conditional and unconditional backbone evaluations already required by the
ODE step, so it adds no NFE; its overhead is posterior and test-function estimation. One fit costs
\(\approx 7\) GMM or \(\approx 4\) MNIST CFG generations, and amortizes against repeated online
selection after two stored-schedule deployments.

\paragraph{Experiments compute resources.}
All experiments were run on a single workstation equipped with six NVIDIA RTX~2080~Ti GPUs,
using CUDA~12.4, PyTorch~2.6.0+cu124, and Python~3.9.20. Table~\ref{tab:compute-resources}
summarizes the aggregate compute budget across all completed experiment runs used in the paper.

\begin{table}[h!]
\centering
\caption{Compute resources for the experiment families reported in the paper. All experiments were run on the same workstation. Runtime is reported as accumulated single-GPU time.}
\label{tab:compute-resources}
\vspace{0.5em}
\begin{tabular}{lrrr}
\toprule
Experiment family & Runs & Total time & Mean time \\
\midrule
  Baseline sweeps & 163 & 226.9h & 1.4h \\
  Correction-field diagnostics & 173 & 27.8h & 9.6min \\
  Online path alignment & 11 & 22.8h & 2.1h \\
  Fit-at-\(T\), infer-at-coarser-\(T\) & 136 & 145.3h & 1.1h \\
  Monte Carlo scaling & 87 & 236.5h & 2.7h \\
  Schedule heatmap overlays & 86 & 60.3h & 42.1min \\
  Fitting-resolution sweep & 81 & 70.9h & 52.5min \\
  MNIST velocity training & 2 & 8.8h & 4.4h \\
  Gaussian Mixture velocity training & 4 & 8.0min & 2.0min \\
\midrule
  \textbf{Total reported} & \textbf{743} & \textbf{799.4h} & \textbf{1.1h} \\
\bottomrule
\end{tabular}
\vspace{0.75em}
\end{table}

\section{Licenses}
\label{app:licenses}

This project utilizes the following third-party libraries and datasets:

\begin{itemize}
    \item \textbf{torchcfm} (v1.0.7): \citet{tong2023simulation, tong2024minibatchot}, MIT License. Used for flow matching and OT-CFM training. \url{https://github.com}
    \item \textbf{MNIST Dataset}: \citet{mnist}, CC BY-SA 3.0 License. Used for image experiments. \url{http://lecun.com}
    \item \textbf{PyTorch}: \citet{pytorch}, BSD-3-Clause License. Base framework. \url{https://pytorch.org}
    \item \textbf{POT (Python Optimal Transport)}: \citet{flamary2021pot}, MIT License. Used for $W_2$ computation. \url{https://github.io}
    \item \textbf{torch-fidelity}: \citet{obukhov2020torchfidelity}, Apache-2.0 License. Used for FID computation. \url{https://github.com}
\end{itemize}

\section{Full tables}
\label{app:baseline-sweeps-tables}

\begin{table*}[t]
    \caption{
    Appendix sweep results for baseline configurations at \(T=200\).
    For each flow variant and method, we report up to three configurations from the hyperparameter sweep.
    Lower is better for all metrics.
    All entries are evaluated over 3 inference seeds using \(2^{14}\) generated samples per seed, and report mean \(\pm\) standard deviation.
    The mean is averaged over classes per evaluation seed; the standard deviation is the sample std over the 3 per-seed averages.
    }
    \label{tab:baseline-sweeps-t200}
    \vspace{0.5em}
    \centering
    \scriptsize
    \setlength{\tabcolsep}{3pt}
    \renewcommand{\arraystretch}{1.12}
    \begin{tabular}{@{}llp{0.32\textwidth}ccc@{}}
      \toprule
      Variant & Method & Configuration
        & KL\(\downarrow\)
        & \(W_2\downarrow\)
        & MMD\(\downarrow\) \\
      \midrule

      RF & Plain CFG
        & \(\omega=1\)
        & \(0.0044_{0.0013}\) & \(0.0060_{0.0016}\) & \(0.0006_{0.0001}\) \\
      & & \(\omega=1.25\)
        & \(0.0441_{0.0005}\) & \(0.0657_{0.0025}\) & \(0.0054_{0.0003}\) \\
      & & \(\omega=1.5\)
        & \(0.1298_{0.0015}\) & \(0.1961_{0.0052}\) & \(0.0175_{0.0005}\) \\

      & CFG-Zero\(^*\)
        & \(\omega=1\), \(K_{\mathrm{zero}}=1\)
        & \(0.0046_{0.0013}\) & \(0.0064_{0.0018}\) & \(0.0006_{0.0001}\) \\
      & & \(\omega=1\), \(K_{\mathrm{zero}}=2\)
        & \(0.0050_{0.0014}\) & \(0.0072_{0.0019}\) & \(0.0006_{0.0001}\) \\
      & & \(\omega=1.25\), \(K_{\mathrm{zero}}=2\)
        & \(0.0080_{0.0005}\) & \(0.0127_{0.0013}\) & \(0.0013_{0.0002}\) \\

      & CFG-MP
        & \(\omega=3\), \(K_{\mathrm{proj}}=2\)
        & \(6.2232_{0.0225}\) & \(9.5319_{0.0356}\) & \(0.6234_{0.0018}\) \\
      & & \(\omega=2\), \(K_{\mathrm{proj}}=2\)
        & \(6.1985_{0.0204}\) & \(9.5446_{0.0354}\) & \(0.6355_{0.0018}\) \\
      & & \(\omega=1.25\), \(K_{\mathrm{proj}}=2\)
        & \(6.0096_{0.2804}\) & \(9.5237_{0.1140}\) & \(0.6443_{0.0016}\) \\

      & R-CFG++
        & \(\lambda=0.5\), \(\sigma=0\)
        & \(0.0191_{0.0006}\) & \(0.0248_{0.0006}\) & \(0.0020_{0.0001}\) \\
      & & \(\lambda=0.5\), \(\sigma=0.05\)
        & \(0.0192_{0.0006}\) & \(0.0248_{0.0006}\) & \(0.0020_{0.0001}\) \\
      & & \(\lambda=1\), \(\sigma=0\)
        & \(0.0534_{0.0003}\) & \(0.0688_{0.0017}\) & \(0.0057_{0.0002}\) \\
\cmidrule(lr){3-6}
      & PathGuide (Ours)
        & \(\mathrm{seed}_1\)
        & \boldmath{\(0.0039_{0.0012}\)} & \boldmath{\(0.0056_{0.0013}\)} & \boldmath{\(0.0005_{0.0001}\)} \\
      & & \(\mathrm{seed}_2\)
        & \(0.0039_{0.0011}\) & \(0.0056_{0.0012}\) & \(0.0005_{0.0001}\) \\
      & & \(\mathrm{seed}_3\)
        & \(0.0040_{0.0011}\) & \(0.0058_{0.0011}\) & \(0.0005_{0.0001}\) \\

      \midrule

      I-CFM & Plain CFG
        & \(\omega=1\)
        & \(0.0084_{0.0021}\) & \(0.0112_{0.0029}\) & \(0.0012_{0.0003}\) \\
      & & \(\omega=1.25\)
        & \(0.0423_{0.0004}\) & \(0.0645_{0.0018}\) & \(0.0056_{0.0002}\) \\
      & & \(\omega=1.5\)
        & \(0.1220_{0.0008}\) & \(0.1898_{0.0043}\) & \(0.0172_{0.0004}\) \\

      & CFG-Zero\(^*\)
        & \(\omega=1\), \(K_{\mathrm{zero}}=1\)
        & \(0.0084_{0.0022}\) & \(0.0114_{0.0030}\) & \(0.0012_{0.0003}\) \\
      & & \(\omega=1\), \(K_{\mathrm{zero}}=2\)
        & \(0.0086_{0.0022}\) & \(0.0119_{0.0031}\) & \(0.0012_{0.0003}\) \\
      & & \(\omega=1.25\), \(K_{\mathrm{zero}}=2\)
        & \(0.0101_{0.0010}\) & \(0.0157_{0.0016}\) & \(0.0018_{0.0003}\) \\

      & CFG-MP
        & \(\omega=1\), \(K_{\mathrm{proj}}=1\)
        & \(0.1197_{0.0007}\) & \(0.1853_{0.0042}\) & \(0.0168_{0.0004}\) \\
      & & \(\omega=1.25\), \(K_{\mathrm{proj}}=1\)
        & \(0.2140_{0.0016}\) & \(0.3348_{0.0062}\) & \(0.0322_{0.0006}\) \\
      & & \(\omega=1.5\), \(K_{\mathrm{proj}}=1\)
        & \(0.3133_{0.0024}\) & \(0.4939_{0.0080}\) & \(0.0497_{0.0007}\) \\

      & R-CFG++
        & \(\lambda=0.5\), \(\sigma=0\)
        & \(0.0192_{0.0013}\) & \(0.0252_{0.0011}\) & \(0.0024_{0.0002}\) \\
      & & \(\lambda=0.5\), \(\sigma=0.05\)
        & \(0.0193_{0.0013}\) & \(0.0252_{0.0011}\) & \(0.0024_{0.0002}\) \\
      & & \(\lambda=1\), \(\sigma=0\)
        & \(0.0500_{0.0008}\) & \(0.0652_{0.0011}\) & \(0.0057_{0.0002}\) \\
\cmidrule(lr){3-6}
      & PathGuide (Ours)
        & \(\mathrm{seed}_1\)
        & \boldmath{\(0.0076_{0.0019}\)} & \boldmath{\(0.0099_{0.0022}\)} & \boldmath{\(0.0012_{0.0003}\)} \\
      & & \(\mathrm{seed}_2\)
        & \(0.0077_{0.0018}\) & \(0.0100_{0.0021}\) & \(0.0012_{0.0003}\) \\
      & & \(\mathrm{seed}_3\)
        & \(0.0078_{0.0018}\) & \(0.0102_{0.0020}\) & \(0.0012_{0.0003}\) \\

      \midrule

      OT & Plain CFG
        & \(\omega=1\)
        & \(0.0082_{0.0020}\) & \(0.0111_{0.0027}\) & \(0.0012_{0.0003}\) \\
      & & \(\omega=1.25\)
        & \(0.0434_{0.0003}\) & \(0.0669_{0.0020}\) & \(0.0058_{0.0002}\) \\
      & & \(\omega=1.5\)
        & \(0.1238_{0.0010}\) & \(0.1935_{0.0045}\) & \(0.0175_{0.0004}\) \\

      & CFG-Zero\(^*\)
        & \(\omega=1\), \(K_{\mathrm{zero}}=1\)
        & \(0.0081_{0.0020}\) & \(0.0112_{0.0029}\) & \boldmath{\(0.0012_{0.0003}\)} \\
      & & \(\omega=1\), \(K_{\mathrm{zero}}=2\)
        & \(0.0082_{0.0021}\) & \(0.0115_{0.0030}\) & \(0.0012_{0.0003}\) \\
      & & \(\omega=1.25\), \(K_{\mathrm{zero}}=2\)
        & \(0.0105_{0.0009}\) & \(0.0169_{0.0016}\) & \(0.0018_{0.0003}\) \\

      & CFG-MP
        & \(\omega=1\), \(K_{\mathrm{proj}}=1\)
        & \(0.1215_{0.0009}\) & \(0.1889_{0.0043}\) & \(0.0171_{0.0004}\) \\
      & & \(\omega=1.25\), \(K_{\mathrm{proj}}=1\)
        & \(0.2155_{0.0018}\) & \(0.3387_{0.0064}\) & \(0.0325_{0.0006}\) \\
      & & \(\omega=1.5\), \(K_{\mathrm{proj}}=1\)
        & \(0.3137_{0.0026}\) & \(0.4972_{0.0082}\) & \(0.0498_{0.0007}\) \\

      & R-CFG++
        & \(\lambda=0.5\), \(\sigma=0\)
        & \(0.0199_{0.0012}\) & \(0.0266_{0.0011}\) & \(0.0024_{0.0002}\) \\
      & & \(\lambda=0.5\), \(\sigma=0.05\)
        & \(0.0200_{0.0012}\) & \(0.0266_{0.0011}\) & \(0.0025_{0.0002}\) \\
      & & \(\lambda=1\), \(\sigma=0\)
        & \(0.0515_{0.0007}\) & \(0.0676_{0.0012}\) & \(0.0060_{0.0002}\) \\
\cmidrule(lr){3-6}
      & PathGuide (Ours)
        & \(\mathrm{seed}_1\)
        & \boldmath{\(0.0076_{0.0017}\)} & \boldmath{\(0.0102_{0.0021}\)} & \(0.0012_{0.0003}\) \\
      & & \(\mathrm{seed}_2\)
        & \(0.0077_{0.0017}\) & \(0.0103_{0.0020}\) & \(0.0012_{0.0003}\) \\
      & & \(\mathrm{seed}_3\)
        & \(0.0078_{0.0017}\) & \(0.0105_{0.0020}\) & \(0.0012_{0.0003}\) \\

      \midrule

      VP & Plain CFG
        & \(\omega=1\)
        & \(0.0030_{0.0007}\) & \(0.0052_{0.0013}\) & \(0.0005_{0.0001}\) \\
      & & \(\omega=1.25\)
        & \(0.0254_{0.0009}\) & \(0.0372_{0.0027}\) & \(0.0027_{0.0002}\) \\
      & & \(\omega=1.5\)
        & \(0.0912_{0.0018}\) & \(0.1403_{0.0052}\) & \(0.0116_{0.0004}\) \\

      & CFG-Zero\(^*\)
        & \(\omega=1.5\), \(K_{\mathrm{zero}}=1\)
        & \(0.0013_{0.0003}\) & \(0.0019_{0.0005}\) & \(0.0003_{0.0000}\) \\
      & & \(\omega=1.5\), \(K_{\mathrm{zero}}=2\)
        & \(0.0014_{0.0003}\) & \(0.0020_{0.0005}\) & \(0.0003_{0.0000}\) \\
      & & \(\omega=1.25\), \(K_{\mathrm{zero}}=1\)
        & \(0.0019_{0.0004}\) & \(0.0031_{0.0008}\) & \(0.0003_{0.0000}\) \\

      & CFG-MP
        & \(\omega=1\), \(K_{\mathrm{proj}}=2\)
        & \(2.9991_{0.0226}\) & \(4.7929_{0.0130}\) & \(0.4827_{0.0008}\) \\
      & & \(\omega=1.25\), \(K_{\mathrm{proj}}=2\)
        & \(3.1150_{0.0695}\) & \(4.9108_{0.0261}\) & \(0.4866_{0.0008}\) \\
      & & \(\omega=1.5\), \(K_{\mathrm{proj}}=2\)
        & \(3.1485_{0.0620}\) & \(4.9986_{0.0235}\) & \(0.4905_{0.0008}\) \\

      & R-CFG++
        & \(\lambda=0.5\), \(\sigma=0\)
        & \(0.0290_{0.0007}\) & \(0.0388_{0.0023}\) & \(0.0029_{0.0002}\) \\
      & & \(\lambda=0.5\), \(\sigma=0.05\)
        & \(0.0291_{0.0007}\) & \(0.0389_{0.0023}\) & \(0.0029_{0.0002}\) \\
      & & \(\lambda=1\), \(\sigma=0\)
        & \(0.0931_{0.0013}\) & \(0.1318_{0.0045}\) & \(0.0110_{0.0004}\) \\
\cmidrule(lr){3-6}
      & PathGuide (Ours)
        & \(\mathrm{seed}_1\)
        & \boldmath{\(0.0011_{0.0004}\)} & \(0.0015_{0.0005}\) & \boldmath{\(0.0003_{0.0000}\)} \\
      & & \(\mathrm{seed}_2\)
        & \(0.0011_{0.0004}\) & \boldmath{\(0.0015_{0.0005}\)} & \(0.0003_{0.0000}\) \\
      & & \(\mathrm{seed}_3\)
        & \(0.0011_{0.0004}\) & \(0.0015_{0.0005}\) & \(0.0003_{0.0000}\) \\
      \bottomrule
    \end{tabular}
\end{table*}

\begin{table*}[t]
    \caption{
    Appendix sweep results for baseline configurations at \(T=500\).
    For each flow variant and method, we report up to three configurations from the hyperparameter sweep.
    Lower is better for all metrics.
    All entries are evaluated over 3 inference seeds using \(2^{14}\) generated samples per seed, and report mean \(\pm\) standard deviation.
    The mean is averaged over classes per evaluation seed; the standard deviation is the sample std over the 3 per-seed averages.
    }
    \label{tab:baseline-sweeps-t500}
    \vspace{0.5em}
    \centering
    \scriptsize
    \setlength{\tabcolsep}{3pt}
    \renewcommand{\arraystretch}{1.12}
    \begin{tabular}{@{}llp{0.32\textwidth}ccc@{}}
      \toprule
      Variant & Method & Configuration
        & KL\(\downarrow\)
        & \(W_2\downarrow\)
        & MMD\(\downarrow\) \\
      \midrule

      RF & Plain CFG
        & \(\omega=1\)
        & \(0.0044_{0.0012}\) & \(0.0060_{0.0016}\) & \(0.0006_{0.0001}\) \\
      & & \(\omega=1.25\)
        & \(0.0429_{0.0005}\) & \(0.0645_{0.0026}\) & \(0.0053_{0.0003}\) \\
      & & \(\omega=1.5\)
        & \(0.1277_{0.0016}\) & \(0.1939_{0.0052}\) & \(0.0172_{0.0005}\) \\

      & CFG-Zero\(^*\)
        & \(\omega=1\), \(K_{\mathrm{zero}}=1\)
        & \(0.0045_{0.0013}\) & \(0.0062_{0.0017}\) & \(0.0006_{0.0001}\) \\
      & & \(\omega=1\), \(K_{\mathrm{zero}}=2\)
        & \(0.0046_{0.0013}\) & \(0.0064_{0.0017}\) & \(0.0006_{0.0001}\) \\
      & & \(\omega=1.25\), \(K_{\mathrm{zero}}=2\)
        & \(0.0093_{0.0005}\) & \(0.0150_{0.0014}\) & \(0.0015_{0.0002}\) \\

      & CFG-MP
        & \(\omega=3\), \(K_{\mathrm{proj}}=2\)
        & \(1.8740_{0.0116}\) & \(2.9592_{0.0236}\) & \(0.2987_{0.0005}\) \\
      & & \(\omega=2\), \(K_{\mathrm{proj}}=2\)
        & \(1.5907_{0.0101}\) & \(2.5036_{0.0195}\) & \(0.2658_{0.0005}\) \\
      & & \(\omega=1.25\), \(K_{\mathrm{proj}}=2\)
        & \(1.3863_{0.0253}\) & \(2.1654_{0.0297}\) & \(0.2382_{0.0006}\) \\

      & R-CFG++
        & \(\lambda=0.5\), \(\sigma=0\), \(\gamma=1.0\)
        & \(0.0185_{0.0005}\) & \(0.0243_{0.0007}\) & \(0.0019_{0.0001}\) \\
      & & \(\lambda=0.5\), \(\sigma=0.05\), \(\gamma=1.0\)
        & \(0.0186_{0.0005}\) & \(0.0243_{0.0007}\) & \(0.0019_{0.0001}\) \\
      & & \(\lambda=1\), \(\sigma=0\), \(\gamma=1.0\)
        & \(0.0525_{0.0002}\) & \(0.0683_{0.0018}\) & \(0.0055_{0.0002}\) \\
\cmidrule(lr){3-6}
      & PathGuide (Ours)
        & \(\mathrm{seed}_1\)
        & \boldmath{\(0.0038_{0.0011}\)} & \boldmath{\(0.0055_{0.0013}\)} & \boldmath{\(0.0005_{0.0001}\)} \\
      & & \(\mathrm{seed}_2\)
        & \(0.0038_{0.0011}\) & \(0.0055_{0.0012}\) & \(0.0005_{0.0001}\) \\
      & & \(\mathrm{seed}_3\)
        & \(0.0039_{0.0011}\) & \(0.0056_{0.0011}\) & \(0.0005_{0.0001}\) \\

      \midrule

      I-CFM & Plain CFG
        & \(\omega=1\)
        & \(0.0083_{0.0021}\) & \(0.0111_{0.0028}\) & \(0.0012_{0.0003}\) \\
      & & \(\omega=1.25\)
        & \(0.0411_{0.0003}\) & \(0.0634_{0.0019}\) & \(0.0054_{0.0003}\) \\
      & & \(\omega=1.5\)
        & \(0.1200_{0.0009}\) & \(0.1877_{0.0043}\) & \(0.0169_{0.0004}\) \\

      & CFG-Zero\(^*\)
        & \(\omega=1\), \(K_{\mathrm{zero}}=1\)
        & \(0.0083_{0.0021}\) & \(0.0111_{0.0029}\) & \(0.0012_{0.0003}\) \\
      & & \(\omega=1\), \(K_{\mathrm{zero}}=2\)
        & \(0.0083_{0.0021}\) & \(0.0112_{0.0029}\) & \(0.0012_{0.0003}\) \\
      & & \(\omega=1.25\), \(K_{\mathrm{zero}}=2\)
        & \(0.0114_{0.0009}\) & \(0.0182_{0.0015}\) & \(0.0020_{0.0003}\) \\

      & CFG-MP
        & \(\omega=1\), \(K_{\mathrm{proj}}=1\)
        & \(0.1191_{0.0008}\) & \(0.1859_{0.0043}\) & \(0.0167_{0.0004}\) \\
      & & \(\omega=1.25\), \(K_{\mathrm{proj}}=1\)
        & \(0.2131_{0.0017}\) & \(0.3355_{0.0064}\) & \(0.0321_{0.0006}\) \\
      & & \(\omega=1.5\), \(K_{\mathrm{proj}}=1\)
        & \(0.3121_{0.0025}\) & \(0.4944_{0.0081}\) & \(0.0494_{0.0007}\) \\

      & R-CFG++
        & \(\lambda=0.5\), \(\sigma=0\), \(\gamma=1.0\)
        & \(0.0185_{0.0012}\) & \(0.0246_{0.0011}\) & \(0.0023_{0.0002}\) \\
      & & \(\lambda=0.5\), \(\sigma=0.05\), \(\gamma=1.0\)
        & \(0.0186_{0.0012}\) & \(0.0247_{0.0011}\) & \(0.0023_{0.0002}\) \\
      & & \(\lambda=1\), \(\sigma=0\), \(\gamma=1.0\)
        & \(0.0491_{0.0007}\) & \(0.0647_{0.0012}\) & \(0.0056_{0.0002}\) \\
\cmidrule(lr){3-6}
      & PathGuide (Ours)
        & \(\mathrm{seed}_1\)
        & \boldmath{\(0.0073_{0.0018}\)} & \boldmath{\(0.0096_{0.0022}\)} & \boldmath{\(0.0012_{0.0003}\)} \\
      & & \(\mathrm{seed}_2\)
        & \(0.0073_{0.0018}\) & \(0.0097_{0.0021}\) & \(0.0012_{0.0003}\) \\
      & & \(\mathrm{seed}_3\)
        & \(0.0075_{0.0018}\) & \(0.0098_{0.0020}\) & \(0.0012_{0.0003}\) \\

      \midrule

      OT & Plain CFG
        & \(\omega=1\)
        & \(0.0080_{0.0020}\) & \(0.0110_{0.0027}\) & \(0.0012_{0.0003}\) \\
      & & \(\omega=1.25\)
        & \(0.0422_{0.0003}\) & \(0.0657_{0.0021}\) & \(0.0056_{0.0003}\) \\
      & & \(\omega=1.5\)
        & \(0.1217_{0.0010}\) & \(0.1914_{0.0045}\) & \(0.0172_{0.0004}\) \\

      & CFG-Zero\(^*\)
        & \(\omega=1\), \(K_{\mathrm{zero}}=1\)
        & \(0.0080_{0.0020}\) & \(0.0110_{0.0028}\) & \(0.0011_{0.0003}\) \\
      & & \(\omega=1\), \(K_{\mathrm{zero}}=2\)
        & \(0.0080_{0.0020}\) & \(0.0111_{0.0028}\) & \(0.0012_{0.0003}\) \\
      & & \(\omega=1.25\), \(K_{\mathrm{zero}}=2\)
        & \(0.0120_{0.0009}\) & \(0.0195_{0.0017}\) & \(0.0021_{0.0003}\) \\

      & CFG-MP
        & \(\omega=1\), \(K_{\mathrm{proj}}=1\)
        & \(0.1208_{0.0010}\) & \(0.1895_{0.0045}\) & \(0.0170_{0.0004}\) \\
      & & \(\omega=1.25\), \(K_{\mathrm{proj}}=1\)
        & \(0.2146_{0.0019}\) & \(0.3394_{0.0065}\) & \(0.0324_{0.0006}\) \\
      & & \(\omega=1.5\), \(K_{\mathrm{proj}}=1\)
        & \(0.3125_{0.0027}\) & \(0.4977_{0.0083}\) & \(0.0496_{0.0007}\) \\

      & R-CFG++
        & \(\lambda=0.5\), \(\sigma=0\), \(\gamma=1.0\)
        & \(0.0192_{0.0011}\) & \(0.0260_{0.0011}\) & \(0.0024_{0.0002}\) \\
      & & \(\lambda=0.5\), \(\sigma=0.05\), \(\gamma=1.0\)
        & \(0.0192_{0.0011}\) & \(0.0260_{0.0011}\) & \(0.0024_{0.0002}\) \\
      & & \(\lambda=1\), \(\sigma=0\), \(\gamma=1.0\)
        & \(0.0505_{0.0006}\) & \(0.0670_{0.0013}\) & \(0.0059_{0.0002}\) \\
\cmidrule(lr){3-6}
      & PathGuide (Ours)
        & \(\mathrm{seed}_1\)
        & \boldmath{\(0.0073_{0.0017}\)} & \boldmath{\(0.0099_{0.0021}\)} & \boldmath{\(0.0011_{0.0003}\)} \\
      & & \(\mathrm{seed}_2\)
        & \(0.0073_{0.0017}\) & \(0.0099_{0.0020}\) & \(0.0011_{0.0003}\) \\
      & & \(\mathrm{seed}_3\)
        & \(0.0075_{0.0016}\) & \(0.0102_{0.0019}\) & \(0.0012_{0.0003}\) \\

      \midrule

      VP & Plain CFG
        & \(\omega=1\)
        & \(0.0030_{0.0007}\) & \(0.0052_{0.0013}\) & \(0.0005_{0.0001}\) \\
      & & \(\omega=1.25\)
        & \(0.0252_{0.0009}\) & \(0.0369_{0.0027}\) & \(0.0027_{0.0002}\) \\
      & & \(\omega=1.5\)
        & \(0.0905_{0.0018}\) & \(0.1393_{0.0052}\) & \(0.0115_{0.0004}\) \\

      & CFG-Zero\(^*\)
        & \(\omega=1.5\), \(K_{\mathrm{zero}}=1\)
        & \(0.0013_{0.0003}\) & \(0.0019_{0.0005}\) & \(0.0003_{0.0000}\) \\
      & & \(\omega=1.5\), \(K_{\mathrm{zero}}=2\)
        & \(0.0013_{0.0003}\) & \(0.0019_{0.0005}\) & \(0.0003_{0.0000}\) \\
      & & \(\omega=1.25\), \(K_{\mathrm{zero}}=1\)
        & \(0.0019_{0.0004}\) & \(0.0031_{0.0008}\) & \(0.0003_{0.0000}\) \\

      & CFG-MP
        & \(\omega=1\), \(K_{\mathrm{proj}}=2\)
        & \(1.1612_{0.0127}\) & \(1.6279_{0.0133}\) & \(0.2012_{0.0005}\) \\
      & & \(\omega=1.25\), \(K_{\mathrm{proj}}=2\)
        & \(1.2188_{0.0021}\) & \(1.7345_{0.0131}\) & \(0.2098_{0.0005}\) \\
      & & \(\omega=1.5\), \(K_{\mathrm{proj}}=2\)
        & \(1.2585_{0.0019}\) & \(1.8333_{0.0139}\) & \(0.2186_{0.0005}\) \\

      & R-CFG++
        & \(\lambda=0.5\), \(\sigma=0\), \(\gamma=1.0\)
        & \(0.0289_{0.0007}\) & \(0.0387_{0.0023}\) & \(0.0028_{0.0002}\) \\
      & & \(\lambda=0.5\), \(\sigma=0.05\), \(\gamma=1.0\)
        & \(0.0290_{0.0007}\) & \(0.0387_{0.0023}\) & \(0.0028_{0.0002}\) \\
      & & \(\lambda=1\), \(\sigma=0\), \(\gamma=1.0\)
        & \(0.0927_{0.0013}\) & \(0.1313_{0.0045}\) & \(0.0110_{0.0004}\) \\
\cmidrule(lr){3-6}
      & PathGuide (Ours)
        & \(\mathrm{seed}_1\)
        & \(0.0011_{0.0004}\) & \(0.0015_{0.0005}\) & \(0.0003_{0.0000}\) \\
      & & \(\mathrm{seed}_2\)
        & \boldmath{\(0.0011_{0.0004}\)} & \boldmath{\(0.0015_{0.0005}\)} & \boldmath{\(0.0003_{0.0000}\)} \\
      & & \(\mathrm{seed}_3\)
        & \(0.0011_{0.0004}\) & \(0.0015_{0.0005}\) & \(0.0003_{0.0000}\) \\
      \bottomrule
    \end{tabular}
\end{table*}

\begin{table*}[t]
    \caption{
    Full MNIST image-generation sweep results for RF and OT flow variants at \(T=50\).
    FID is computed with \texttt{torch-fidelity==0.4.0}; lower is better.
    All entries are evaluated over three evaluation seeds using \(2^{13}\)
    generated samples per seed, with \(M=2^{11}\) and \(N=2^{12}\).
    Entries report mean \(\pm\) sample standard deviation over three evaluation seeds.
    }
    \label{tab:mnist-baseline-sweeps-t50}
    \vspace{0.5em}
    \centering
    \scriptsize
    \setlength{\tabcolsep}{4pt}
    \renewcommand{\arraystretch}{1.10}
    \begin{tabular}{@{}llp{0.50\textwidth}c@{}}
        \toprule
        Variant & Method & Configuration & FID\(\downarrow\) \\
        \midrule

        RF & Plain CFG
        & \(\omega=1.0\)
        & \(15.802 \pm 0.464\) \\
        & & \(\omega=1.25\)
        & \(16.342 \pm 0.506\) \\
        & & \(\omega=1.5\)
        & \(18.567 \pm 0.640\) \\
        & & \(\omega=2.0\)
        & \(26.028 \pm 0.718\) \\

        & CFG-Zero\(^\ast\)
        & \(\omega=1.0\), \(K_{\mathrm{zero}}=1\)
        & \(29.627 \pm 0.494\) \\
        & & \(\omega=1.25\), \(K_{\mathrm{zero}}=1\)
        & \(26.805 \pm 0.610\) \\
        & & \(\omega=1.5\), \(K_{\mathrm{zero}}=1\)
        & \(26.534 \pm 0.747\) \\
        & & \(\omega=2.0\), \(K_{\mathrm{zero}}=1\)
        & \(30.908 \pm 0.844\) \\

        & CFG-MP
        & \(\omega=1.0\), \(K_{\mathrm{proj}}=3\)
        & \(30.951 \pm 0.999\) \\
        & & \(\omega=1.25\), \(K_{\mathrm{proj}}=3\)
        & \(37.156 \pm 0.869\) \\
        & & \(\omega=1.5\), \(K_{\mathrm{proj}}=3\)
        & \(43.712 \pm 0.621\) \\
        & & \(\omega=2.0\), \(K_{\mathrm{proj}}=3\)
        & \(57.352 \pm 0.725\) \\

        & R-CFG++
        & \(\lambda_{\max}=1.0\), \(\gamma=1.0\), \(\sigma=0.0\)
        & \(16.979 \pm 0.473\) \\
\cmidrule(lr){3-4}
        & PathGuide (Ours)
        & \(\mathrm{seed}_1\)
        & \(\mathbf{15.699 \pm 0.437}\) \\
        & & \(\mathrm{seed}_2\)
        & \(15.737 \pm 0.446\) \\
        & & \(\mathrm{seed}_3\)
        & \(15.720 \pm 0.430\) \\

        \midrule

        OT & Plain CFG
        & \(\omega=1.0\)
        & \(7.578 \pm 0.263\) \\
        & & \(\omega=1.25\)
        & \(8.262 \pm 0.254\) \\
        & & \(\omega=1.5\)
        & \(10.596 \pm 0.215\) \\
        & & \(\omega=2.0\)
        & \(18.230 \pm 0.059\) \\

        & CFG-Zero\(^\ast\)
        & \(\omega=1.0\), \(K_{\mathrm{zero}}=1\)
        & \(11.918 \pm 0.298\) \\
        & & \(\omega=1.25\), \(K_{\mathrm{zero}}=1\)
        & \(11.327 \pm 0.261\) \\
        & & \(\omega=1.5\), \(K_{\mathrm{zero}}=1\)
        & \(12.666 \pm 0.303\) \\
        & & \(\omega=2.0\), \(K_{\mathrm{zero}}=1\)
        & \(18.698 \pm 0.032\) \\

        & CFG-MP
        & \(\omega=1.0\), \(K_{\mathrm{proj}}=3\)
        & \(24.475 \pm 0.128\) \\
        & & \(\omega=1.25\), \(K_{\mathrm{proj}}=3\)
        & \(28.995 \pm 0.130\) \\
        & & \(\omega=1.5\), \(K_{\mathrm{proj}}=3\)
        & \(33.313 \pm 0.109\) \\
        & & \(\omega=2.0\), \(K_{\mathrm{proj}}=3\)
        & \(43.200 \pm 0.302\) \\

        & R-CFG++
        & \(\lambda_{\max}=1.0\), \(\gamma=1.0\), \(\sigma=0.0\)
        & \(15.038 \pm 0.356\) \\
\cmidrule(lr){3-4}
        & PathGuide (Ours)
        & \(\mathrm{seed}_1\)
        & \(7.403 \pm 0.281\) \\
        & & \(\mathrm{seed}_2\)
        & \(7.384 \pm 0.282\) \\
        & & \(\mathrm{seed}_3\)
        & \(\mathbf{7.375 \pm 0.285}\) \\

        \bottomrule
    \end{tabular}
\end{table*}

\begin{table*}[t]
    \caption{
    Endpoint path discrepancy at \(t_i \approx 1\) for \(T=20\), our method is evaluated online where \(M=N=2^{11}\) against the plain conditional learned field under flow type is VP.
    We report mean \(\pm\) sample standard deviation. Lower is better for all metrics.
    }
    \vspace{0.5em}
    \label{tab:online-path-discrepancy-full}
    \centering
    \small
    \setlength{\tabcolsep}{12pt}
    \renewcommand{\arraystretch}{1.08}
    \begin{tabular}{lccc}
        \toprule
        Method
        & \(\mathrm{KL}\downarrow\)
        & \(\mathrm{MMD}\text{-}\mathrm{RBF}^2\downarrow\)
        & \(\mathrm{W}_2\downarrow\) \\
        \midrule
        \(\omega\equiv 1\)
        & \(0.01261 \pm 0.00184\)
        & \(0.00271 \pm 0.00090\)
        & \(0.01755 \pm 0.00420\) \\
        Practical schedule
        & \(0.01232 \pm 0.00216\)
        & \(0.00260 \pm 0.00073\)
        & \(0.01569 \pm 0.00340\) \\
        Exact field \(u\)
        & \(0.01147 \pm 0.00127\)
        & \(0.00258 \pm 0.00059\)
        & \(0.01516 \pm 0.00309\) \\
        \bottomrule
    \end{tabular}
\end{table*}

\section*{NeurIPS Paper Checklist}

\begin{enumerate}

\item {\bf Claims}
    \item[] Question: Do the main claims made in the abstract and introduction accurately reflect the paper's contributions and scope?
    \item[] Answer: \answerYes{}
    \item[] Justification: The abstract and introduction accurately state the paper's scope and contributions.
    \item[] Guidelines:
    \begin{itemize}
        \item The answer \answerNA{} means that the abstract and introduction do not include the claims made in the paper.
        \item The abstract and/or introduction should clearly state the claims made, including the contributions made in the paper and important assumptions and limitations. A \answerNo{} or \answerNA{} answer to this question will not be perceived well by the reviewers. 
        \item The claims made should match theoretical and experimental results, and reflect how much the results can be expected to generalize to other settings. 
        \item It is fine to include aspirational goals as motivation as long as it is clear that these goals are not attained by the paper. 
    \end{itemize}

\item {\bf Limitations}
    \item[] Question: Does the paper discuss the limitations of the work performed by the authors?
    \item[] Answer: \answerYes{}
    \item[] Justification: Section~\ref{sec:conclusions} discusses the main limitations of the method and experiments.
    \item[] Guidelines:
    \begin{itemize}
        \item The answer \answerNA{} means that the paper has no limitation while the answer \answerNo{} means that the paper has limitations, but those are not discussed in the paper. 
        \item The authors are encouraged to create a separate ``Limitations'' section in their paper.
        \item The paper should point out any strong assumptions and how robust the results are to violations of these assumptions (e.g., independence assumptions, noiseless settings, model well-specification, asymptotic approximations only holding locally). The authors should reflect on how these assumptions might be violated in practice and what the implications would be.
        \item The authors should reflect on the scope of the claims made, e.g., if the approach was only tested on a few datasets or with a few runs. In general, empirical results often depend on implicit assumptions, which should be articulated.
        \item The authors should reflect on the factors that influence the performance of the approach. For example, a facial recognition algorithm may perform poorly when image resolution is low or images are taken in low lighting. Or a speech-to-text system might not be used reliably to provide closed captions for online lectures because it fails to handle technical jargon.
        \item The authors should discuss the computational efficiency of the proposed algorithms and how they scale with dataset size.
        \item If applicable, the authors should discuss possible limitations of their approach to address problems of privacy and fairness.
        \item While the authors might fear that complete honesty about limitations might be used by reviewers as grounds for rejection, a worse outcome might be that reviewers discover limitations that aren't acknowledged in the paper. The authors should use their best judgment and recognize that individual actions in favor of transparency play an important role in developing norms that preserve the integrity of the community. Reviewers will be specifically instructed to not penalize honesty concerning limitations.
    \end{itemize}

\item {\bf Theory assumptions and proofs}
    \item[] Question: For each theoretical result, does the paper provide the full set of assumptions and a complete (and correct) proof?
    \item[] Answer: \answerYes{}
    \item[] Justification: The theoretical claims has a proof sketch, complete proofs are given in Appendix~\ref{app:proofs}, assumptions stated in the relevant statements and collected in Appendix~\ref{app:assumptions}.
    \item[] Guidelines:
    \begin{itemize}
        \item The answer \answerNA{} means that the paper does not include theoretical results. 
        \item All the theorems, formulas, and proofs in the paper should be numbered and cross-referenced.
        \item All assumptions should be clearly stated or referenced in the statement of any theorems.
        \item The proofs can either appear in the main paper or the supplemental material, but if they appear in the supplemental material, the authors are encouraged to provide a short proof sketch to provide intuition. 
        \item Inversely, any informal proof provided in the core of the paper should be complemented by formal proofs provided in appendix or supplemental material.
        \item Theorems and Lemmas that the proof relies upon should be properly referenced. 
    \end{itemize}

    \item {\bf Experimental result reproducibility}
    \item[] Question: Does the paper fully disclose all the information needed to reproduce the main experimental results of the paper to the extent that it affects the main claims and/or conclusions of the paper (regardless of whether the code and data are provided or not)?
    \item[] Answer: \answerYes{}
    \item[] Justification: Appendix~\ref{app:exp-details} provides the experimental protocol and details needed to reproduce the main experimental results.
    \item[] Guidelines:
    \begin{itemize}
        \item The answer \answerNA{} means that the paper does not include experiments.
        \item If the paper includes experiments, a \answerNo{} answer to this question will not be perceived well by the reviewers: Making the paper reproducible is important, regardless of whether the code and data are provided or not.
        \item If the contribution is a dataset and\slash or model, the authors should describe the steps taken to make their results reproducible or verifiable. 
        \item Depending on the contribution, reproducibility can be accomplished in various ways. For example, if the contribution is a novel architecture, describing the architecture fully might suffice, or if the contribution is a specific model and empirical evaluation, it may be necessary to either make it possible for others to replicate the model with the same dataset, or provide access to the model. In general. releasing code and data is often one good way to accomplish this, but reproducibility can also be provided via detailed instructions for how to replicate the results, access to a hosted model (e.g., in the case of a large language model), releasing of a model checkpoint, or other means that are appropriate to the research performed.
        \item While NeurIPS does not require releasing code, the conference does require all submissions to provide some reasonable avenue for reproducibility, which may depend on the nature of the contribution. For example
        \begin{enumerate}
            \item If the contribution is primarily a new algorithm, the paper should make it clear how to reproduce that algorithm.
            \item If the contribution is primarily a new model architecture, the paper should describe the architecture clearly and fully.
            \item If the contribution is a new model (e.g., a large language model), then there should either be a way to access this model for reproducing the results or a way to reproduce the model (e.g., with an open-source dataset or instructions for how to construct the dataset).
            \item We recognize that reproducibility may be tricky in some cases, in which case authors are welcome to describe the particular way they provide for reproducibility. In the case of closed-source models, it may be that access to the model is limited in some way (e.g., to registered users), but it should be possible for other researchers to have some path to reproducing or verifying the results.
        \end{enumerate}
    \end{itemize}

\item {\bf Open access to data and code}
    \item[] Question: Does the paper provide open access to the data and code, with sufficient instructions to faithfully reproduce the main experimental results, as described in supplemental material?
    \item[] Answer: \answerYes{}
    \item[] Justification: The supplementary material includes code, configurations, checkpoints, and reproduction commands for the main experiments.
    \item[] Guidelines:
    \begin{itemize}
        \item The answer \answerNA{} means that paper does not include experiments requiring code.
        \item Please see the NeurIPS code and data submission guidelines (\url{https://neurips.cc/public/guides/CodeSubmissionPolicy}) for more details.
        \item While we encourage the release of code and data, we understand that this might not be possible, so \answerNo{} is an acceptable answer. Papers cannot be rejected simply for not including code, unless this is central to the contribution (e.g., for a new open-source benchmark).
        \item The instructions should contain the exact command and environment needed to run to reproduce the results. See the NeurIPS code and data submission guidelines (\url{https://neurips.cc/public/guides/CodeSubmissionPolicy}) for more details.
        \item The authors should provide instructions on data access and preparation, including how to access the raw data, preprocessed data, intermediate data, and generated data, etc.
        \item The authors should provide scripts to reproduce all experimental results for the new proposed method and baselines. If only a subset of experiments are reproducible, they should state which ones are omitted from the script and why.
        \item At submission time, to preserve anonymity, the authors should release anonymized versions (if applicable).
        \item Providing as much information as possible in supplemental material (appended to the paper) is recommended, but including URLs to data and code is permitted.
    \end{itemize}

\item {\bf Experimental setting/details}
    \item[] Question: Does the paper specify all the training and test details (e.g., data splits, hyperparameters, how they were chosen, type of optimizer) necessary to understand the results?
    \item[] Answer: \answerYes{}
    \item[] Justification: Appendix~\ref{app:exp-details} specifies all details of experiment, including training, evaluation, hyperparameters, and how they where chosen.
    \item[] Guidelines:
    \begin{itemize}
        \item The answer \answerNA{} means that the paper does not include experiments.
        \item The experimental setting should be presented in the core of the paper to a level of detail that is necessary to appreciate the results and make sense of them.
        \item The full details can be provided either with the code, in appendix, or as supplemental material.
    \end{itemize}

\item {\bf Experiment statistical significance}
    \item[] Question: Does the paper report error bars suitably and correctly defined or other appropriate information about the statistical significance of the experiments?
    \item[] Answer: \answerYes{}
    \item[] Justification: The reported tables and curves include mean and sample-standard-deviation results over matched inference seeds, full details in Appendix~\ref{app:exp-details}.
    \item[] Guidelines:
    \begin{itemize}
        \item The answer \answerNA{} means that the paper does not include experiments.
        \item The authors should answer \answerYes{} if the results are accompanied by error bars, confidence intervals, or statistical significance tests, at least for the experiments that support the main claims of the paper.
        \item The factors of variability that the error bars are capturing should be clearly stated (for example, train/test split, initialization, random drawing of some parameter, or overall run with given experimental conditions).
        \item The method for calculating the error bars should be explained (closed form formula, call to a library function, bootstrap, etc.)
        \item The assumptions made should be given (e.g., Normally distributed errors).
        \item It should be clear whether the error bar is the standard deviation or the standard error of the mean.
        \item It is OK to report 1-sigma error bars, but one should state it. The authors should preferably report a 2-sigma error bar than state that they have a 96\% CI, if the hypothesis of Normality of errors is not verified.
        \item For asymmetric distributions, the authors should be careful not to show in tables or figures symmetric error bars that would yield results that are out of range (e.g., negative error rates).
        \item If error bars are reported in tables or plots, the authors should explain in the text how they were calculated and reference the corresponding figures or tables in the text.
    \end{itemize}

\item {\bf Experiments compute resources}
    \item[] Question: For each experiment, does the paper provide sufficient information on the computer resources (type of compute workers, memory, time of execution) needed to reproduce the experiments?
    \item[] Answer: \answerYes{}
    \item[] Justification: Appendix~\ref{app:computational-cost} reports the hardware, software environment, wall-clock times, and total compute.
    \item[] Guidelines:
    \begin{itemize}
        \item The answer \answerNA{} means that the paper does not include experiments.
        \item The paper should indicate the type of compute workers CPU or GPU, internal cluster, or cloud provider, including relevant memory and storage.
        \item The paper should provide the amount of compute required for each of the individual experimental runs as well as estimate the total compute. 
        \item The paper should disclose whether the full research project required more compute than the experiments reported in the paper (e.g., preliminary or failed experiments that didn't make it into the paper). 
    \end{itemize}
    
\item {\bf Code of ethics}
    \item[] Question: Does the research conducted in the paper conform, in every respect, with the NeurIPS Code of Ethics \url{https://neurips.cc/public/EthicsGuidelines}?
    \item[] Answer: \answerYes{}
    \item[] Justification: The research conforms to the NeurIPS Code of Ethics.
    \item[] Guidelines:
    \begin{itemize}
        \item The answer \answerNA{} means that the authors have not reviewed the NeurIPS Code of Ethics.
        \item If the authors answer \answerNo, they should explain the special circumstances that require a deviation from the Code of Ethics.
        \item The authors should make sure to preserve anonymity (e.g., if there is a special consideration due to laws or regulations in their jurisdiction).
    \end{itemize}

\item {\bf Broader impacts}
    \item[] Question: Does the paper discuss both potential positive societal impacts and negative societal impacts of the work performed?
    \item[] Answer: \answerNA{}
    \item[] Justification: The work is a methodological study of guidance selection and does not introduce new generative capabilities beyond the underlying models.
    \item[] Guidelines:
    \begin{itemize}
        \item The answer \answerNA{} means that there is no societal impact of the work performed.
        \item If the authors answer \answerNA{} or \answerNo, they should explain why their work has no societal impact or why the paper does not address societal impact.
        \item Examples of negative societal impacts include potential malicious or unintended uses (e.g., disinformation, generating fake profiles, surveillance), fairness considerations (e.g., deployment of technologies that could make decisions that unfairly impact specific groups), privacy considerations, and security considerations.
        \item The conference expects that many papers will be foundational research and not tied to particular applications, let alone deployments. However, if there is a direct path to any negative applications, the authors should point it out. For example, it is legitimate to point out that an improvement in the quality of generative models could be used to generate Deepfakes for disinformation. On the other hand, it is not needed to point out that a generic algorithm for optimizing neural networks could enable people to train models that generate Deepfakes faster.
        \item The authors should consider possible harms that could arise when the technology is being used as intended and functioning correctly, harms that could arise when the technology is being used as intended but gives incorrect results, and harms following from (intentional or unintentional) misuse of the technology.
        \item If there are negative societal impacts, the authors could also discuss possible mitigation strategies (e.g., gated release of models, providing defenses in addition to attacks, mechanisms for monitoring misuse, mechanisms to monitor how a system learns from feedback over time, improving the efficiency and accessibility of ML).
    \end{itemize}
    
\item {\bf Safeguards}
    \item[] Question: Does the paper describe safeguards that have been put in place for responsible release of data or models that have a high risk for misuse (e.g., pre-trained language models, image generators, or scraped datasets)?
    \item[] Answer: \answerNA{}
    \item[] Justification: The release does not include high-risk pretrained models or scraped datasets.
    \item[] Guidelines:
    \begin{itemize}
        \item The answer \answerNA{} means that the paper poses no such risks.
        \item Released models that have a high risk for misuse or dual-use should be released with necessary safeguards to allow for controlled use of the model, for example by requiring that users adhere to usage guidelines or restrictions to access the model or implementing safety filters. 
        \item Datasets that have been scraped from the Internet could pose safety risks. The authors should describe how they avoided releasing unsafe images.
        \item We recognize that providing effective safeguards is challenging, and many papers do not require this, but we encourage authors to take this into account and make a best faith effort.
    \end{itemize}

\item {\bf Licenses for existing assets}
    \item[] Question: Are the creators or original owners of assets (e.g., code, data, models), used in the paper, properly credited and are the license and terms of use explicitly mentioned and properly respected?
    \item[] Answer: \answerYes{}
    \item[] Justification: Existing code and data assets are credited, and their licenses and terms are summarized in Appendix~\ref{app:licenses}.
    \item[] Guidelines:
    \begin{itemize}
        \item The answer \answerNA{} means that the paper does not use existing assets.
        \item The authors should cite the original paper that produced the code package or dataset.
        \item The authors should state which version of the asset is used and, if possible, include a URL.
        \item The name of the license (e.g., CC-BY 4.0) should be included for each asset.
        \item For scraped data from a particular source (e.g., website), the copyright and terms of service of that source should be provided.
        \item If assets are released, the license, copyright information, and terms of use in the package should be provided. For popular datasets, \url{paperswithcode.com/datasets} has curated licenses for some datasets. Their licensing guide can help determine the license of a dataset.
        \item For existing datasets that are re-packaged, both the original license and the license of the derived asset (if it has changed) should be provided.
        \item If this information is not available online, the authors are encouraged to reach out to the asset's creators.
    \end{itemize}

\item {\bf New assets}
    \item[] Question: Are new assets introduced in the paper well documented and is the documentation provided alongside the assets?
    \item[] Answer: \answerYes{}
    \item[] Justification: The code and checkpoints are documented with a README, training commands, and license information.
    \item[] Guidelines:
    \begin{itemize}
        \item The answer \answerNA{} means that the paper does not release new assets.
        \item Researchers should communicate the details of the dataset\slash code\slash model as part of their submissions via structured templates. This includes details about training, license, limitations, etc. 
        \item The paper should discuss whether and how consent was obtained from people whose asset is used.
        \item At submission time, remember to anonymize your assets (if applicable). You can either create an anonymized URL or include an anonymized zip file.
    \end{itemize}

\item {\bf Crowdsourcing and research with human subjects}
    \item[] Question: For crowdsourcing experiments and research with human subjects, does the paper include the full text of instructions given to participants and screenshots, if applicable, as well as details about compensation (if any)? 
    \item[] Answer: \answerNA{}
    \item[] Justification: This work does not involve crowdsourcing or human subjects.
    \item[] Guidelines:
    \begin{itemize}
        \item The answer \answerNA{} means that the paper does not involve crowdsourcing nor research with human subjects.
        \item Including this information in the supplemental material is fine, but if the main contribution of the paper involves human subjects, then as much detail as possible should be included in the main paper. 
        \item According to the NeurIPS Code of Ethics, workers involved in data collection, curation, or other labor should be paid at least the minimum wage in the country of the data collector. 
    \end{itemize}

\item {\bf Institutional review board (IRB) approvals or equivalent for research with human subjects}
    \item[] Question: Does the paper describe potential risks incurred by study participants, whether such risks were disclosed to the subjects, and whether Institutional Review Board (IRB) approvals (or an equivalent approval/review based on the requirements of your country or institution) were obtained?
    \item[] Answer: \answerNA{}
    \item[] Justification: This work does not involve human subjects.
    \item[] Guidelines:
    \begin{itemize}
        \item The answer \answerNA{} means that the paper does not involve crowdsourcing nor research with human subjects.
        \item Depending on the country in which research is conducted, IRB approval (or equivalent) may be required for any human subjects research. If you obtained IRB approval, you should clearly state this in the paper. 
        \item We recognize that the procedures for this may vary significantly between institutions and locations, and we expect authors to adhere to the NeurIPS Code of Ethics and the guidelines for their institution. 
        \item For initial submissions, do not include any information that would break anonymity (if applicable), such as the institution conducting the review.
    \end{itemize}

\item {\bf Declaration of LLM usage}
    \item[] Question: Does the paper describe the usage of LLMs if it is an important, original, or non-standard component of the core methods in this research? Note that if the LLM is used only for writing, editing, or formatting purposes and does \emph{not} impact the core methodology, scientific rigor, or originality of the research, declaration is not required.
    \item[] Answer: \answerNA{}
    \item[] Justification: LLMs were not used as a component of the core research methodology.
    \item[] Guidelines:
    \begin{itemize}
        \item The answer \answerNA{} means that the core method development in this research does not involve LLMs as any important, original, or non-standard components.
        \item Please refer to our LLM policy in the NeurIPS handbook for what should or should not be described.
    \end{itemize}

\end{enumerate}

\end{document}